\newcites{supp}{References}
\DeclareRobustCommand\onedot{\futurelet\@let@token\@onedot}
\def\@onedot{\ifx\@let@token.\else.\null\fi\xspace}
\def\iid{{i.i.d}\onedot}
\def\eg{{e.g}\onedot} 
\def\ie{{i.e}\onedot}
\newcommand*{\nolink}[1]{%
  {\protect\NoHyper#1\protect\endNoHyper}%
}
\newtheorem{theorem}{Theorem}
\newtheorem{lemma}{Lemma}
\theoremstyle{definition}
\newtheorem{definition}{Definition}
\DeclareMathOperator{\disp}{\operatorname{disp}}
\DeclareMathOperator{\disc}{\operatorname{disc}}
\DeclareMathOperator{\disb}{\operatorname{disb}}
\DeclareMathOperator{\TV}{\operatorname{TV}}
\newcommand{\A}{\mathcal{A}}
\newcommand{\E}{\mathbb{E}}
\renewcommand{\H}{\mathcal{H}}
\newcommand{\R}{\mathbb{R}}
\newcommand{\X}{\mathcal{X}}
\newcommand{\Y}{\mathcal{Y}}
\newcommand{\bbmone}{\mathbbm{1}}
\newcommand{\unfairS}{\Gamma\!_S}
\newcommand{\unfair}[1]{\Gamma\!_{#1}}
\newcommand{\methodname}{FLEA}
\newcommand{\method}{\methodname\xspace} 
\newcommand{\filtername}{\textsc{FilterSources}}
\newcommand{\filter}{\filtername\xspace}
\newcommand{\compas}{\texttt{COMPAS}\xspace}
\newcommand{\drugs}{\texttt{drugs}\xspace}
\newcommand{\adult}{\texttt{adult}\xspace}
\newcommand{\german}{\texttt{germancredit\xspace}}
\newcommand{\folktables}{\texttt{folktables}\xspace}
\newcommand{\halfquad}{\hspace{0.5em}}
\renewcommand{\thefootnote}{\arabic{footnote}} 
\newcommand{\myparagraph}[1]{\noindent\textbf{#1}\quad}
\newcommand{\jenc}{n\tau}
\newcommand{\jencwithoutn}{\tau}
\newcommand{\jenepsilon}{9\eta n + 7n\Delta\left(\frac{\delta}{6N}\right)}
\newcommand{\jenepsilonwithoutn}{9\eta + 7\Delta\left(\frac{\delta}{6N}\right)}
\newcommand{\jene}{\frac{\jenepsilonwithoutn}{\jencwithoutn-\left(\jenepsilonwithoutn\right)}}
\newcommand{\jenf}{\frac{\jenepsilonwithoutn}{1 - \jencwithoutn-\left(\jenepsilonwithoutn\right)}}
\title{\method: Provably Robust Fair Multisource \\Learning from Unreliable Training Data}
\author{%
  Eugenia Iofinova\footnote{}\\
  IST Austria %
  \And
  Nikola Konstantinov\footnotemark[2] \footnotemark[1]\\
  ETH AI Center and \\ETH Department of Computer Science \\
  \And
  Christoph H. Lampert\\
  IST Austria
}
\DeclareMathOperator{\RISK}{\mathcal{R}}%
\DeclareMathOperator{\HYPS}{\mathcal{H}}%
\newcommand{\prodspace}{\mathcal{X}\times A \times \mathcal{Y}}
\newcommand{\Bin}{\operatorname{Bin}}
\newcommand{\VC}{\operatorname{VC}}
\begin{document}

\maketitle

\begin{abstract} Fairness-aware learning aims at constructing 
classifiers that not only make accurate predictions, but also do not 
discriminate against specific groups. It is a fast-growing 
area of machine learning with far-reaching societal impact. 
However, existing fair learning methods are vulnerable to 
accidental or malicious artifacts in the training data, 
which can cause them to unknowingly produce unfair classifiers. 
In this work we address the problem of fair learning from 
unreliable training data in the robust multisource setting, where 
the available training data comes from multiple sources, a fraction of 
which might not be representative of the true data distribution. We 
introduce FLEA, a filtering-based algorithm that identifies and 
suppresses those data sources that would have a negative impact 
on fairness or accuracy if they were used for training. 
As such, FLEA is not a replacement of prior fairness-aware learning 
methods but rather an augmentation that makes any of them robust 
against unreliable training data.
We show the effectiveness of our approach by a diverse range of 
experiments on multiple datasets. 
Additionally, we prove formally that --given enough data-- \method 
protects the learner against corruptions as long as 
the fraction of affected data sources is less than half.
Our source code and documentation are available at \href{https://github.com/ISTAustria-CVML/FLEA}{\url{https://github.com/ISTAustria-CVML/FLEA}}.
\end{abstract}

\renewcommand{\thefootnote}{\fnsymbol{footnote}} 
\footnotetext[1]{Equal contribution}
\footnotetext[2]{Work performed partially while at IST Austria}
\setcounter{footnote}{0}
\renewcommand{\thefootnote}{\arabic{footnote}} 

\section{Introduction}
Machine learning systems have started to permeate many aspects 
of our everyday life, such as finance (\eg credit scoring), 
employment (\eg judging job applications) or even judiciary 
(\eg recidivism prediction).
In the wake of this trend, other aspects besides 
prediction accuracy become important to consider. 
One crucial aspect is \emph{(group) fairness}, which 
aims at preventing learned classifiers from acting in 
a discriminatory way.
To achieve this goal, fairness-aware learning methods adjust the classifier parameters in order to fulfill an appropriate measure of fairness. 
This strategy is highly successful, but only 
under idealized conditions of clean \iid-sampled data.
Unfortunately, \emph{fairness-aware learning methods 
are not robust against unintentional errors or 
intentional manipulations of the training data.} %

In this work, we address this problem in a setting where the training data is not one monolithic 
block, but rather a centralized collection of data obtained from multiple 
sources. This is, in fact, a common scenario. For instance, 
organizations that specialize in large-scale data mining, such 
as large hospital chains or political analysis firms, may
receive data that is collected separately from multiple physical
locations or data vendors. In such cases, it may be that not all of these data sources are completely
trustworthy, and so robustness concerns arise.

In order to achieve robustness to unreliable data in such contexts, we propose a new algorithm, \method (\underline{F}air 
\underline{LE}arning against \underline{A}dversaries). \method adds a filtering step on top of any 
standard fairness-aware learning algorithm and effectively identifies and suppresses data sources that could have 
a negative impact on the classifier fairness or accuracy. %
Thereby, \method acts as a procedure that guarantees robustness in the context of fair learning.

To accomplish this, we introduce a new dissimilarity measure, 
\emph{disparity}, that measures the maximum achievable difference 
in classifier fairness between two data sources. We use this measure as a filtering criterion, since it has the property of flagging changes in the data distribution that can be potentially harmful for the end-classifier fairness. We combine 
this with the existing \emph{discrepancy} measure, which %
plays an analogous role for the classifier accuracy, 
and the \emph{disbalance}, which measures changes to 
the group composition of the training data. 
We show both empirically and theoretically
that a combination of these three measures provides 
a sufficient criterion for detecting harmful data, 
as long as the fraction of harmful sources is
less than half.\footnote{The case where half or more sources are harmful is impossible to solve in general, see \eg~\citet{charikar2017learning}} 

While previous method for robust fairness-aware learning were
only able to protect against specific data issues, such 
as random label flips, \method ensures that even 
a worst-case adversary is unable to negatively affect 
the training process: either the changes to the data 
are minor and will not hurt learning, or they
are large enough so that the affected data sources are identified and removed.
Our theoretical analysis provides finite sample guarantees and certifies the ability of
\method to learn classifiers with optimal fairness and 
accuracy in the infinite sample size limit.
Our extensive experimental evaluation demonstrates 
\method's practical usefulness in suppressing the effect of corrupted data when learning fair models, even in cases where previous robust methods fail.

\begin{figure}\centering
\includegraphics[width=\textwidth]{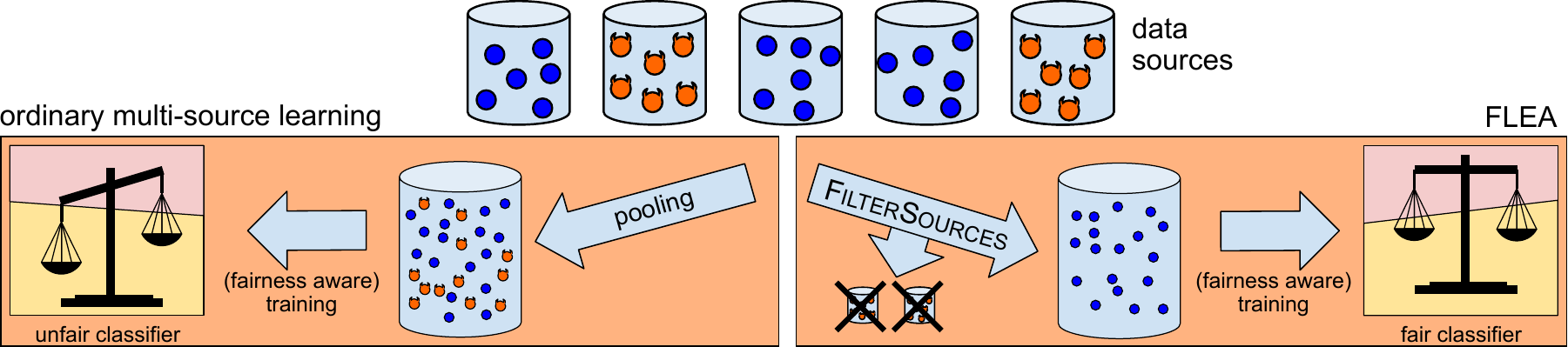}
\caption{Illustration of robust multisource learning with FLEA: (top) We are given multiple sources, some of which might contain noisy or manipulated data. 
(left) Ordinary multisource learning pools the data from all sources, which can cause the resulting classifier to be inaccurate and/or unfair, even
if fairness-aware training is employed. (right) FLEA filters the data before pooling, thereby suppressing likely corrupted sources. This allows fairness-aware training to succeed.}
\label{fig:FLEA-illustration}
\end{figure}

\section{Preliminaries and related work}

\subsection{Fair classification} 
Throughout this work, we adopt a standard classification 
setting in which the task is to predict a binary label 
$y\in\Y=\{0,1\}$ for any $x\in\X$. 
For a fixed data distribution $p(x,y)\in \mathcal{P}(\X\times\Y)$, 
the classic goal of learning is to find a prediction function 
$f:\X\to\Y$ with high accuracy, \ie small \emph{risk}, 
$\mathcal{R}_p(f)=\E_{p(x,y)}\bbmone\{ y\neq f(x)\}$,
where $\bbmone\{P\}=1$ if a predicate $P$ is true
and $\bbmone\{P\}=0$ otherwise. 

With the recent trend to consider not only the accuracy but 
also the \emph{fairness} of a classifier, a number of statistical 
measures have been proposed to formalize this notion. 
In this work, we focus on the most common and simplest one, 
\emph{demographic parity (DP)}~\citep{calders2009parity}. 
It postulates that the probability of a positive classifier 
decision should be equal for all subgroups of the population. 
Formally, we assume that each example $(x,y)$ also possesses a 
\emph{protected attribute}, $a\in\A$, which indicates 
its membership in a specific subgroup of the population. 
For example, $a$ could indicate \emph{race}, \emph{gender} or 
a \emph{disability}. 
For simplicity of exposition, we treat the protected attribute as 
binary-valued, but extensions to multi-valued attributes are 
straightforward by summing over all pairwise terms.
Note that $a$ might be a component or a function of $x$, in 
which case it is available at prediction time, or it might 
be contextual information, in which case it would only be 
available for the learning algorithm at training time, but 
not for the resulting classifier at prediction time.
We cover both aspects by treating $a$ as an additional random 
variable, and write the underlying joint data distribution as 
$p(x,y,a)$. 

For a classifier $f:\X\to\{0,1\}$, the 
\emph{demographic parity violation}, $\unfair{p}$, 
and the empirical counterpart, $\unfairS$ for a 
dataset $S\subset \X\times\Y\times\A$ are defined as~\citep{calders2009parity, dwork2012fairness}, 
\begin{align}
\unfair{p}(f) &= \E_{p(x|a=0)}f(x)-\E_{p(x|a=1)}f(x),
\qquad\qquad
\unfair{S}(f) = \frac{1}{n^{a=0}}\!\!\!\sum_{x\in S^{a=0}}\!\!\!\!f(x) -\frac{1}{n^{a=1}}\!\!\!\sum_{x\in S^{a=1}}\!\!\!\!f(x) %
 \label{eq:gamma}
 \end{align}
Negative values of the demographic parity violation indicate unfairness
against group $0$, while positive values indicate unfairness against
group $1$. 
Analogous quantities can be defined for related fairness measures, such 
as \emph{equality of opportunity} or \emph{equalized odds}~\citep{Hardt2016Equality,zafar2017fairness}.
A detailed description of these and many others choices can 
be found in~\citet{barocas-hardt-narayanan}.

\myparagraph{Fairness-aware learning}
In the last years, a plethora of algorithms have been developed that 
are able to learn classifiers that are not only accurate but also 
fair, see, for example~\citet{mehrabi2021survey} for an overview. %
They mostly rely on one or multiple of four core mechanisms.
\emph{Preprocessing methods}~\citep{kamiran2012data,calmon2017optimized,wang2019repairing,celis2020data} 
change the training data to remove a potential bias.
This is often simple and effective, but comes with
the danger of reduced accuracy, since the data 
distribution at training time will not reflect 
the distribution at prediction time anymore. 
\emph{Postprocessing methods}~\citep{Hardt2016Equality, woodworth17learning,Chzhen2020FairRegressionWasserstein} adjust 
the acceptance thresholds of a previously trained 
classifier for each protected group, so that 
the desired fairness criterion is met.
This is a simple, reliable and often effective 
method, but it requires the protected attribute 
to be available at prediction time. 
\emph{Penalty-based methods}~\citep{kamishima2012fairness, zemel2013learning, zafar17FairnessConstraints, donini2018empirical, Mandal2020EnsuringFairness, chuang2021fair} add a  %
regularizer or constraints to the learning 
objective that penalize or prevent parameter 
choices that lead to unfair decisions. 
\emph{Adversarial methods}~\citep{beutel2017data, wadsworth2018, zhang2018MitigatingBiasAdversarial, Lahoti2020AdversarialReweighting} 
train an adversary in parallel to the classifier 
 that tries to predict the protected 
attribute from the model outputs; if this cannot be done better than chance level, 
fairness is achieved.

Many other methods have been proposed, \eg 
based on distributionally robust optimization~\citep{rezaei2020fairness} or tailored to a specific family of classifiers or optimization procedures \citep{cho2020fair, tan2020learning}. %
They all share, however, the property that accurate 
information about the data distribution and the protected 
attribute is needed at training time. 

If the training data is not representative of the 
actual data distribution, \eg it is noisy, biased, 
or has been manipulated, then fairness-enforcing 
mechanisms fall short~\citep{kallus2019assessing,mehrabi2020exacerbating}.
Partial solutions have been proposed, \eg, when 
only the protected attribute or only the label is noisy~\citep{lamy2019noise, wang2020robustfairnessnoise, celis2021adversarial, celis2021fair,  mehrotra2021setselection, roh2021sample}.
However, as shown in~\citet{konstantinov2022theotherone}, 
full protection against malicious manipulations of 
the training data is provably impossible when learning from a
single dataset.

\subsection{Robust multisource learning}\label{subsec:multisource}
\myparagraph{Learning from multiple sources}
The multisource learning setting formalizes the 
increasingly frequent situation in which the 
training data is not collected as a single batch, 
but from multiple data sources~\citep{ting1997model, imagenet}.
For fairness-aware learning, this means we are 
given $N$ datasets, $S_1,\dots,S_N\subset \X\times\Y\times\A$.
Each $S_i=\{(x^{(i)}_1,y^{(i)}_1,a^{(i)}_1),\dots,(x^{(i)}_{n_i},y^{(i)}_{n_i},a^{(i)}_{n_i})\}$ %
contains \iid samples from a data distribution $p_i(x,y,a)$. 
Given these datasets, the learning algorithm has the 
goal of selecting a prediction function $f$ from a 
hypothesis class $\mathcal{H}$ that has as-small-as-possible 
risk (expected prediction error) and unfairness 
(\eg demographic parity violation) with respect 
to the unknown distribution at prediction time, $p(x,y,a)$, 
(also called \emph{target distribution}).
The classical setting of $p_1=p_2=\dots=p_N=p$, we 
call \emph{homogeneous multisource learning}.
Otherwise, we call the setting \emph{heterogeneous}. 

In a \emph{clean data scenario}, when all data 
distribution are the same or very similar to 
each other, then there is no drawback to simply 
merging all sources and training on the 
resulting large dataset.
However, merging all data is not the best strategy 
when some of the data sources are \emph{unrepresentative}, 
\ie their data distribution differs a lot 
from the target one.
Such data can occur accidentally, for instance 
due to biases in the data collection or 
annotation process. 
In some cases, such issues can be overcome by domain 
adaptation techniques~\citep{ben2010theory,crammer2008learning,natarajan2013learning}. %
Unrepresentative sources can also be the result 
of intentional manipulations, which are typically 
harder to detect and compensate for~\citep{feng2019learning,fowl2021adversarial}.
In fact, the datasets might not be samples from any
probability distribution in that case, but adversarially constructed.

\myparagraph{Robust multisource learning}
In this work we aim to cover as wide a range of possible 
problems with some of the data sources as possible. Therefore, 
we study the multisource learning problem in the presence of 
an \textit{adversary}\footnote{\emph{Adversary} is the 
common computer science term for a process whose aim it 
is to prevent a system from operating as intended. 
Our adversaries manipulate the training data and should 
not be confused with adversaries in \emph{adversarial machine learning}, 
such as \emph{adversarial examples}~\citep{goodfellow2015explaining}, or 
\emph{generative adversarial networks}~\citep{goodfellow2014generative}.}.
In this setting, the adversary observes an original collection of $N$ datasets, $\tilde S_1,\dots,\tilde S_N$,
where each $\tilde S_i$ contains \iid samples from a data 
distribution $p_i(x,y,a)$. %
Next, the adversary manipulates the data in an 
arbitrary (deterministic or randomized) way 
with the only restriction that for a fixed subset 
of indices, $G\subset \{1,\dots,N\}$, the data 
source remains unaffected. That is, $S_i = \tilde S_i$ 
for all $i\in G$, and $S_i$ is arbitrary for $i\not\in G$.
The subset $G$ is unknown to the learning algorithm, 
of course.
The adversary model places no restrictions on the 
corruptions, and thus subsumes many scenarios that
have to otherwise be studied in isolation. In particular,
both data-quality issues, such as sampling bias, 
data entry errors or label noise, as well as malicious 
manipulations, such as class erasure or data poisoning,
are covered as special cases.

Multisource learning with protection against potential 
manipulations is known as \emph{robust multisource learning}~\citep{erfani2017shared}. In order to detect harmful sources, a natural approach is to compare all pairs of datasets with an appropriate distance measure and then use the pairwise distances to filter out sources that are far from the others. Key to the success of such an approach is using the right definition of distance.
On the one hand one must be able to estimate the 
measure from finite sample sets in a statistically
efficient way.
Many common information-theoretic measures, such as \emph{Kullback-Leibler divergence}~\citep{kullback1951information},
\emph{total variation}~\citep{Tsybakov:1315296} or \emph{Wasserstein distance}~\citep{villani2009optimal}, 
do not fulfill this criterion. 
On the other hand, the measure must be sensitive enough 
such that if two sources appear similar then training 
on either of them must yield similar classifiers.
Classical two-sample tests, such as Student's 
$t$-test~\citep{student1908probable} or MMD~\citep{JMLR:v13:gretton12a}, 
fail to guarantee this.

In the context of multisource learning a measure that combines 
both useful properties is the \emph{(empirical) discrepancy distance}~\citep{kifer2004detecting,mohri2012new}.
For two datasets, $S_1$, $S_2$, and a hypothesis set $\H\subset\{h:\X\to\Y\}$, 
it measures the maximal amount by which their estimates of 
the classification accuracy can differ:
\begin{equation}
\disc(S_1,S_2) = \sup_{h\in\H} \Big| \mathcal{R}_{S_1}(h) - \mathcal{R}_{S_2}(h)\Big|, \label{eq:discrepancy}
\end{equation}
where $\mathcal{R}_{S}(h)=\frac{1}{|S|}\sum_{(x,y)\in S}\bbmone\{ y\neq h(x)\}$ 
is the \emph{empirical risk} of $h$ on $S$.
In~\citet{konstantinov2020sample} the discrepancy is used as 
a distance measure to identify and suppress data sources that
might harm the classifier's accuracy.
However, the associated algorithm is mostly of theoretical 
interest: it only suppresses those sources of which it is 
certain that they have been manipulated using thresholds 
that are derived from its generalization bound. 
As a consequence, it requires training sets that are too 
large to be practical. Similarly, \citet{jain2020general} provide 
an analysis of the learning-theoretic limitations of robust 
multisource learning. 
\citet{konstantinov2019robust} also use the discrepancy measure for detecting harmful data sources, but 
the proposed algorithm requires access to a reference set 
that is guaranteed to be free of data manipulations. 
In~\citet{qiao2018learning,chen2019efficiently,jain2020_optimalrobustlearning}
robust multisource learning is addressed using tools from robust
statistics, but only in the context of discrete density estimation. The problem of achieving robustness to noisy data annotators is also related (\cite{awasthi2017efficient, khetan2017learning}), but more restricted, as in our context we allow for arbitrary changes of the inputs and protected attributes, in addition to the labels.

All of the above works are tailored to the task of ensuring 
high accuracy of the learned classifiers or estimators, but 
they are not sensitive to issues of fairness. To our awareness, the only prior work that considers 
achieving fairness in a multisource learning setting and in the presence of data corruption is the one of \citet{li2021ditto}.
However, that paper focuses on \textit{personalized federated learning} and on a fairness objective tailored to federated learning, which 
postulates that models' performances should be relatively similar across edge devices. In contrast, we 
study a centralized setup, where privacy and communication issues are not present and where a \textit{single global model is trained}. In addition, we aim to ensure that this model
does not act discriminatory against members of protected subgroups, aligned with the classic notions of group fairness in supervised learning. %

\section{Fair multisource learning}
The goal of this work is to develop a 
method that allows fairness-aware learning, 
even if some of the available data sources 
are unrepresentative of the true training 
distribution. 
For this, we introduce \method, a filtering-based 
algorithm that identifies and suppresses those 
data sources that would negatively impact 
the fairness of the trained classifier. 
Its main innovation is the \emph{disparity measure} 
for comparing datasets in terms of their fairness estimates. %
\begin{definition}[Empirical Disparity]\label{def:disparity}
For two datasets $S_1,S_2\subset\X\times\Y\times\A$, 
their \emph{empirical disparity} with respect to a 
hypothesis class $\H$ is 
\begin{equation}
\disp(S_1,S_2) = \sup_{h\in\H} \Big| \unfair{S_1}(h) - \unfair{S_2}(h)\Big|. \label{eq:disparity}
\end{equation}
where $\unfairS:\H\to\R$ is an empirical (un)fairness measure, 
such as the \emph{demographic parity violation}~\eqref{eq:gamma}.\end{definition}

The \emph{disparity} measures the maximal amount by which the 
estimated fairness of a classifier in $\H$ can differ between 
using $S_1$ or $S_2$ as the basis of the estimate. 
A small disparity value implies that if we construct a 
classifier that is fair with respect to $S_1$, then it 
will also be fair with respect to $S_2$. %

Definition~\ref{def:disparity} is inspired by the 
\emph{empirical discrepancy}~\eqref{eq:discrepancy}. 
Just as low discrepancy implies that a classifier learned 
on one dataset will have comparable accuracy as one learned 
on the other, low disparity implies that the two classifiers 
will have comparable fairness. %
\method makes use of the discrepancy as well as the disparity, 
because ensuring fairness alone does not suffice (\eg a 
constant classifier is perfectly fair). 
As a third relevant quantity we introduce the \emph{(empirical) disbalance}.
\begin{equation}
\disb(S_1,S_2) = \Big| 
\frac{|S^{a=1}_1|}{|S_1|} - \frac{|S^{a=1}_2|}{|S_2|}\Big|. \label{eq:disbalance}
\end{equation}
The disbalance compares the relative sizes of the protected groups 
of two datasets. Its inclusion is a technical requirement %
to be able to also formally prove that demographic 
parity fairness remains unaffected by corruptions.

In combination, disparity, discrepancy, and disbalance 
form an effective criterion for detecting dataset manipulations.
This is most apparent in the homogeneous setting: if two datasets 
of sufficient size are sampled \iid from distributions close to 
the target one, then by the law of large numbers we can expect 
all three measures to be small. 
If one of the datasets is sampled like this (called 
\emph{clean} from now on) but the other is manipulated, 
then there are two possibilities. 
It is still possible that all three values are small. 
In this case, equations \eqref{eq:discrepancy}--\eqref{eq:disbalance} 
ensure that neither accuracy 
nor fairness would be negatively affected, and we 
call such manipulations \emph{benign}.
If at least one of the values is large, training on 
such a manipulated datasource could have %
undesirable consequences. Such 
manipulations we will call \emph{malignant}. 
Finally, when comparing two manipulated datasets, 
discrepancy, disparity, and disbalance can each
have arbitrary values. %

In the heterogeneous setting, a path of similar reasoning 
applies, though the measures for clean sources will not 
approach exactly zero due to the difference in their data 
distributions.

\subsection{\method: Fair learning against adversaries}\label{subsec:method}

We now introduce the \method algorithm, which is able to 
learn fair classifiers even if up to half of the datasets are noisy, 
biased or have been manipulated. 
Similar to classic outlier rejection techniques~\citep{barnett1984outliers}
and statistical two-sample tests~\citep{corder2014nonparametric}, 
the main algorithm (Algorithm~\ref{alg:method}) takes a filtering 
approach. 
Given the available data sources and additional parameters, 
it calls a subroutine that identifies a subset of clean or 
benign sources, merges the training data from these, and 
trains a (presumably fairness-aware) learning algorithm 
on the resulting dataset.

\begin{wrapfigure}[20]{r}{0.53\textwidth}
\vspace{-1.8\baselineskip}
\begin{minipage}{0.53\textwidth}
\begin{algorithm}[H]
\caption{\methodname}\label{alg:method}
\begin{algorithmic}[1]
\REQUIRE datasets $S_1,\dots,S_N$
\REQUIRE quantile parameter $\beta$ %
\REQUIRE (fairness-aware) learning algorithm $\mathcal{L}$
\STATE $I\leftarrow \textsc{FilterSources}(S_1,\dots,S_N; \beta)$
\STATE $S \leftarrow \bigcup_{i\in I}S_i$
\STATE $f \leftarrow \mathcal{L}(S)$
\ENSURE trained model $f:\X\to\Y$
\end{algorithmic}
\end{algorithm}
\end{minipage}
\vskip-.5\baselineskip
\begin{minipage}{0.53\textwidth}
\begin{algorithm}[H]
\caption*{\textbf{Subroutine} \textsc{FilterSources}}%
\begin{algorithmic}[1]
\REQUIRE $S_1,\dots,S_N$; $\beta$ %
\FOR{$i=1,\dots,N$}
\FOR{$j=1,\dots,N$}
\STATE\!\!$D_{i,j} \leftarrow \disc(S_i,S_j)+\disp(S_i,S_j)+\disb(S_i,S_j)$
\ENDFOR
\STATE $q_i \leftarrow \operatorname{\beta-quantile}(D_{i,1},\dots,D_{i,N})$
\ENDFOR
\STATE $I \leftarrow \big\{ i : q_i \leq \operatorname{\beta-quantile}(q_1,\dots,q_N) \big\}$
\ENSURE index set $I$
\end{algorithmic}
\end{algorithm}
\end{minipage}
\end{wrapfigure}

\method's crucial component is the filtering subroutine. 
This estimates the pairwise disparity, discrepancy and
disbalance between all pairs of data sources and combines
them, by summing,\footnote{Other 
aggregation methods would be possible, as long as they ensure to 
preserve large values, such as the maximum. This would yield 
similar theoretical guarantees, but we did not find it to 
perform better in practice.} into a matrix of dissimilarity scores (short: $D$-scores).
As discussed above, large values indicate that at least 
one of the two compared sources must be \emph{malignant}. 
It is not a priori clear, though, how to use this information.
On the one hand, we do not know which of the two datasets 
is malignant or if both are. 

On the other hand, 
malignant sources can also occur in pairs with small $D$-score, 
when both datasets were manipulated in similar ways. %
Finally, even the $D$-scores between two clean or benign sources 
will have non-zero values, which depend on a number of factors, 
in particular the data distributions and the hypothesis class.

\method overcomes this problem by using tools 
from robust statistics. 
For any dataset $S_i$, it computes a value $q_i$ 
(called $q$-value) as the $\beta$-quantile of the
$D$-scores to all other datasets, where $\beta$ 
is a hyperparameter we discuss below.
It then computes the $\beta$-quantile of all 
such values and selects those datasets with 
$q$-values up to this threshold.

To see that this procedure has the desired 
effect of filtering out malignant datasets,
we first look at the case in which the sources
are homogeneous and $\beta=\frac{K}{N}$, 
where $K = |G| >\frac{N}{2}$ is the number of 
clean data sources.

For any clean dataset $S_i$, by assumption 
there are at least $K-1$ other clean sources 
with which it is compared. 
We can expect the $D$-scores of these pairs 
are small, and, of course, that $D_{ii}=0$. %
Because $\beta=\frac{K}{N}$, the $\beta$- quantile, 
$q_i$, is simply the $K$th-smallest of $S_i$'s 
$D$-scores. 
Consequently, $q_i$ will be at least as small 
as the result of comparing two clean sources.
For benign sources, the same reasoning applies, 
since their $D$-scores are indistinguishable 
from clean ones. 
For a malignant $S_i$, at least $K$ of the 
$D$-scores will be large, namely the ones 
where $S_i$ is compared to a clean source. 
Hence, there can be at most $N-K$ small $D$-scores 
for $S_i$. Because $\beta N=K$ and $K>N-K$, 
the $\beta$-quantile $q_i$ will be at least as 
large as comparing a clean dataset to a malignant 
one.

Choosing those sources that fall into the 
$\beta$-quantile of the $q_i$ values means 
selecting the $K$ sources of smallest $q_i$ 
value. 
By the above argument, these will either 
be not manipulated at all, or only in a 
way that does not have a negative effect 
on either the fairness or the accuracy of the training 
process.
In practice, the regimes of \emph{large} 
and \emph{small} $D$-scores can overlap
due to noise in the sampling process,
and the perfect filtering property will 
only hold approximately. %
We later discuss a 
generalization bound that makes this reasoning 
rigorous. %

Revisiting the above arguments one sees 
that the guarantees on the $q_i$ follow 
also for any $\beta>\frac{N-K}{N}$, so 
in particular for $\beta\geq \frac12$. 
To obtain the guarantee on the selected 
sources, $\beta\leq\frac{K}{N}$ suffices.
Therefore, even if the exact value of $K$ is 
unknown in practice, setting $\beta=\frac12+\frac{1}{N}$ 
for even $N$ and $\beta=\frac{1}{2}+\frac{1}{2N}$ 
for odd $N$ will always be working choices. 
These are also the values we use in our experiments. 

In the heterogeneous situation, the $D$-scores 
between clean sources might not tend to zero
for large $n$ anymore. However, they will approach 
the true discrepancy, disparity and disbalance 
values between the sources' distributions. From this, 
one can obtain a guarantee that the selected sources 
are not more dissimilar from each other than the clean 
sources are, which is the best one can hope for 
in the heterogeneous setting.

\subsection{Implementation}\label{subsec:implementation}
\method is straightforward to implement, with only
the {discrepancy} and {disparity} estimates in 
the \textsc{FilterSources} routine requiring some 
consideration.
Naively, these would require optimizing combinatorial 
functions (the differences of fraction of errors or 
positive decisions) over all functions in the hypothesis 
class. This task is at least as hard as the problem of
separating two point sets by a hyperplane, which is 
known to be NP-hard~\citep{marcotte1992novel} and even 
difficult to approximate under any real-world conditions.
Instead, %
we exploit the structure 
of the optimization problems to derive tractable approximations. 

We describe the procedure here and 
provide pseudocode in Appendix~\ref{sec:detailedalgo}.
For the discrepancy~\eqref{eq:discrepancy} such a method was 
originally proposed in the domain adaptation literature~\citep{ben2010theory}: 
finding the hypothesis with maximal accuracy difference 
between two datasets is equivalent to training a binary 
classifier on their union with the labels of one of the 
datasets flipped. 

For the disparity~\eqref{eq:disparity}, we propose 
an analogous route. 
Intuitively, the optimization step requires finding 
a hypothesis that is as unfair as possible on $S_1$ 
(\ie maximizes $\unfair{S_1}$) while being as unfair 
as possible in the opposite direction on $S_2$ 
(\ie minimizes $\unfair{S_2}$), or vice versa.
From Equation~\eqref{eq:gamma} one sees that a 
hypothesis $f$ is maximally positively unfair 
if it outputs $f(x)=1$ on $S_1^{a=1}$ and %
$f(x)=0$ on $S_1^{a=0}$, and maximally negatively
unfair if it has the opposite outputs. %
Consequently, to estimate the disparity, we can
use a classifier trained to 
predict $f(x)=a$ on $S_1$ and $f(x)=1-a$ on $S_2$.
To give both protected groups equal importance, 
as the definition requires, we use per-sample 
weights that are inversely proportional to the 
group sizes.

\subsection{Theoretical analysis}\label{subsec:theory}
The informal justification of \method can be made 
precise in the form of a generalization bound. 
In this section we present our theoretical guarantees for \method. We begin by stating formally the assumptions we make on the data generating process, both for the heterogeneous and the homogeneous cases discussed above. We then state our main theoretical result, which certifies the performance of \method in the both the homogeneous and the more general heterogeneous case. Finally, we briefly outline the main proof steps.
The full proofs can be found in Appendix~\ref{sec:proof}.

\subsubsection{Assumptions and formal adversarial model}
\label{sec:theory_assumptions}
First we present our formal set of assumptions, directly in the general setting of \emph{heterogeneous} data sources. A crucial parameter here setup is $\eta$, which denotes the amount of variability between the clean sources' distributions. The case of $\eta = 0$ recovers the \textit{homogeneous} setup.

We assume the following data generation model, similar to the one of \citet{qiao2018learning}. 
By $p(x,y,a)$ we denote the target distribution. %
It is unknown to the learning algorithm, though potentially known to 
the adversary.
Initially, there are $N$ datasets $\tilde S_1,\dots,\tilde S_N$, with the $i$-th set of samples being drawn \iid from a distribution $p_i(x,y,a)$. 
These distributions might differ from the target distribution $p$ by at most $\eta$ in terms 
of total variation both with respect to the overall distributions as well as the conditional 
distributions with respect to $a$.
Formally, we assume the following conditions for $i=1,\dots,N$:
\begin{align}
\label{eqn:closeness_assumption}
\TV(p_i(x,y,a), p(x,y,a)) \leq \eta,\quad\text{and}\quad \max_{z\in\mathcal{A}}\big\{\TV(p_i(x,y|a=z), p(x,y|a=z))\big\} \leq \eta,
\end{align}
where $\TV$ is the \emph{total variation distance} between probability distributions~\cite{halmos2013measure}.

Once the clean datasets $\tilde S_1,\dots,\tilde S_N$ are sampled, an \emph{adversary}
operates on them.
This results in new datasets, $S_1,\dots,S_N$, 
which the learning algorithm receives as input. 
The adversary is an arbitrary (deterministic or 
randomized) function  $\mathcal{F}:\prod_{i=1}^N\left(\mathcal{X}\times\mathcal{Y}\times\mathcal{A}\right)^{n_i} \rightarrow \prod_{i=1}^N\left(\mathcal{X}\times\mathcal{Y}\times\mathcal{A}\right)^{n_i}$,
with the only restriction that for a fixed subset 
of indices, $G\subset \{1,\dots,N\}$, the data remains unchanged. That is, $S_i = \tilde S_i$ 
for all $i\in G$, and $S_i$ is arbitrary for $i\not\in G$. For simplicity, we refer to a dataset $S_i$ or a source $i\in [N]$ as clean if $i\in G$.

\subsubsection{Theoretical guarantees on \method}
\label{sec:theory_results}
We are now ready to state our theoretical guarantee on \method. 
For simplicity of notation, we present the case where all 
sources have the same number of samples. Results for general 
sample sizes can be obtain in an analogous way.
We first state the guarantees for the homogeneous situations, 
which we obtain in fact as a corollary for $\eta=0$ of 
the general theorem later in this section.

\begin{theorem}[Homogeneous setting]\label{cor:method}

Assume that $\mathcal{H}$ has a finite VC-dimension $d \geq 1$. 
Let $p$ be an arbitrary target data distribution and without 
loss of generality let $\tau = p(a=0) \in \left(0, 0.5\right]$. 
Let $S_1,\dots,S_N$ be $N$ datasets, each consisting of $n$ samples, 
out of which $K>\frac{N}{2}$ are sampled $\iid$ from the 
distribution $p$. 
For $\frac{1}{2} < \beta \leq \frac{K}{N}$ and 
$I=\textsc{FilterSources}(S_1,\dots,S_N;\beta)$ set $S=\bigcup_{i\in I} S_i$. 
Let $\delta>0$. Then there exists a constant $C = C(\delta, \tau, d, N, \eta)$, such that for any $n \geq C$, the following inequalities hold with 
probability at least $1-\delta$ uniformly over all $f\in\H$ and against any adversary:
\begin{align}
\big|\unfairS(f) - \unfair{p}(f)\big| &\leq \widetilde{\mathcal{O}}\left(\sqrt{\frac{1}{n}}\right),
\qquad\qquad
\big|\mathcal{R}_S(f) - \mathcal{R}_p(f)\big| \leq \widetilde{\mathcal{O}}\left(\sqrt{\frac{1}{n}}\right),
\label{eqn:theory_results_homogeneous}
\end{align}
where $\widetilde{\mathcal{O}}$ indicates Landau's big-O notation for function growth up to logarithmic factors~\citep{cormen2009introduction}.
\end{theorem}

\paragraph{Discussion}
To analyze the statement, we observe that 
Equation~\eqref{eqn:theory_results_homogeneous} 
ensures that for large enough training sets the 
filtered training data $S$ becomes an arbitrarily 
good representative of the true underlying data distribution 
with respect to the classification accuracy as well as 
the fairness.
Moreover, the approximation holds uniformly across all hypotheses in 
the class.
We note that a similar generalization bound
  for accuracy in the homogeneous setting is given
  in \cite{konstantinov2020sample}.

This uniform convergence property is similar to the 
  classic concentration results from learning theory for learning with clean data
  \citep{shwartz2014understanding, woodworth17learning} and essentially ensures 
  that using the data $S$ is \emph{safe} for the purposes of fairness-aware learning. 
  Indeed, since the empirical risk and fairness deviation on the filtered data $S$ 
  are good estimates of the true population measures, any algorithm that uses the data 
  $S$ to learn a hypothesis with good empirical fairness and accuracy will also perform 
  well at prediction time.

Note that despite the intuitive conclusion, the 
result from Theorem \ref{cor:method} is highly non-trivial, due to the presence of data corruption. 
For example, in the case of learning from a single datasource in which a constant fraction of 
the data can be manipulated, an analogous theorem is provably 
impossible~\citep{kearns1993learning,konstantinov2022theotherone}.
This observation also implies that no learning 
algorithm can guarantee accurate and fair learning if it is given
access to the training data only after all sources have been merged.

For the general situation ($\eta\geq 0$), we obtain the 
following guarantees:

\setcounter{theorem}{0}
\begin{theorem}[Heterogeneous setting]\label{thm:method}
Assume that $\mathcal{H}$ has a finite VC-dimension $d \geq 1$. Let $p$ be an arbitrary target data distribution and without loss of generality let $\tau = p(a=0) \in \left(0, 0.5\right]$. Let $S_1,\dots,S_N$ be $N$ datasets, each consisting of $n$ samples, out of which $K>\frac{N}{2}$ 
are sampled $\iid$ from data distributions $p_i$ that are $\eta$-close to the distribution $p$ in the sense of Section \ref{sec:theory_assumptions}. Assume that $18\eta < \tau$.
For $\frac{1}{2} < \beta \leq \frac{K}{N}$ and 
$I=\textsc{FilterSources}(S_1,\dots,S_N;\beta)$ set $S=\bigcup_{i\in I} S_i$. 
Let $\delta>0$. Then there exists a constant $C = C(\delta, \tau, d, N, \eta)$, such that for any $n \geq C$, the following inequalities hold with 
probability at least $1-\delta$ uniformly over all $f\in\H$ and against any adversary:
\begin{align}
\big|\unfairS(f) - \unfair{p}(f)\big| &\leq \mathcal{O}(\eta) + \widetilde{\mathcal{O}}\Big(\sqrt{\frac{1}{n}}\Big),
\qquad\qquad
\big|\mathcal{R}_S(f) - \mathcal{R}_p(f)\big| \leq \mathcal{O}(\eta) + \widetilde{\mathcal{O}}\Big(\sqrt{\frac{1}{n}}\Big).
\label{eqn:theory_results}
\end{align}
\end{theorem}

\paragraph{Discussion} In contrast to the homogeneous situation, an 
additional factor linear in $\eta$ enters the right hand 
side of the bound. 
As discussed in Section~\ref{subsec:method}, we believe that
such a factor will be unavoidable in the heterogeneous case: 
$\eta$ is a measure of the dissimilarity between the clean sources 
and the target distribution. Therefore, even without data corruptions, 
the accuracy of a learned classifier for the unknown target 
distribution $p$ will be limited by how close that is to the 
training distributions~\citep{bartlett1992learning,hanneke2020no}.
That the increase in risk is of order $\eta$ can be seen from a 
simple binary classification example: let $p(x,y)=\bbmone\{ x\geq 0.5\}$
and $p_1(x,y)=\bbmone\{ x\geq 0.5+\eta\}$, such that 
$\TV(p, p_1)=\eta$.
Then, the optimal classifier learned with respect to $p_1$ 
has expected error $\eta$ with respect to $p$.

In cases when $\eta$ is small, the stated result still certifies that 
the empirical risk and fairness deviation on the data $S$ are good 
estimates of the underlying population values. Therefore, the 
discussion from the homogeneous case applies here as well, meaning that 
the data $S$ is ``safe'' to train on for the purposes of fairness-aware 
learning.

\myparagraph{Proof sketch} The proof consists of three steps. 
First, we characterize a set of values into which the empirical 
risks and empirical deviation measures of the clean data 
sources fall with probability at least $1-\delta$. 
Then we show that because the clean datasets cluster in such a way, 
any individual dataset that is accepted by the \filter 
algorithm provides good empirical estimates of the true risk 
and the true unfairness measure.
Finally, we show that the same holds for the union of these 
sets, $S$, which implies the inequalities in the theorem.
For the risk, the last step is a straightforward
consequence of the second. For the fairness, which 
is not simply an expectation or average over per-sample 
contributions, a more careful derivation is needed that 
crucially uses the disbalance measure as well.
For details of the steps, please see Appendix~\ref{sec:proof}.

\subsection{Computational complexity of \method}

In order to apply \method, we must train two classifiers 
(one to estimate $\disc$ and one to estimate $\disp$) 
for every pair of sources in the dataset. Assuming that the maximum number
of points in every data source (after adversarial perturbation) is $n$,
the complexity of training all of these classifiers is therefore bounded above by
$\mathcal{O}(N^2 F(2n))$, where $N$ is the number of data sources, and
$F(t)$ is the computational complexity of running the chosen method of learning
a classifier on a data set of size $t$. Then, all but $K$ sources are
filtered out and the data in the rest is combined, resulting in a total computational
complexity of $\mathcal{O}(N^2 F(2n) + F(Kn))$. Since $\frac{N}{2} < K \leq N$,
which term dominates depends 
on the complexity of the learning algorithm. In the case that $F$ is subquadratic, 
the total complexity is dominated by the first term; if $F$ is quadratic, 
by neither term, and if $F$ is of higher complexity then the combined training 
dominates. Please see Appendix sections~\ref{subsec:training_objectives} and
\ref{subsec:computing_resources} for specific details and running time of 
our experimental setup.

\section{Experiments}
\method's claim is that it allows learning classifiers that are 
fair even in the presence of perturbations in the training data. 
Due to its filtering approach it can be used in combination 
with any existing learning method. For our experiments, 
we run it in combination with four fairness-aware learning 
methods as well as one fairness-unaware one 
against a variety of adversaries on five established fair classification datasets. %
We benchmark our method against the corresponding base learning algorithms without pre-filtering, as well as against four robust learning baselines.%

\subsection{Experimental setup}\label{subsec:experimentalsetup}
We report experiments in two setups: for \emph{homogeneous} and 
\emph{heterogeneous} data sources. 

\myparagraph{Datasets}
For the homogeneous setup we use four standard benchmark 
datasets from the fair classification literature: \compas~\citep{Angwin_compas} (6171 examples),  %
\adult (48841), \german (1000) and \drugs (1885)~\citep{UCI}.
To obtain multiple identically distributed sources, we 
randomly split each training set into $N\in\{3,5,7,9,11\}$ 
equal-sized parts, out of which the adversary 
can manipulate $\lfloor\frac{N-1}{2}\rfloor$.
For the heterogeneous case we use the 2018 US census 
data of the \folktables~dataset\citep{ding2021retiring}. 
We form 51 similarly but not identically distributed 
data sources by using up to 10000 examples from each 
of the US-states. Out of these $5$, $10$, $15$, $20$ 
or $25$ can be manipulated.
Details about the data preprocessing and feature extraction 
steps can be found in the supplemental material. 

In all cases, we use \emph{gender} as the exemplary 
protected attribute, because it is present in all
feature sets. 
We train linear classifiers by logistic regression 
without regularization, using 80\% of the data for 
training and the remaining 20\% for evaluation.
All experiments are repeated ten times with different 
train-test splits and random seeds. We measure 
the mean and standard deviation of the accuracy 
and the fairness of the learned classifiers, where
we compute fairness as $1-\Gamma_S$, where $\Gamma_S$
is the demographic parity violation on the test set.

\myparagraph{Fairness-Aware Learners}
We use \method in combination with four fairness-aware learning 
methods that have found wide adoption in research and practice.
In all cases, we use \emph{logistic regression} as the underlying 
classification model.

\noindent$\bullet$\halfquad\emph{Fairness regularization}~\citep{kamishima2012fairness} 
learns a fair classifier by minimizing a linear combination of the 
classification loss and the empirical unfairness measure $\unfairS$, 
where for numeric stability, in the latter the binary-valued 
classifier decisions $f(x)$ are replaced by the real-valued 
confidences $p(f(x)=1|x)$. 

\noindent$\bullet$\halfquad\emph{Data preprocessing}~\citep{kamiran2012data} 
modifies the training data to remove potential biases. 
Specifically, it creates a new dataset by \emph{uniform resampling} 
(with repetition) from the original dataset, such that the the fractions 
of positive and negative labels are the same for each protected group. 
On the resulting unbiased dataset it trains an ordinary fairness-unaware
classifier.

\noindent$\bullet$\halfquad\emph{Score postprocessing}~\citep{Hardt2016Equality} 
first learns an ordinary (fairness-unaware) classifier 
on the available data. Afterwards, it determines which decision 
thresholds for each protected groups achieve (approximate) 
demographic parity on the training set, finally picking 
the fair thresholds with highest training accuracy.

\noindent$\bullet$\halfquad\emph{Adversarial fairness}~\citep{wadsworth2018} 
learns by minimizing a weighted difference between two terms. 
One is the loss of the actual classifier; 
the other is the loss of a classifier 
that tries to predict the protected attribute from the 
real-valued outputs of the main classifier. 

For completeness, we also include plain logistic 
regression as a \emph{fairness-unaware learner}. 
The supplemental material details the learners' 
implementations and parameters.

\myparagraph{Adversaries}\label{subsec:adversaries}
In a real-world setting, one does not know what kind of 
data quality issues will occur. 
Therefore, we test the baselines and \method for a range 
of adversaries that reflect potentially unintentional errors 
as well as intentional manipulations. 

\noindent$\bullet$\halfquad\emph{flip protected (FP), flip label (FL), flip both (FB)}: 
the adversary flips the value of protected attribute, of the label, 
or both, in all sources it can manipulate.

\noindent$\bullet$\halfquad\emph{shuffle protected (SP)}: the adversary 
shuffles the protected attribute entry in each effected batch.

\noindent$\bullet$\halfquad\emph{overwrite protected (OP), overwrite label (OL)}: 
the adversary overwrites the protected attribute of each sample 
in the affected batch by its label, or vice versa. 

\noindent$\bullet$\halfquad\emph{resample protected (RP)}: the adversary samples new 
batches of data in the following ways: all original samples of protected 
group $a=0$ with labels $y=1$ are replaced by data samples from other 
sources which also have $a=0$ but $y=0$. 
Analogously, all samples of group $a=1$ with labels $y=0$ are replaced 
by data samples from other sources with $a=1$ and $y=1$. 

\noindent$\bullet$\halfquad\emph{random anchor (RA0/RA1)}: 
these adversaries follow the protocol introduced in \citet{mehrabi2020exacerbating}.
After picking \emph{anchor} points from each protected group 
they create poisoned datasets consisting of examples that 
lie close to the anchors but have opposite label to them. 
The difference between RA0 and RA1 lies in which combinations
of label and protected attribute are encouraged or discouraged.

\noindent$\bullet$\halfquad\emph{random (RND)}: the adversary randomly 
picks one of the strategies above for each source.

\noindent$\bullet$\halfquad\emph{identity (ID)}: the adversary makes no changes to the data.

\medskip
We include ID to certify that \method does not unnecessarily damage the learning process in the case when the training data is actually clean.
The other adversaries either weaken the correlations between 
the protected attribute and the target data, thereby masking 
a potential existing bias in the data, or they strengthen the 
correlation between the protected attribute and the target 
label, thereby increasing the chance that the learned 
classifier will use the protected attribute as a basis for its 
decisions. 
In both cases, the dataset statistics at training time will differ 
from the situation at test time, and the efficacy of a potential 
mechanisms to ensure fairness at training time can be expected 
to suffer. 
For a more detailed discussion of the adversaries' effects, please see the supplemental material.

\myparagraph{Baselines}\label{subsec:baselines}
To the best of our knowledge, \method is the only existing 
method to tackle fair learning under arbitrary data manipulations. 
To nevertheless put our results into context, we 
compare it to four baselines: 1) a \emph{robust 
ensemble} (similar to~\citet{smith2018robustness}), which learns 
separate classifiers on each datasource and then combines their 
decisions by a majority vote. 
2) A \emph{distributionally robust optimization (DRO)} approach 
as proposed in~\citet{wang2020robustfairnessnoise} to 
address noisy protected attributes. 
3) Hierarchical \emph{tilted empirical risk minimization}
(hTERM)~\citet{li2021tilted}, which aims at enforcing \emph{robustness} 
by a \emph{softmin} across per-sources losses, which themselves express
a form of \emph{fairness} by a \emph{softmax}-loss across protected groups.
4) The \emph{filtering approach} 
of \citet{konstantinov2020sample} which uses discrepancy to identify 
manipulated sources but does not specifically aim to preserve fairness.
More details on these can be found in the supplemental material.
Further candidates could be~\citet{roh20FRtrain,konstantinov2019robust},
but these are not applicable in our setting, as they require access
to guaranteed clean validation data.

\begin{table*}[t]\centering\footnotesize
\caption{Result of \method and baselines for robust fairness-aware multisource learning with homogeneous and heterogeneous data sources. 
Reported accuracy and fairness values are the minimal (worst-case) ones across all tested data manipulations in the respective settings. 
See main text for an explanation of the methods and details of the experimental setup.}
\label{tab:results}
\begin{subfigure}{\textwidth}\centering
\caption{homogeneous: \adult, \compas, \drugs and \german\ datasets with 
$5$ sources of which $2$ are unreliable.}\label{tab:results-homogeneous}
\begin{tabular}{l|ll|ll|ll|ll}
& \multicolumn{2}{c}{\adult} & \multicolumn{2}{c}{\compas} & \multicolumn{2}{c}{\drugs} & \multicolumn{2}{c}{\german}
\\
\textbf{method} & accuracy & fairness & accuracy & fairness & accuracy & fairness & accuracy & fairness 
\\\hline
naive	         & $66.2_{\pm 1.1}$ & $77.6_{\pm 1.2}$ 
                 & $63.1_{\pm 1.8}$ & $78.9_{\pm 2.3}$
                 & $60.0_{\pm 2.5}$ & $72.3_{\pm 3.1}$ 
                 & $58.0_{\pm 4.0}$ & $78.7_{\pm 5.3}$
\\\hline
robust ensemble  & $69.9_{\pm 0.4}$& $90.9_{\pm 1.6}$  
                 & $64.9_{\pm 1.1}$& $87.1_{\pm 2.6}$
                 & $61.0_{\pm 2.1}$& $66.8_{\pm 4.5}$
                 & $61.9_{\pm 2.9}$& $62.3_{\pm 7.6}$
\\
DRO \nolink{\citep{wang2020robustfairnessnoise}}& $52.8_{\pm 0.3}$ & $15.4_{\pm 1.3}$ 
                                                & $53.7_{\pm 1.4}$ & $57.1_{\pm 23.9}$ 
                                                & $55.2_{\pm 2.5}$ & $48.5_{\pm 27.0}$ 
                                                & $34.4_{\pm 6.0}$ & $81.1_{\pm 12.5}$
\\
hTERM \nolink{\citep{li2021tilted}}& $66.8_{\pm 0.9}$ & $50.7_{\pm 1.8}$  
                                  & $52.9_{\pm 2.5}$ & $29.3_{\pm 12.9}$ 
                                  & $54.6_{\pm 3.0}$ & $40.7_{\pm 9.1}$
                                  & $41.2_{\pm 4.0}$ & $25.2_{\pm 9.4}$    
\\
\nolink{\citep{konstantinov2020sample}}& $69.3_{\pm 0.4}$ & $77.6_{\pm 1.2}$ 
                                       & $63.1_{\pm 1.8}$ & $78.9_{\pm 2.3}$ 
                                       & $60.0_{\pm 2.5}$ & $72.3_{\pm 3.1}$ 
                                       & $58.0_{\pm 4.0}$ & $78.7_{\pm 5.3}$
\\\hline
FLEA (proposed)             & $70.2_{\pm 0.4}$ & $97.9_{\pm 1.1}$
                            & $65.9_{\pm 1.0}$ & $94.5_{\pm 3.0}$ 
                            & $64.3_{\pm 1.4}$ & $92.6_{\pm 4.2}$ 
                            & $65.9_{\pm 3.0}$ & $93.4_{\pm 3.9}$
\\\hline
oracle	        & $70.3_{\pm 0.4}$ & $98.2_{\pm 1.0}$ 
                & $66.2_{\pm 1.1}$ & $96.2_{\pm 1.3}$ 
                & $64.4_{\pm 1.5}$ & $93.6_{\pm 3.3}$ 
                & $67.3_{\pm 3.0}$ & $94.4_{\pm 4.0}$
\end{tabular}
\end{subfigure}
\vskip\baselineskip
\begin{subfigure}{\textwidth}\centering\footnotesize\setlength{\tabcolsep}{2pt}
\caption{heterogeneous: \folktables dataset with $N=51$ sources 
of which $N-K\in\{5,10,15,20,25\}$ are unreliable.}\label{tab:results-heterogeneous}
\begin{tabular}{l|ll|ll|ll|ll|ll}
& \multicolumn{2}{c}{$N-K=5$} & \multicolumn{2}{c}{$N-K=10$} 
& \multicolumn{2}{c}{$N-K=15$} & \multicolumn{2}{c}{$N-K=20$}
& \multicolumn{2}{c}{$N-K=25$}
\\
\textbf{method} & accuracy & fairness & accuracy & fairness & accuracy & fairness & accuracy & fairness & accuracy & fairness 
\\\hline	
naive	            & $74.4_{\pm 0.2}$ & $93.4_{\pm 0.8}$ 
                    & $73.7_{\pm 0.2}$ & $87.0_{\pm 0.8}$ 
                    & $72.9_{\pm 0.5}$ & $80.1_{\pm 0.9}$ 
                    & $71.2_{\pm 0.8}$ & $73.4_{\pm 0.6}$ 
                    & $58.2_{\pm 6.2}$ & $73.9_{\pm 1.0}$
\\\hline
robust ensemble     & $74.9_{\pm 0.2}$ & $97.1_{\pm 0.3}$
                    & $74.3_{\pm 0.2}$ & $93.8_{\pm 0.4}$
                    & $73.5_{\pm 0.3}$ & $89.1_{\pm 0.5}$
                    & $71.9_{\pm 0.3}$ & $81.7_{\pm 0.7}$
                    & $65.8_{\pm 1.1}$ & $60.4_{\pm 2.2}$
\\
DRO \nolink{\citep{wang2020robustfairnessnoise}}& $65.2_{\pm 0.8}$ & $96.0_{\pm 0.7}$
                                                & $68.1_{\pm 1.5}$ & $95.2_{\pm 0.7}$ 
                                                & $66.2_{\pm 0.9}$ & $85.8_{\pm 2.6}$
                                                & $66.1_{\pm 1.3}$ & $77.4_{\pm 12.2}$
                                                & $58.1_{\pm 5.6}$ & $\,~6.7_{\pm 8.5}$
\\
hTERM \nolink{\citep{li2021tilted}} & $76.3_{\pm 0.3}$ & $73.9_{\pm 2.0}$
                                    & $74.3_{\pm 0.6}$ & $63.4_{\pm 1.3}$
                                    & $71.0_{\pm 0.7}$ & $52.2_{\pm 1.8}$
                                    & $65.3_{\pm 1.1}$ & $45.9_{\pm 1.1}$
                                    & $64.7_{\pm 0.4}$ & $39.6_{\pm 1.4}$
\\
\nolink{\citep{konstantinov2020sample}} & $74.3_{\pm 0.2}$ & $93.4_{\pm 0.8}$ 
                                        & $73.7_{\pm 0.2}$ & $87.0_{\pm 0.8}$ 
                                        & $72.9_{\pm 0.5}$ & $80.1_{\pm 0.9}$ 
                                        & $71.2_{\pm 0.8}$ & $73.4_{\pm 0.6}$ 
                                        & $58.2_{\pm 6.2}$ & $73.9_{\pm 1.0}$ 
\\\hline
FLEA (proposed)             & $75.4_{\pm 0.2}$ & $99.4_{\pm 0.2}$ 
                            & $75.4_{\pm 0.2}$ & $99.5_{\pm 0.2}$ 
                            & $75.4_{\pm 0.2}$ & $99.5_{\pm 0.2}$ 
                            & $75.3_{\pm 0.2}$ & $99.4_{\pm 0.2}$ 
                            & $74.0_{\pm 1.4}$ & $94.2_{\pm 1.5}$
\\\hline
oracle	           & $75.2_{\pm 0.2}$ & $99.5_{\pm 0.3}$ 
                   & $75.2_{\pm 0.2}$ & $99.6_{\pm 0.2}$ 
                   & $75.3_{\pm 0.2}$ & $99.7_{\pm 0.2}$ 
                   & $75.3_{\pm 0.2}$ & $99.7_{\pm 0.2}$ 
                   & $75.1_{\pm 0.3}$ & $99.6_{\pm 0.4}$ 
\end{tabular}
\end{subfigure}
\end{table*}

\subsection{Results}\label{subsec:results}
The results of our experiments show a very consistent 
picture across different datasets, base learners and 
adversaries.
For the sake of conciseness, for FLEA we only present
the results using a regularization-based fairness-aware 
learner in the main manuscript. 
Results for other learners are qualitatively the same
and can be found, together with more detailed results
and ablation studies, in the supplemental material.

In Table~\ref{tab:results} we report results for 
six learning methods: an ordinary learner that is 
fairness-aware but not protected against data 
manipulations (naive), the proposed \method,  %
and the four baseline methods: the robust ensemble, 
DRO (adapted from \citet{wang2020robustfairnessnoise}), 
hTERM (following \citet{li2021tilted}), 
and discrepancy-based filtering \citet{konstantinov2020sample}.
In addition, we report the value of a hypothetical 
oracle-based learner that knows which of the sources 
are actually clean and learns only on their data. 

Each entry in the table is the \emph{minimum} accuracy 
and fairness in the respective setting across all 
eleven tested adversaries. 
We choose this worst-case measure because it allows 
a compact representation and reflects the fact that
a real-world system should be robust against all 
possible data errors or manipulation simultaneously. 
Results broken down by individual adversaries are 
provided in the supplemental material.

Examining the results, a comparison of the \emph{naive} results
with the \emph{oracle} confirms that the need for
robust learning method is real: naive fairness-aware 
learning is not sufficient to ensure fair (or accurate) 
classifiers in the presence of unreliable data.

An ideal robust method should achieve results 
approximately as good as the \emph{oracle} result, 
as this would indicate that the adversary was indeed 
not able to negatively affect the learning process 
beyond the unavoidable loss of some training data.
The results show that \method comes close to 
this behavior, but none of the other methods does.
In the homogeneous setting (Table~\ref{tab:results-homogeneous}),
for the largest dataset, \adult, \method reliably 
suppresses the effects of all tested adversaries.
It learns classifiers with accuracy and fairness 
almost exactly those of a fair classifier trained 
only on the clean data sources. 
For the other datasets, \compas, \drugs and \german, 
\method increases the accuracy and fairness to levels
only slightly below the oracle.
In all cases, \method's results are as good as 
or better than the baselines; the robust ensemble 
is also able to improve fairness to some extent, 
but it does not reach the oracle results.

The DRO-based and hTERM approaches 
show highly volatile behavior. For some adversaries 
they improve fairness or accuracy, but for some adversaries 
they fail severely.
Consequently, their min-aggregated values in 
the table are often even lower than for the naive 
method.
Note that these results should be seen in 
context though: \citet{wang2020robustfairnessnoise}
is designed for a different and less challenging 
data manipulation model. 
\citet{li2021tilted}'s notion of robustness and 
fairness differ from the ones we employ in this 
work.

The approach from \citet{konstantinov2020sample} has 
almost no effect. Only for the largest dataset, 
\adult, it yields a slight accuracy improvement.
This can be explained by the fact that the method only 
removes sources that it can confidently identify as 
manipulated. The theory-derived thresholds for this 
are quite strict, so the method is ineffective unless 
a lot of data is available.
The observed characteristics of the different methods 
hold also for the other base learners, see the supplemental material.

In the heterogeneous setting (Table~\ref{tab:results-heterogeneous})
the results show similar trends: for $N-K\in\{5,10,15,20\}$, 
\method manages reliably to filter out the malignant 
sources, such that the accuracy and fairness of the 
learned classifiers matches the one of the oracle method
almost perfectly.  
The robust ensemble has a positive effect, but less 
so than \method. For this data, DRO somewhat improves 
fairness, but this comes at a loss of accuracy. 
The method from~\citet{konstantinov2020sample} has 
no noticeable effect. 
For $N-K=25$, \method still performs best, although 
a bit worse than the hypothetical best oracle. 
Presumably, this is because the combined effect of 
distribution differences between the sources and 
the uncertainty due to finite sampling when estimating
$\disc$, $\disp$ and $\disb$ are too large to perfectly 
allow a decision which $26$ sources to keep and which 
$25$ to exclude.

\section{Conclusion}
We studied the task of fairness-aware 
classification in the setting when data from multiple 
sources is available, but some of them might by noisy,
contain unintentional errors, or even have been maliciously 
manipulated.
Ordinary fairness-aware learning methods are not robust 
against such problems and often fail to produce 
fair classifiers. 
We proposed a filtering-based algorithm, \method, that 
is able to identify and suppress those data sources 
that would negatively affect the training process, 
thereby restoring the property that fairness-aware 
learning methods actually produce fair classifiers. 
We showed the effectiveness of \method experimentally, 
and we also presented a theorem that provides formal 
guarantees of \method's efficacy.

Despite our promising results, we consider \method just 
a first step on the path toward making fairness-aware 
learning more robust. 
One potential future step is to include other notions
of fairness besides \emph{demographic parity}.
So far, \method can already be used as it is with classifiers 
that enforce other fairness criteria. 
However, our theoretical guarantees do not holds for these, 
as the \emph{disparity} measure that enters our filtering step 
is not tailored to them. 
We do not see fundamental problems in deriving filtering steps 
for other fairness notions that are also defined in terms 
of properties of the joint distribution of inputs, outputs, 
and protected attributes, such as \emph{equality of opportunity} 
or \emph{equalized odds}. However, the theoretical analysis 
and the practical implementation could get more involved. 

On the algorithmic side, \method as we formulated it, requires 
computing all pairwise similarities between the sources. 
This could render it inefficient when the number of sources 
is very large (\eg thousands). 
We expect that it will be possible to overcome this, for 
example by randomization of the sources, but we leave this 
step to future work.

\section{Acknowledgements}
The authors would like to thank Bernd Prach, Elias Frantar, 
Alexandra Peste, Mahdi Nikdan, and Peter S\a'uken\a'ik 
for their helpful feedback.
This research was supported by the Scientific Service Units
(SSU) of IST Austria through resources provided by Scientific
Computing (SciComp). This publication was made possible by an ETH AI Center postdoctoral fellowship granted to Nikola Konstantinov.
Eugenia Iofinova was supported in part by the FWF DK VGSCO, grant agreement number W1260-N35.

\bibliographystyle{icml2022}
\bibliography{ms}

\begin{thebibliography}{79}
\providecommand{\natexlab}[1]{#1}
\providecommand{\url}[1]{\texttt{#1}}
\expandafter\ifx\csname urlstyle\endcsname\relax
  \providecommand{\doi}[1]{doi: #1}\else
  \providecommand{\doi}{doi: \begingroup \urlstyle{rm}\Url}\fi

\bibitem[Agarwal et~al.(2018)Agarwal, Beygelzimer, Dudik, Langford, and
  Wallach]{agarwal2018reductions}
Agarwal, A., Beygelzimer, A., Dudik, M., Langford, J., and Wallach, H.
\newblock A reductions approach to fair classification.
\newblock In \emph{International Conference on Machine Learing (ICML)}, 2018.

\bibitem[Aingwin et~al.(2016)Aingwin, Larson, Mattu, and
  Kirchner]{Angwin_compas}
Aingwin, J., Larson, J., Mattu, S., and Kirchner, L.
\newblock Machine bias: There's software used across the country to predict
  future criminals and its biased against blacks., 2016.
\newblock URL \url{https://github.com/propublica/compas-analysis}.

\bibitem[Awasthi et~al.(2017)Awasthi, Blum, Haghtalab, and
  Mansour]{awasthi2017efficient}
Awasthi, P., Blum, A., Haghtalab, N., and Mansour, Y.
\newblock Efficient {PAC} learning from the crowd.
\newblock In \emph{Workshop on Computational Learning Theory (COLT)}, 2017.

\bibitem[Barnett \& Lewis(1984)Barnett and Lewis]{barnett1984outliers}
Barnett, V. and Lewis, T.
\newblock \emph{Outliers in statistical data}.
\newblock Wiley, 1984.

\bibitem[Barocas et~al.(2019)Barocas, Hardt, and
  Narayanan]{barocas-hardt-narayanan}
Barocas, S., Hardt, M., and Narayanan, A.
\newblock \emph{Fairness and Machine Learning}.
\newblock fairmlbook.org, 2019.

\bibitem[Bartlett(1992)]{bartlett1992learning}
Bartlett, P.~L.
\newblock Learning with a slowly changing distribution.
\newblock In \emph{Workshop on Computational Learning Theory (COLT)}, 1992.

\bibitem[Ben-David et~al.(2010)Ben-David, Blitzer, Crammer, Kulesza, Pereira,
  and Vaughan]{ben2010theory}
Ben-David, S., Blitzer, J., Crammer, K., Kulesza, A., Pereira, F., and Vaughan,
  J.~W.
\newblock A theory of learning from different domains.
\newblock \emph{Machine Learning (ML)}, 2010.

\bibitem[Beutel et~al.(2017)Beutel, Chen, Zhao, and Chi]{beutel2017data}
Beutel, A., Chen, J., Zhao, Z., and Chi, E.~H.
\newblock Data decisions and theoretical implications when adversarially
  learning fair representations.
\newblock In \emph{Conference on Fairness, Accountability and Transparency
  (FAccT)}, 2017.

\bibitem[Calders et~al.(2009)Calders, Kamiran, and
  Pechenizkiy]{calders2009parity}
Calders, T., Kamiran, F., and Pechenizkiy, M.
\newblock Building classifiers with independency constraints.
\newblock In \emph{International Conference on Data Mining Workshops (IDCMW)},
  2009.

\bibitem[Calmon et~al.(2017)Calmon, Wei, Vinzamuri, Natesan~Ramamurthy, and
  Varshney]{calmon2017optimized}
Calmon, F., Wei, D., Vinzamuri, B., Natesan~Ramamurthy, K., and Varshney, K.~R.
\newblock Optimized pre-processing for discrimination prevention.
\newblock In \emph{Conference on Neural Information Processing Systems
  (NeurIPS)}, 2017.

\bibitem[Celis et~al.(2020)Celis, Keswani, and Vishnoi]{celis2020data}
Celis, L.~E., Keswani, V., and Vishnoi, N.
\newblock Data preprocessing to mitigate bias: A maximum entropy based
  approach.
\newblock In \emph{International Conference on Machine Learing (ICML)}, 2020.

\bibitem[Celis et~al.(2021{\natexlab{a}})Celis, Huang, Keswani, and
  Vishnoi]{celis2021fair}
Celis, L.~E., Huang, L., Keswani, V., and Vishnoi, N.~K.
\newblock Fair classification with noisy protected attributes: A framework with
  provable guarantees.
\newblock In \emph{International Conference on Machine Learing (ICML)},
  2021{\natexlab{a}}.

\bibitem[Celis et~al.(2021{\natexlab{b}})Celis, Mehrotra, and
  Vishnoi]{celis2021adversarial}
Celis, L.~E., Mehrotra, A., and Vishnoi, N.~K.
\newblock Fair classification with adversarial perturbations.
\newblock In \emph{Conference on Neural Information Processing Systems
  (NeurIPS)}, 2021{\natexlab{b}}.

\bibitem[Charikar et~al.(2017)Charikar, Steinhardt, and
  Valiant]{charikar2017learning}
Charikar, M., Steinhardt, J., and Valiant, G.
\newblock Learning from untrusted data.
\newblock In \emph{Symposium on Theory of Computing (STOC)}, 2017.

\bibitem[Chen et~al.(2019)Chen, Li, and Moitra]{chen2019efficiently}
Chen, S., Li, J., and Moitra, A.
\newblock Efficiently learning structured distributions from untrusted batches.
\newblock In \emph{Symposium on Theory of Computing (STOC)}, 2019.

\bibitem[Cho et~al.(2020)Cho, Hwang, and Suh]{cho2020fair}
Cho, J., Hwang, G., and Suh, C.
\newblock A fair classifier using kernel density estimation.
\newblock In \emph{Conference on Neural Information Processing Systems
  (NeurIPS)}, 2020.

\bibitem[Chuang \& Mroueh(2021)Chuang and Mroueh]{chuang2021fair}
Chuang, C.-Y. and Mroueh, Y.
\newblock Fair mixup: Fairness via interpolation.
\newblock In \emph{International Conference on Learning Representations
  (ICLR)}, 2021.

\bibitem[Chzhen et~al.(2020)Chzhen, Denis, Hebiri, Oneto, and
  Pontil]{Chzhen2020FairRegressionWasserstein}
Chzhen, E., Denis, C., Hebiri, M., Oneto, L., and Pontil, M.
\newblock Fair regression with {Wasserstein} barycenters.
\newblock In \emph{Conference on Neural Information Processing Systems
  (NeurIPS)}, 2020.

\bibitem[Corder \& Foreman(2014)Corder and Foreman]{corder2014nonparametric}
Corder, G.~W. and Foreman, D.~I.
\newblock \emph{Nonparametric statistics: A step-by-step approach}.
\newblock John Wiley \& Sons, 2014.

\bibitem[Cormen et~al.(2009)Cormen, Leiserson, Rivest, and
  Stein]{cormen2009introduction}
Cormen, T.~H., Leiserson, C.~E., Rivest, R.~L., and Stein, C.
\newblock \emph{Introduction to algorithms}.
\newblock The MIT Press, 2009.

\bibitem[Crammer et~al.(2008)Crammer, Kearns, and Wortman]{crammer2008learning}
Crammer, K., Kearns, M., and Wortman, J.
\newblock Learning from multiple sources.
\newblock \emph{Journal of Machine Learning Research (JMLR)}, 2008.

\bibitem[Ding et~al.(2021)Ding, Hardt, Miller, and Schmidt]{ding2021retiring}
Ding, F., Hardt, M., Miller, J., and Schmidt, L.
\newblock Retiring adult: New datasets for fair machine learning.
\newblock In \emph{Conference on Neural Information Processing Systems
  (NeurIPS)}, 2021.

\bibitem[Donini et~al.(2018)Donini, Oneto, Ben-David, Shawe-Taylor, and
  Pontil]{donini2018empirical}
Donini, M., Oneto, L., Ben-David, S., Shawe-Taylor, J.~S., and Pontil, M.
\newblock Empirical risk minimization under fairness constraints.
\newblock In \emph{Conference on Neural Information Processing Systems
  (NeurIPS)}, 2018.

\bibitem[Dua \& Graff(2017)Dua and Graff]{UCI}
Dua, D. and Graff, C.
\newblock {UCI} machine learning repository, 2017.
\newblock URL \url{http://archive.ics.uci.edu/ml}.

\bibitem[Dwork et~al.(2012)Dwork, Hardt, Pitassi, Reingold, and
  Zemel]{dwork2012fairness}
Dwork, C., Hardt, M., Pitassi, T., Reingold, O., and Zemel, R.
\newblock Fairness through awareness.
\newblock In \emph{Innovations in Theoretical Computer Science Conference
  (ITCS)}, 2012.

\bibitem[Erfani et~al.(2017)Erfani, Baktashmotlagh, Moshtaghi, Nguyen, Leckie,
  Bailey, and Ramamohanarao]{erfani2017shared}
Erfani, S., Baktashmotlagh, M., Moshtaghi, M., Nguyen, V., Leckie, C., Bailey,
  J., and Ramamohanarao, K.
\newblock From shared subspaces to shared landmarks: A robust multi-source
  classification approach.
\newblock In \emph{Conference on Artificial Intelligence (AAAI)}, 2017.

\bibitem[Feng et~al.(2019)Feng, Cai, and Zhou]{feng2019learning}
Feng, J., Cai, Q.-Z., and Zhou, Z.-H.
\newblock Learning to confuse: Generating training time adversarial data with
  auto-encoder.
\newblock In \emph{Conference on Neural Information Processing Systems
  (NeurIPS)}, 2019.

\bibitem[Fowl et~al.(2021)Fowl, Goldblum, Chiang, Geiping, Czaja, and
  Goldstein]{fowl2021adversarial}
Fowl, L., Goldblum, M., Chiang, P.-y., Geiping, J., Czaja, W., and Goldstein,
  T.
\newblock Adversarial examples make strong poisons.
\newblock In \emph{Conference on Neural Information Processing Systems
  (NeurIPS)}, 2021.

\bibitem[Friedman(2001)]{friedman2001greedy}
Friedman, J.~H.
\newblock Greedy function approximation: a gradient boosting machine.
\newblock \emph{Annals of Statistics}, 2001.

\bibitem[Goodfellow et~al.(2014)Goodfellow, Pouget-Abadie, Mirza, Xu,
  Warde-Farley, Ozair, Courville, and Bengio]{goodfellow2014generative}
Goodfellow, I.~J., Pouget-Abadie, J., Mirza, M., Xu, B., Warde-Farley, D.,
  Ozair, S., Courville, A., and Bengio, Y.
\newblock Generative adversarial networks.
\newblock In \emph{Conference on Neural Information Processing Systems
  (NeurIPS)}, 2014.

\bibitem[Goodfellow et~al.(2015)Goodfellow, Shlens, and
  Szegedy]{goodfellow2015explaining}
Goodfellow, I.~J., Shlens, J., and Szegedy, C.
\newblock Explaining and harnessing adversarial examples.
\newblock In \emph{International Conference on Learning Representations
  (ICLR)}, 2015.

\bibitem[Gretton et~al.(2012)Gretton, Borgwardt, Rasch, Sch{{\"o}}lkopf, and
  Smola]{JMLR:v13:gretton12a}
Gretton, A., Borgwardt, K.~M., Rasch, M.~J., Sch{{\"o}}lkopf, B., and Smola, A.
\newblock A kernel two-sample test.
\newblock \emph{Journal of Machine Learning Research (JMLR)}, 2012.

\bibitem[Halmos(2013)]{halmos2013measure}
Halmos, P.~R.
\newblock \emph{Measure Theory}.
\newblock Springer, 2013.

\bibitem[Hanneke \& Kpotufe(2020)Hanneke and Kpotufe]{hanneke2020no}
Hanneke, S. and Kpotufe, S.
\newblock A no-free-lunch theorem for multitask learning.
\newblock \emph{arXiv preprint arXiv:2006.15785}, 2020.

\bibitem[Hardt et~al.(2016)Hardt, Price, and Srebro]{Hardt2016Equality}
Hardt, M., Price, E., and Srebro, N.
\newblock Equality of opportunity in supervised learning.
\newblock In \emph{Conference on Neural Information Processing Systems
  (NeurIPS)}, 2016.

\bibitem[Jain \& Orlitsky(2020{\natexlab{a}})Jain and
  Orlitsky]{jain2020_optimalrobustlearning}
Jain, A. and Orlitsky, A.
\newblock Optimal robust learning of discrete distributions from batches.
\newblock In \emph{International Conference on Machine Learing (ICML)},
  2020{\natexlab{a}}.

\bibitem[Jain \& Orlitsky(2020{\natexlab{b}})Jain and
  Orlitsky]{jain2020general}
Jain, A. and Orlitsky, A.
\newblock A general method for robust learning from batches.
\newblock In \emph{Conference on Neural Information Processing Systems
  (NeurIPS)}, 2020{\natexlab{b}}.

\bibitem[Kallus et~al.(2020)Kallus, Mao, and Zhou]{kallus2019assessing}
Kallus, N., Mao, X., and Zhou, A.
\newblock Assessing algorithmic fairness with unobserved protected class using
  data combination.
\newblock In \emph{Conference on Fairness, Accountability and Transparency
  (FAccT)}, 2020.

\bibitem[Kamiran \& Calders(2012)Kamiran and Calders]{kamiran2012data}
Kamiran, F. and Calders, T.
\newblock Data preprocessing techniques for classification without
  discrimination.
\newblock \emph{Knowledge and Information Systems (KAIS)}, 2012.

\bibitem[Kamishima et~al.(2012)Kamishima, Akaho, Asoh, and
  Sakuma]{kamishima2012fairness}
Kamishima, T., Akaho, S., Asoh, H., and Sakuma, J.
\newblock Fairness-aware classifier with prejudice remover regularizer.
\newblock In \emph{European Conference on Machine Learning and Data Mining
  (ECML PKDD)}, 2012.

\bibitem[Kearns \& Li(1993)Kearns and Li]{kearns1993learning}
Kearns, M. and Li, M.
\newblock Learning in the presence of malicious errors.
\newblock \emph{SIAM Journal on Computing}, 1993.

\bibitem[Khetan et~al.(2018)Khetan, Lipton, and Anandkumar]{khetan2017learning}
Khetan, A., Lipton, Z.~C., and Anandkumar, A.
\newblock Learning from noisy singly-labeled data.
\newblock In \emph{International Conference on Learning Representations
  (ICLR)}, 2018.

\bibitem[Kifer et~al.(2004)Kifer, Ben-David, and Gehrke]{kifer2004detecting}
Kifer, D., Ben-David, S., and Gehrke, J.
\newblock Detecting change in data streams.
\newblock In \emph{International Conference on Very Large Data Bases (VLDB)},
  2004.

\bibitem[Konstantinov \& Lampert(2019)Konstantinov and
  Lampert]{konstantinov2019robust}
Konstantinov, N. and Lampert, C.~H.
\newblock Robust learning from untrusted sources.
\newblock In \emph{International Conference on Machine Learing (ICML)}, 2019.

\bibitem[Konstantinov \& Lampert(2022)Konstantinov and
  Lampert]{konstantinov2022theotherone}
Konstantinov, N. and Lampert, C.~H.
\newblock Fairness-aware learning from corrupted data.
\newblock \emph{Journal of Machine Learning Research (JMLR)}, 2022.

\bibitem[Konstantinov et~al.(2020)Konstantinov, Frantar, Alistarh, and
  Lampert]{konstantinov2020sample}
Konstantinov, N., Frantar, E., Alistarh, D., and Lampert, C.
\newblock On the sample complexity of adversarial multi-source {PAC} learning.
\newblock In \emph{International Conference on Machine Learing (ICML)}, 2020.

\bibitem[Kullback \& Leibler(1951)Kullback and
  Leibler]{kullback1951information}
Kullback, S. and Leibler, R.~A.
\newblock On information and sufficiency.
\newblock \emph{The Annals of Mathematical Statistics}, 1951.

\bibitem[Lahoti et~al.(2020)Lahoti, Beutel, Chen, Lee, Prost, Thain, Wang, and
  Chi]{Lahoti2020AdversarialReweighting}
Lahoti, P., Beutel, A., Chen, J., Lee, K., Prost, F., Thain, N., Wang, X., and
  Chi, E.
\newblock Fairness without demographics through adversarially reweighted
  learning.
\newblock In \emph{Conference on Neural Information Processing Systems
  (NeurIPS)}, 2020.

\bibitem[Lamy et~al.(2019)Lamy, Zhong, Menon, and Verma]{lamy2019noise}
Lamy, A., Zhong, Z., Menon, A.~K., and Verma, N.
\newblock Noise-tolerant fair classification.
\newblock In \emph{Conference on Neural Information Processing Systems
  (NeurIPS)}, 2019.

\bibitem[Li et~al.(2021{\natexlab{a}})Li, Beirami, Sanjabi, and
  Smith]{li2021tilted}
Li, T., Beirami, A., Sanjabi, M., and Smith, V.
\newblock Tilted empirical risk minimization.
\newblock In \emph{International Conference on Learning Representations
  (ICLR)}, 2021{\natexlab{a}}.

\bibitem[Li et~al.(2021{\natexlab{b}})Li, Hu, Beirami, and Smith]{li2021ditto}
Li, T., Hu, S., Beirami, A., and Smith, V.
\newblock Ditto: Fair and robust federated learning through personalization.
\newblock In \emph{International Conference on Machine Learing (ICML)},
  2021{\natexlab{b}}.

\bibitem[Mandal et~al.(2020)Mandal, Deng, Jana, Wing, and
  Hsu]{Mandal2020EnsuringFairness}
Mandal, D., Deng, S., Jana, S., Wing, J., and Hsu, D.~J.
\newblock Ensuring fairness beyond the training data.
\newblock In \emph{Conference on Neural Information Processing Systems
  (NeurIPS)}, 2020.

\bibitem[Marcotte \& Savard(1992)Marcotte and Savard]{marcotte1992novel}
Marcotte, P. and Savard, G.
\newblock Novel approaches to the discrimination problem.
\newblock \emph{Zeitschrift f{\"u}r Operations Research}, 1992.

\bibitem[Mehrabi et~al.(2021{\natexlab{a}})Mehrabi, Morstatter, Saxena, Lerman,
  and Galstyan]{mehrabi2021survey}
Mehrabi, N., Morstatter, F., Saxena, N., Lerman, K., and Galstyan, A.
\newblock A survey on bias and fairness in machine learning.
\newblock \emph{ACM Computing Surveys (CSUR)}, 2021{\natexlab{a}}.

\bibitem[Mehrabi et~al.(2021{\natexlab{b}})Mehrabi, Naveed, Morstatter, and
  Galstyan]{mehrabi2020exacerbating}
Mehrabi, N., Naveed, M., Morstatter, F., and Galstyan, A.
\newblock Exacerbating algorithmic bias through fairness attacks.
\newblock In \emph{Conference on Artificial Intelligence (AAAI)},
  2021{\natexlab{b}}.

\bibitem[Mehrotra \& Celis(2021)Mehrotra and Celis]{mehrotra2021setselection}
Mehrotra, A. and Celis, L.~E.
\newblock Mitigating bias in set selection with noisy protected attributes.
\newblock In \emph{Conference on Fairness, Accountability and Transparency
  (FAccT)}, 2021.

\bibitem[Mohri \& Medina(2012)Mohri and Medina]{mohri2012new}
Mohri, M. and Medina, A.~M.
\newblock New analysis and algorithm for learning with drifting distributions.
\newblock In \emph{Algorithmic Learning Theory (ALT)}, 2012.

\bibitem[Natarajan et~al.(2013)Natarajan, Dhillon, Ravikumar, and
  Tewari]{natarajan2013learning}
Natarajan, N., Dhillon, I.~S., Ravikumar, P., and Tewari, A.
\newblock Learning with noisy labels.
\newblock In \emph{Conference on Neural Information Processing Systems
  (NeurIPS)}, 2013.

\bibitem[Qiao \& Valiant(2018)Qiao and Valiant]{qiao2018learning}
Qiao, M. and Valiant, G.
\newblock Learning discrete distributions from untrusted batches.
\newblock In \emph{Innovations in Theoretical Computer Science Conference
  (ITCS)}, 2018.

\bibitem[Rezaei et~al.(2020)Rezaei, Fathony, Memarrast, and
  Ziebart]{rezaei2020fairness}
Rezaei, A., Fathony, R., Memarrast, O., and Ziebart, B.
\newblock Fairness for robust log loss classification.
\newblock In \emph{Conference on Artificial Intelligence (AAAI)}, 2020.

\bibitem[Roh et~al.(2020)Roh, Lee, Whang, and Suh]{roh20FRtrain}
Roh, Y., Lee, K., Whang, S., and Suh, C.
\newblock {FR}-train: A mutual information-based approach to fair and robust
  training.
\newblock In \emph{International Conference on Machine Learing (ICML)}, 2020.

\bibitem[Roh et~al.(2021)Roh, Lee, Whang, and Suh]{roh2021sample}
Roh, Y., Lee, K., Whang, S., and Suh, C.
\newblock Sample selection for fair and robust training.
\newblock In \emph{Conference on Neural Information Processing Systems
  (NeurIPS)}, 2021.

\bibitem[Russakovsky et~al.(2015)Russakovsky, Deng, Su, Krause, Satheesh, Ma,
  Huang, Karpathy, Khosla, Bernstein, Berg, and Fei-Fei]{imagenet}
Russakovsky, O., Deng, J., Su, H., Krause, J., Satheesh, S., Ma, S., Huang, Z.,
  Karpathy, A., Khosla, A., Bernstein, M., Berg, A.~C., and Fei-Fei, L.
\newblock {ImageNet Large Scale Visual Recognition Challenge}.
\newblock \emph{International Journal of Computer Vision (IJCV)}, 2015.

\bibitem[Shalev-Shwartz \& Ben-David(2014)Shalev-Shwartz and
  Ben-David]{shwartz2014understanding}
Shalev-Shwartz, S. and Ben-David, S.
\newblock \emph{Understanding machine learning: from theory to algorithms}.
\newblock Cambridge University Press, 2014.

\bibitem[Smith \& Martinez(2018)Smith and Martinez]{smith2018robustness}
Smith, M.~R. and Martinez, T.~R.
\newblock The robustness of majority voting compared to filtering misclassified
  instances in supervised classification tasks.
\newblock \emph{Artificial Intelligence Review}, 2018.

\bibitem[Student(1908)]{student1908probable}
Student.
\newblock The probable error of a mean.
\newblock \emph{Biometrika}, 1908.

\bibitem[Tan et~al.(2020)Tan, Yeom, Fredrikson, and Talwalkar]{tan2020learning}
Tan, Z., Yeom, S., Fredrikson, M., and Talwalkar, A.
\newblock Learning fair representations for kernel models.
\newblock In \emph{Conference on Uncertainty in Artificial Intelligence
  (AISTATS)}, 2020.

\bibitem[Ting \& Low(1997)Ting and Low]{ting1997model}
Ting, K.~M. and Low, B.~T.
\newblock Model combination in the multiple-data-batches scenario.
\newblock In \emph{European Conference on Marchine Learning (ECML)}, 1997.

\bibitem[Tsybakov(2009)]{Tsybakov:1315296}
Tsybakov, A.~B.
\newblock \emph{{Introduction to Nonparametric Estimation}}.
\newblock Springer series in statistics. Springer, 2009.

\bibitem[Vapnik(2013)]{vapnik2013nature}
Vapnik, V.
\newblock \emph{The nature of statistical learning theory}.
\newblock Statistics for Engineering and Information Science. Springer, 2013.

\bibitem[Villani(2009)]{villani2009optimal}
Villani, C.
\newblock \emph{Optimal transport: old and new}.
\newblock Springer, 2009.

\bibitem[Wadsworth et~al.(2018)Wadsworth, Vera, and Piech]{wadsworth2018}
Wadsworth, C., Vera, F., and Piech, C.
\newblock Achieving fairness through adversarial learning: an application to
  recidivism prediction.
\newblock In \emph{Conference on Fairness, Accountability and Transparency
  (FAccT)}, 2018.

\bibitem[Wang et~al.(2019)Wang, Ustun, and Calmon]{wang2019repairing}
Wang, H., Ustun, B., and Calmon, F.
\newblock Repairing without retraining: Avoiding disparate impact with
  counterfactual distributions.
\newblock In \emph{International Conference on Machine Learing (ICML)}, 2019.

\bibitem[Wang et~al.(2020)Wang, Guo, Narasimhan, Cotter, Gupta, and
  Jordan]{wang2020robustfairnessnoise}
Wang, S., Guo, W., Narasimhan, H., Cotter, A., Gupta, M., and Jordan, M.
\newblock Robust optimization for fairness with noisy protected groups.
\newblock In \emph{Conference on Neural Information Processing Systems
  (NeurIPS)}, 2020.

\bibitem[Woodworth et~al.(2017)Woodworth, Gunasekar, Ohannessian, and
  Srebro]{woodworth17learning}
Woodworth, B., Gunasekar, S., Ohannessian, M.~I., and Srebro, N.
\newblock Learning non-discriminatory predictors.
\newblock In \emph{Workshop on Computational Learning Theory (COLT)}, 2017.

\bibitem[Zafar et~al.(2017{\natexlab{a}})Zafar, Valera, Gomez~Rodriguez, and
  Gummadi]{zafar2017fairness}
Zafar, M.~B., Valera, I., Gomez~Rodriguez, M., and Gummadi, K.~P.
\newblock Fairness beyond disparate treatment \& disparate impact: Learning
  classification without disparate mistreatment.
\newblock In \emph{International World Wide Web Conference (WWW)},
  2017{\natexlab{a}}.

\bibitem[Zafar et~al.(2017{\natexlab{b}})Zafar, Valera, Rogriguez, and
  Gummadi]{zafar17FairnessConstraints}
Zafar, M.~B., Valera, I., Rogriguez, M.~G., and Gummadi, K.~P.
\newblock Fairness constraints: Mechanisms for fair classification.
\newblock In \emph{Conference on Uncertainty in Artificial Intelligence
  (AISTATS)}, 2017{\natexlab{b}}.

\bibitem[Zemel et~al.(2013)Zemel, Wu, Swersky, Pitassi, and
  Dwork]{zemel2013learning}
Zemel, R., Wu, Y., Swersky, K., Pitassi, T., and Dwork, C.
\newblock Learning fair representations.
\newblock In \emph{International Conference on Machine Learing (ICML)}, 2013.

\bibitem[Zhang et~al.(2018)Zhang, Lemoine, and
  Mitchell]{zhang2018MitigatingBiasAdversarial}
Zhang, B.~H., Lemoine, B., and Mitchell, M.
\newblock Mitigating unwanted biases with adversarial learning.
\newblock In \emph{Conference on AI, Ethics, and Society (AIES)}, 2018.

\end{thebibliography}

\clearpage
\appendix

\onecolumn

\centerline{\LARGE\textbf{Appendix}}

\bigskip
\paragraph{Table of Contents}
\begin{itemize}
\item \textbf{A: Experimental Setup}
\item \textbf{B: Detailed Algorithm for Estimating $\disc$ and $\disp$}
\item \textbf{C: Detailed Experimental Results}
\item \textbf{D: Discussion of the Role of $\disb$, $\disc$ and $\disp$ and Ablation Study}
\item \textbf{E: Complete Formulation and Proof of Theorem 1}
\end{itemize}

\section{Experimental setup}

\subsection{Dataset preparation}

The datasets we use are publicly available and frequently 
used to evaluate fair classification methods.

The \compas dataset was introduced by ProPublica. It contains 
data from the US criminal justice system and was obtained by 
a public records request.
The dataset contains personal information. To mitigate negative
side effects, we delete the \emph{name}, \emph{first}, \emph{last} 
and \emph{dob} (date of birth) entries from the dataset before 
processing it further.
We then exclude entries that do not fit the problem setting of 
predicting two year recidivism, following the steps of the 
original analysis.\footnote{\url{https://github.com/propublica/compas-analysis}}
Specifically, this means keeping only cases from Broward county, Florida, 
for which data has been entered within 30 days of the arrest. 
Traffic offenses and cases with insufficient information are 
also excluded. 
This steps leave 6171 examples out of the original 7214 cases.
The categorical features and numerical features that we extract 
from the data are provided in Table~\ref{tab:info-compas}. 

\adult, \german{}, and \drugs{} are available in the UCI data 
repository as well as multiple other online sources.\footnote{
\adult: {\scriptsize\url{https://archive.ics.uci.edu/ml/datasets/adult}},%

\german:{\scriptsize\url{https://github.com/praisan/hello-world/blob/master/german_credit_data.csv}}, 

\drugs: {\scriptsize\url{https://raw.githubusercontent.com/deepak525/Drug-Consumption/master/drug_consumption.csv}}%
}
We use them in unmodified form, except for binning some of the 
feature values; see Tables~\ref{tab:datasetinfo} and \ref{tab:datasetinfo2}.

\begin{table}[H]\small
\caption{Dataset information}\label{tab:datasetinfo}
\begin{subfigure}[b]{.95\textwidth}\centering
\caption{\adult}\label{tab:info-adult}
\begin{tabular}{p{.18\textwidth}|p{.17\textwidth}|p{.63\textwidth}}
dataset size & 48842 
\\\hline
categorical features & \emph{workclass} &       federal-gov, local-gov, never-worked, private, self-emp-inc, self-emp-not-inc, state-gov, without-pay, unknown\\
                     & \emph{education} &       1st-4th, 5th-6th, 7th-8th, 9th, 10th, 11th, 12th, Assoc-acdm, Assoc-voc, Bachelors, Doctorate, HS-grad, Masters, Preschool, Prof-school, Some-college \\
                     & \emph{hours-per-week} &  $\leq$ 19, 20--29, 30--39, $\geq$ 40 \\
                     & \emph{age}            &  $\leq$ 24, 25--34, 35--44, 45--54, 55--64, $\geq$ 65 \\
                     & \emph{native-country} &  United States, other\\  
                     & \emph{race} &            Amer-Indian-Eskimo, Asian-Pac-Islander, Black, White, other \\\hline
numerical features & --- 
\\\hline
protected attribute  & \emph{gender} &          values: female (33.2\%), male (66.8\%)
\\\hline
target variable & \textit{income} & $\leq 50K$ (76.1\%), $>50K$ (33.9\%) 
\end{tabular}
\end{subfigure}
\end{table}

\begin{table}[H]\small
\caption{Dataset information (continued)}\label{tab:datasetinfo2}
\begin{subfigure}[b]{.95\textwidth}\centering
\caption{\compas}\label{tab:info-compas}
\begin{tabular}{p{.18\textwidth}|p{.3\textwidth}|p{.5\textwidth}}
dataset size & 6171 (7214 before filtering)
\\\hline
categorical features & \emph{c-charge-degree} & values: F (felony), M (misconduct)   \\
                     & \emph{age-cat} & values: $<$25, 25--45, $>$45 \\
                     & \emph{race}  & values: African-American, Caucasian, Hispanic, Other 
\\\hline
numerical features & \emph{priors-count} & 
\\\hline
protected attribute &\textit{sex} & Female (19.0\%), Male (81.0\%)
\\\hline
target variable & \textit{two-year-recid} & 0 (54.9\%), 1 (45.1\%)
\end{tabular}
\end{subfigure}\\[2.\baselineskip]
\begin{subfigure}[b]{.95\textwidth}\centering
\caption{\drugs}\label{tab:info-drugs}
\begin{tabular}{p{.18\textwidth}|p{.3\textwidth}|p{.5\textwidth}}
dataset size & 1885
\\\hline
categorical features & ---
\\\hline
numerical features & \emph{Age}, \emph{Gender}, \emph{Education}, \emph{Country}, \emph{Ethnicity}, \emph{Nscore}, \emph{Escore}, \emph{Oscore}, \emph{Ascore}, \emph{Cscore}, \emph{Impulsive}, \emph{SS}
& (precomputed numeric values in dataset)
\\\hline
protected attribute &\textit{Gender} & female (31.0\%), male (69.0\%)
\\\hline
target variable & \textit{Coke} & never used (55.1\%), used (44.9\%)
\end{tabular}
\end{subfigure}\\[2.\baselineskip]
\begin{subfigure}[b]{.95\textwidth}\centering
\caption{\german}\label{tab:info-german}
\begin{tabular}{p{.18\textwidth}|p{.3\textwidth}|p{.5\textwidth}}
dataset size & 1000
\\\hline
categorical features & \emph{Age} &  values: $\leq$ 24, 25--34, 35--44, 45--54, 55--64, $\geq$ 65 \\
                     & \emph{Saving accounts} & little, moderate, quite rich, rich \\
                     & \emph{Checking account} & little, moderate, rich
\\\hline
numerical features & \emph{Duration}, \emph{Credit amount}
\\\hline
protected attribute &\textit{Sex} & female (31.0\%), male (69.0\%)
\\\hline
target variable & \textit{Risk} & bad (30\%), good (70\%)
\end{tabular}
\end{subfigure}\\[2.\baselineskip]
\begin{subfigure}[b]{.95\textwidth}\centering
\caption{\folktables}\label{tab:info-folktables}
\begin{tabular}{p{.18\textwidth}|p{.3\textwidth}|p{.5\textwidth}}
dataset size & 255078
\\\hline
categorical features & \emph{AGE} (age; binned) &  values: $\leq$ 14, 15--24, 25--34, 35--44, 45--54, 55--64, $\geq$ 65 \\
                     & \emph{COW} (class of worker) & values: $1,\dots,9$\\
                     & \emph{SCHL} (education) &  values: $1,\dots,24$\\
                     & \emph{MAR} (marital status) & values: married, widowed, divorced, separated, never married\\
                     & \emph{OCCP} (occupation code) & values: $0,1,\dots,9$\\
                     & \emph{POBP} (place of birth) & values: USA, other \\
                     & \emph{RELP} (relationship in household) &  values: $0,1,\dots,17$\\
                     & \emph{WKHP} (weekly working hours; binned) & values: $\leq$ 19, 20-29, 30-39, $\geq$ 40 \\
                     & \emph{RAC1P} (race code) & values: $1,\dots,9$
\\\hline
numerical features & --- 
\\\hline
protected attribute &\textit{SEX} & female (52.1\%), male (47.9\%)
\\\hline
target variable & \textit{income} & $\leq 50K$ (64.8\%), $>50K$ (35.2\%) 
\\[.5\baselineskip]
\multicolumn{3}{l}{For details of the numeric codes, see \tiny \url{https://www2.census.gov/programs-surveys/acs/tech_docs/pums/data_dict/PUMS_Data_Dictionary_2018.pdf}}
\end{tabular}
\end{subfigure}
\end{table}

\subsection{Training objectives}\label{subsec:training_objectives}
All training objectives are derived from logistic regression classifiers.
For data $S=\{(x_1,y_1),\dots,(x_n,y_n)\}\subset\R^{d}\times\{\pm 1\}$ we 
learn a prediction function $g(x)=w^\top x + b$ by solving 
\begin{align}
\min_{w\in\mathbb{R}^d,b\in\mathbb{R}} & \mathcal{L}_S(w,b) + \lambda\|w\|^2
\intertext{with}
\mathcal{L}_S(w,b) &= \frac{1}{|S|}\sum_{(x,y)\in S} y\log(1+e^{-g(x)}) + (1-y)\log(1+e^{g(x)})
\end{align}
We use the \texttt{LogisticRegression} routine of the \texttt{sklearn} package
for this, which runs a LBFGS optimizer for up to 500 iterations. 
By default, we do not use a regularizer, \ie $\lambda=0$. 
From $g(x)$ we obtain classification decisions as $f(x)=\operatorname{sign}g(x)$ 
and probability estimates as $\sigma(x;w,b)=p(y=1|x)=\frac{1}{1+e^{-g(x)}}$,
where we clip the output of $g$ to the interval $[-20,20]$ to avoid
numeric issues. 

To train with fairness regularization, we solve the optimization problem
\begin{align}
\min_{w\in\mathbb{R}^d,b\in\mathbb{R}}  &\mathcal{L}_S(w,b) + \eta |\Gamma_{S}(w,b)|_{\epsilon}  \label{eq:logreg-fairnessreg}
\intertext{with}
\Gamma_{S}(w,b) &= \frac{1}{|S^{a=0}|}\!\sum_{x\in S^{a=0}}\sigma(x;w,b) - \frac{1}{|S^{a=1}|}\!\sum_{x\in S^{a=1}}\sigma(x;w,b),
\end{align}
where for reasons of numeric stability, we use $|t|_{\epsilon}=\sqrt{\frac{t^2}{t^2+\epsilon}}$ with $\epsilon=10^{-8}$. 
To do so, we use the \texttt{scipy.minimize} routine with \texttt{bfgs}
optimizer for up to 500 iterations. 
The necessary gradients are computed automatically using \emph{jax}.\footnote{\url{https://github.com/google/jax} \textcolor{black}{(version 0.3.14)}}
To initialize $(w,b)$, we use the result of training a (fairness-unaware) 
logistic regression with $\lambda=1$, where the regularization is meant 
to ensure that the parameters do not take too extreme values. 
When estimating the disparity, we use the same objective, but 
with different datasets, $S_1,S_2$ for the two terms in \eqref{eq:logreg-fairnessreg},
with the protected attributes as target labels for $S_1$,
and the inverse of the protected attributes as target labels for $S_2$.

To train with adversarial regularization, we parameterize an adversary 
$g':\R\to\R$ as $g'(x')=w'x'+b'$ and solve the optimization problem 
\begin{align}
\min_{w\in\mathbb{R}^d,b\in\mathbb{R}}
\max_{w'\in\mathbb{R},b'\in\mathbb{R}}
  &\mathcal{L}_S(w,b) - \eta \mathcal{L'}_{S}(w',b')  \label{eq:logreg-fairnessadv}
\intertext{with}
\mathcal{L'}_{S}(w,b,w',b') &= \frac{1}{|S|}\sum_{(x,a)\in S} a\log(1+e^{-g'(g(x))}) + (1-a)\log(1+e^{g'(g(x))})
\end{align}
To do so, we use the \texttt{optax} package with gradient updates 
by the Adam rule for up to 1000 steps. The learning rates for classifier
and adversary are $0.001$. The gradients are again computed using \emph{jax}. 
We initialize $(w,b)$ the same way as for~\eqref{eq:logreg-fairnessreg}.
$(w',b')$ we simply initialize with zeros.

To perform score postprocessing, we evaluate the linear prediction 
function on the training set and determine the thresholds that result 
in a fraction of $r\in\{0,0.01,\dots,0.99,1\}$ positive decision 
separately for each protected group. 
For each $r$ we then compute the overall accuracy of the classifier 
that results from using these group-specific thresholds and select 
the value for $r$ that leads to the highest accuracy.
We then modify the classifier to use the corresponding thresholds 
for each group by adjusting the classifier weights of the protected 
attributes. 

\subsection{Baselines}
In this section, we provide more details about the baselines.

\paragraph{Robust ensemble}
For this baseline, we train $N$ classifiers, one per data source, 
using the respective base learner. 
For prediction, we compute the median value of the predicted 
probabilities and threshold it at $0.5$ to obtain a binary 
label. 
Since in our experiments the number of sources is always odd, 
this is also equivalent to classifying using the majority vote 
rule.

\paragraph{Filtering method from \citet{konstantinov2020sample}}
The method proposed in \citet{konstantinov2020sample} uses a 
filtering step to suppress unreliable sources, like we do, 
but that differs from \method's in two main aspects: 
it uses only the discrepancy score for its decisions, 
and its decision criterion is threshold-based, not 
quantile-based.

For its implementation, one first computes the 
pairwise discrepancy scores, $\disc(S_i,S_j)$, 
between all sources. 
Then, one determines a threshold, 
$t=\sqrt{\frac{8d\log(2en/d) +8\log(8N/\delta)}{n}}$,
where $d$ is the VC dimension of the hypothesis class 
(for us: the dimensionality of the feature vectors plus 1). 
$\delta$ is a freely choosable confidence parameter. In 
the limited data regime of our experiments, its value has little 
influence on the threshold, so we leave it at a default of $\delta=0.1$. 
Finally, for each source, $S_i$, we check for how many 
other sources, $S_j$, their pairwise discrepancy to $S_i$ 
is less than $t$ (\ie $\sum_{j\neq i}\bbmone\{\disc(S_i,S_j)<t\}$). 
If the number of such sources is at least $K-1$, the 
source $S_i$ is made part of the overall training set, 
otherwise is it discarded.

One can check that in the setting of our experiments, 
only for the \adult\ dataset one obtains values for $t$ 
substantially below $1$. Therefore, only for this dataset, 
the filtering step can have a non-trivial effect. 

\paragraph{DRO method from \citet{wang2020robustfairnessnoise}}
The DRO method was proposed originally for the \emph{equal opportunity}
or \emph{equalized odds} fairness measures. We adapt it to 
\emph{demographic parity} by imposing constraints on the 
fraction of positive decisions instead of the true and 
false positive rates. 

Our implementation follows the publicly available github repository,\footnote{\url{https://github.com/wenshuoguo/robust-fairness-code}}
which implements an approximate version of the method described
in the publication. 
The main step is learning a classifier with fairness constraints.
This is implemented by deriving a Lagrangian objective and performing 
simultaneous gradient descent on the classifier parameters
and gradient ascent on the Lagrange multipliers.
This construction has one hyperparameter, $\xi$, the permitted 
slack up to which the constraints have to be fulfilled.
We set this adaptively, starting with a small value $\xi=0.01$,
but then doubling $\xi$ until the optimization results in a 
non-degenerate solution (\ie not a constant classifier).

Additionally, the constraint term of the objective is optimized 
in a distributionally robust (DRO) way. For this, sample weights are 
introduced, and the Lagrangian term is maximized also with respect 
to these weights, subject to $L^1$-ball constraints around uniform 
weights, and $L^1$-simplex constraints to ensure that the weights 
encode a discrete probability distribution. 
Following the original code, we use a projected gradient algorithm
for the ball constraint, while the simplex constraint is approximated
by implicit renormalization.
The DRO also has one hyperparameter, $s$, the radius of the $L^1$-ball.
Following the derivation in the original work, we set this to twice
the maximal total variation distance between the data distribution 
of the protected attribute in the original data and in the 
manipulated data, which in our case is $s=2(1-\alpha)$.

Additional hyperparameters are the learning rates for the classifier
itself, for the Lagrangian multipliers, and for the sample weights. 
After some initial sanity checks we keep these at the values that worked
best in the original publication, which is $0.01$ in all three cases.

\paragraph{hTERM method from \citet{li2021tilted}} 
\emph{TERM (tilted empirical risk minimization)} 
learns a classifier by minimizing an exponentially weighted loss, 
$\frac{1}{t}\log\big(\frac{1}{|S|}\sum_{(x,y)\in S} e^{t \ell(y,f(x))}\big)$,
instead of the standard uniform average of losses over all samples.
For negative values of $t$, this expression acts as a \emph{softmin}, 
thereby encouraging \emph{robustness} in the sense that hard-to-classify 
outliers will be ignored.
For positive values of $t$, the effect is of a \emph{softmax}, 
which encourages \emph{fairness} in the sense that all loss
values should be comparably large.
For our experiments, we use TERM's hierarchical group-based extension (hTERM):
an outer \emph{softmin}-loss encourages robustness across 
sources, while an inner per-source \emph{softmax}-loss 
enforces \emph{fairness} across protected groups,
\begin{align}
\mathcal{L}(f) &= \frac{1}{t}\log\big(\frac{1}{N}\sum_{i=1}^N n_i e^{t R_i(f)}\big)
\quad\text{with}\quad
R_i(f) = \frac{1}{\tau}\log\big(\frac{1}{2}\!\!\!\sum_{z\in\{0,1\}} e^{\tau R^{a=z}_i(f)}\big),
\intertext{where}
R^{a=z}_i(f) &= \frac{1}{|S^{a=z}_i|}\!\!\sum_{(x,y)\in S^{a=z}_i}\!\!\!\ell(y,f(x)).
\end{align}
Following the original manuscript, we use $t=-2$ and $\tau=2$. 
To numerically solve the resulting optimization problem, 
we use the \emph{binary cross-entropy} as the loss 
function, $\ell$, and we call \texttt{sklearn}'s 
\texttt{minimize} routine with LGFBS optimization.

\subsection{Computing resources}\label{subsec:computing_resources}

All experiments were run on CPU-only compute servers. 
For each train/test split of each dataset and each 
adversary, one experimental run across all baseline 
learning methods takes between 3 minutes and 3 hours 
on two CPU cores, depending on the number of sources, 
the size of the data sets, and the CPU architecture. 
The time needed for each row in the ablation study is 
similar, except for the \folktables data, which each 
took 4-6 hours. 
The combined time for all reported experiments with linear 
classifiers (5 datasets, 12 adversaries, 10 train-test splits, 
5 base learners) is approximately 1800 core hours. 
The experiments with nonlinear classifiers required approximately 
500 times longer per setting, most of which is spent on 
cross-validation of the hyperparameters. 

For the baselines we are able to reuse many already computed 
parts. If implemented individually, we'd estimate that the 
robust ensemble would be the fastest to train, but it is
slower than the other methods at prediction time. 
\textcolor{black}{hTERM would also be efficient to train, 
as it only requires learning one classifier on the combined 
training data.} The training time for \citet{konstantinov2020sample} 
and the DRO method would be comparable to \method's.

\subsection{Hyperparameters}
We avoid hyperparameter tuning as far as possible.
We do not use $L^2$-regularization (hyperparameter $\lambda$) 
except to create initializers, where we found the value 
used to hardly matter. 
For the fairness-regularizer and fairness-adversary we 
use fixed values of $\eta=\frac12$.
We found these to result in generally fair classifiers 
for unperturbed data without causing classifiers to 
degenerate (\ie become constant).
Hence we, did not tune these values on a 
case-by-case basis. 
When estimating the disparity, we use $\eta=1$ 
to be consistent with the theory. 

As learning rate for the adversarial fairness 
training, $\text{lr}_{\text{adv}}=0.001$ was found 
by trial and error to ensure convergence at a
reasonable speed. 
Once we identified a reliably working setting, 
we did not try to tune it further. 

\subsection{Adversaries}\label{subsec:adversariesdetails}
In this section, we describe the adversaries and
their motivation in more detail.

\begin{itemize}
\item \emph{flip protected (FP)}: the adversary flips the 
value of protected attribute.

This is a straightforward attack on fairness.
FP inverts the correlation between the protected 
attribute and the rest of the data %
After the sources have been combined, 
the correlation is therefore weakened, which makes 
the training data look "less unfair". On the one hand, 
this can cause fairness-enforcing mechanisms as used, 
\eg, in postprocessing fairness, to erroneously believe 
that little or no compensation for dataset unfairness is 
required. Consequently, the resulting classifier is actually 
unfair when applied to future unmanipulated data. 
On the other hand, it is possible that the training process 
actually learns to ignore the protected attribute during 
training, because it is uncorrelated with the target labels.
This could make the classifier more fair, \eg when used 
with fairness-unaware training.

Our detailed experimental results (Fig.~\ref{extrafig:summaryresults_first}
--\ref{extrafig:summaryresults_folktables_last}) show that both of these effect do, in 
fact, occur. FP typically increases unfairness when regularization-based 
or postprocessing-based base learners are used, but it has the opposite
effect for the fairness-unaware base learner.

\item \emph{flip label (FL)}: the adversary flips the 
value of the  label.

This is a straightforward attack on accuracy. Following an analog 
reasoning as above, FL reduces the correlation between the 
target label and all other data, which makes it harder for 
the learner to identify a strong classifier. 

Indeed, the experiments shows that the FL adversary often succeeds 
in reducing the accuracy, while the fairness is relatively unaffected. 
The adverse effect is small for the large datasets (\adult, \compas), 
and larger for the small ones (\drugs, \german), presumably because 
having more data increases the robustness of the learners against 
mislabeled data. 

\item \emph{flip both (FB)}: the adversary flips the value 
of the protected attribute and the label.

This attack influences fairness and accuracy at the same time. 
It preserves the correlation between the protected attribute 
and the labels, but reduces the correlation between these
two and all the other features. Consequently, the learned classifier
might rely heavily on the protected attribute to predict the
label, which would make it maximally unfair, but potentially 
also less accurate.

Our experiments show that this is, indeed, 
often the observed effect, though the exact amount depends 
strongly on the dataset and the base learner. 

\item \emph{shuffle protected (SP)}: the adversary shuffles 
the protected attribute entries of each batch it modifies, \ie each
example gets assigned the protected attribute of another example
that has been chosen at random (without replacement).

This adversary is similar to FP in that is reduces the overall 
correlation between the protected attribute and the other data. 
Its effect is weaker, since it does not explicitly introduce
anti-correlation in the manipulated sources. However, its manipulations
are less likely to be detected by automatic or manual inspection,
since it does not change the marginal statistics of 
the data, \ie even after the manipulation, the statistical 
distribution of each feature dimension, including the protected
attribute, is the same as for clean sources. 

In experimental results, SP indeed performs similarly to FP
for the fairness-aware base learners, and its effect are 
somewhat weaker for the fairness-unaware base learner. 

\item \emph{overwrite protected (OP)}:
the adversary overwrites the protected attribute of 
each sample in the affected batch by its label.

This manipulation creates a strong artificial correlation 
between the protected attribute and the target label. 
In fact, the maximally unfair classifier that predicts the 
label directly from the protected attribute will have 
perfect accuracy on the manipulated data, and still a 
much higher accuracy than what would be correct on the 
overall training data. 
Consequently, the learned classifier might make strong 
use of the protected attribute, which leads to unfair 
and potentially incorrect decisions on clean data. 

Our experiments show that OP indeed often leads to 
large increases in unfairness. However, there are also 
cases where the unfairness is actually reduced, but 
then typically this is accompanied by loss of accuracy.

\item \emph{overwrite label (OL)}: the adversary overwrites 
the label of each sample in the affected batch by its 
protected attribute.

Like the OP adversary, this manipulation leads to a 
perfect correlation between the target labels and the 
protected attributes. However, it achieves this 
without changing the marginal distribution of the 
protected attribute, instead influencing
the statistics of the labels. Depending on the specific
situation, it might be easier or harder to detect 
from automatic or manual inspection.
OL is also more likely to negatively affect the 
accuracy, since the classifier will try to 
predict incorrect labels.

The experiments show that OL indeed almost always
reduces the accuracy, while at the same time often
increasing unfairness.

\item \emph{resample protected (RP)}: the adversary 
samples new batches of data in the following ways: 
all original samples of protected group $a=0$ with 
labels $y=1$ are replaced by data samples from other 
sources which also have $a=0$, but $y=0$. 
Analogously, all samples of group $a=1$ with labels 
$y=0$ are replaced by data samples from other sources 
with $a=1$ and $y=1$. 

Like OL and OP, RP results in a perfect correlation 
between protected attributes and labels, thereby 
facilitating unfairness and reducing accuracy. 
It does so in a more subtle and harder-to-detect way, 
however, as it achieves the effect using original data 
samples.

Indeed, in our experimental results RP influences 
fairness and accuracy in similar ways as the other
two methods. 

\item \emph{random anchor (RA0/RA1)}: 
these adversaries follow the protocol introduced in \citet{mehrabi2020exacerbating}.
RA0 first picks a random \emph{anchor}
example $x_{\text{target}}^-$ of group $a=1$ with label $y=0$
from the target source. 
It then creates a group, $\mathcal{G}_+$, of poisoned data by 
constructed new examples within a feasible set that also have $a=1$ 
and are close to $x_{\text{target}}^-$, but that have label $y=1$. 
The number of samples in $\mathcal{G}_+$ matches the number of 
samples in the target source with $a=1$. 
Subsequently, the adversary repeats the above procedure for group 
$a=0$, but with the opposite label values, resulting in a second 
group of poisoned samples $\mathcal{G}_-$. 
Both poisoned sets are then merged to yield a manipulated source 
that is meant to influence the decision boundary near the 
anchor points in a maximally unfair way. 
The adversary RA1 performs the same construction 
as RA0, but with the roles of $a=0$ and $a=1$ 
exchanged.

Given that our data sources mostly have categorical 
features, it is not possible to create realistic-looking 
new samples simply by small random perturbations. 
Instead, we define as feasible set the set of all 
samples that occur in any of the original training 
sources. 
As newly 'constructed' samples we then take those
examples with smallest Euclidean distance to the 
anchors. 

\item \emph{random (RND)}: the adversary randomly picks 
one of the strategies above (except ID) for each source.

This adversary reflects the observation that 
different sources might be manipulated in different
ways. One reason for this could be that in a real-world 
system, multiple adversaries exists who manipulate 
individual data sources without coordinating their 
actions.
Alternatively, there might be just one adversary who 
manipulates all sources, but chooses to manipulate 
them in different ways, \eg to avoid easy detection.

The experimental results show that this strategy does, 
indeed, work to some extent, with RND often having an effect 
where some of the other methods do not, but the effect is weaker.

\item \emph{identity (ID)}: the adversary makes 
no changes to the data. 

The ID adversary serves as a useful check that FLEA does not 
damage the learning process in the case that all data 
is actually clean.
It also reflects the fact that even though 
the adversary has the power to manipulate the data 
it does not have to.
Ideally, the learning method will notice this and
achieve even better results in presence of the ID 
adversary than for the \emph{oracle}. 

In the experimental results, this is effect is 
only rarely visible for any method, though.
\end{itemize}

Note that even though we introduced the adversaries above 
as intentional manipulations, many of them could also occur 
accidentally when data from different sources is collected, 
\eg as problems during data entering or numeric encoding. 

\section{Detailed algorithm for estimating $\disc$ and $\disp$}\label{sec:detailedalgo}

In this section we provide pseudocode for estimating the pairwise 
discrepancy and disparity between two data sources.
In both cases we approximate a classifier $f$ that maximizes 
a continuous relaxation of the relevant metric, and then 
estimate the actual quantity of interest from it. 
The $\disc$-maximizing classifier is trained by flipping the 
labels of one of the two sources, combining the sources into 
a single dataset, and then training a classifier to predict 
the (new) label.
The $\disp$-maximizing classifier is trained as a classifier 
that comes as close as possible to predicting the protected 
attribute $a$ on one data source, and $1-a$ on the other, while
balancing the loss from each subgroup.

\begin{minipage}{0.99\textwidth}
\begin{algorithm}[H]
\caption*{\textsc{Empirical Discrepancy Estimation}}%
\begin{algorithmic}[1]
\REQUIRE datasets $S_1,S_2$
\STATE $f \leftarrow \min_{f} \Big( \frac{1}{|S_1|}\sum\limits_{(x, y) \in S_1} \textsc{CrossEntropy}(f(x),y) + \frac{1}{|S_2|} \sum\limits_{(x, y) \in S_2} \textsc{CrossEntropy}(f(x),(1-y)) \Big)$
\STATE disc $\leftarrow \Big | \frac{1}{|S_1|}\sum\limits_{x\in S_1} \bbmone\{ (f(x) \geq 0.5)\neq y\} - \frac{1}{|S_2|}\sum\limits_{x\in S_2} \bbmone\{ (f(x) \geq 0.5) = y\} \Big | $
\ENSURE Empirical discrepancy estimate disc $\in \mathbb{R}^+$
\end{algorithmic}
\end{algorithm}
\end{minipage}
\vskip-.5\baselineskip
\begin{minipage}{0.99\textwidth}
\begin{algorithm}[H]
\caption*{\textsc{Empirical Disparity Estimation} (Demographic Parity)}%
\begin{algorithmic}[1]
\REQUIRE datasets $S_1,S_2$
\STATE $f \leftarrow \min_{f} 
\Big( \frac{1}{|S^{a=0}_1|}\sum\limits_{x\in S^{a=0}_1} \textsc{CrossEntropy}(f(x),0) 
         + \frac{1}{|S^{a=1}_1|} \sum\limits_{x\in S^{a=1}_1} \textsc{CrossEntropy}(f(x),1) \newline
         \hspace*{2cm} + \frac{1}{|S^{a=0}_2|} \sum\limits_{x\in S^{a=0}_2} \textsc{CrossEntropy}(f(x),1)
         + \frac{1}{|S^{a=1}_2|} \sum\limits_{x\in S^{a=1}_2} \textsc{CrossEntropy}(f(x),0)\Big)$
\STATE disp $\leftarrow \Big |\frac{1}{|S^{a=0}_1|}\sum\limits_{x\in S^{a=0}_1}\bbmone\{ (f(x) \geq 0.5)\}
          - \frac{1}{|S^{a=1}_1|}\sum\limits_{x\in S^{a=1}_1}\bbmone\{ (f(x) \geq 0.5)\} \newline
          \hspace*{2cm}- \frac{1}{|S^{a=0}_2|}\sum\limits_{x\in S^{a=0}_2}\bbmone\{ (f(x) \geq 0.5)\}
          + \frac{1}{|S^{a=1}_2|}\sum\limits_{x\in S^{a=1}_2}\bbmone\{ (f(x) \geq 0.5)\}\Big|$
\ENSURE Empirical discrepancy estimate disp $\in \mathbb{R}^+$
\end{algorithmic}
\end{algorithm}
\end{minipage}

\section{Detailed experimental results}\label{sec:extendedresults}
In addition to the experiments with a regularization-based base learner 
that were reported in the main manuscript, we also run experiments 
with postprocessing-based fairness, preprocessing-based fairness,
adversarial fairness, and fairness-unaware learning.
The results are depicted in Fig.~\ref{extrafig:summaryresults_first}--\ref{extrafig:summaryresults_last}
for the homogeneous setting and in 
Fig.~\ref{extrafig:summaryresults_folktables_first}--\ref{extrafig:summaryresults_folktables_last} 
for the heterogeneous setting. 
Also included are results for two of the baselines, 
robust ensemble and \citet{konstantinov2020sample}.
The DRO~\citep{wang2020robustfairnessnoise} and hTERM~\citep{li2021tilted} 
require specific learning procedures and therefore cannot be 
combined with arbitrary base learners. We report them together 
with results for the regularization-based base learners. %

The format of the figures is as follows: for each datasets
and method, we report the accuracy and fairness results 
for different adversaries. 
Each panel contains 12 bars. The left-most one in each diagram 
shows the result of the hypothetical \emph{oracle} setting, 
where the learning algorithm trains only on the clean data 
sources, \ie the ones which the adversary cannot modify. 
The remaining bars correspond to the outcome when different 
adversaries have perturbed the data.
An ideal robust method should achieve results approximately as 
good as the oracle result, as this would indicate that the 
adversary was indeed not able to negatively affect the 
learning process.

From the results, one can see that \method works almost 
perfectly in the homogeneous setting with a lot of 
data (\adult) and still quite well when the amount 
of data is limited (\compas, \drugs and \german
data). 
In the latter cases, for some adversaries \method does 
not always exactly match the \emph{oracle} results, but 
it still performs better than the baselines.
In the heterogeneous case, \method works reliably in 
all settings, except for the fairness measure when
$N-K=25$, as we had already discussed in the main manuscript.
The results also show that different base learners 
achieve different accuracy/fairness trade-offs, 
but \method is effective with each of them. 
\textcolor{black}{In a few cases, \method's results appear 
to even improve over the ones of the oracle. However, we 
do not believe this to be a systematic effect, but rather 
a case in which the adversarial perturbation were largely 
benign, and \method chooses a subset of sources that by 
random chance yields a better classifier than when using 
exactly the clean sources.}

\begin{figure*}[t]
\caption{\adult\ dataset, $N=5, N-K=2$}
\label{extrafig:summaryresults_first}
\begin{subfigure}[b]{\textwidth}\centering
\caption{regularization-based fairness}\includegraphics[width=\textwidth]{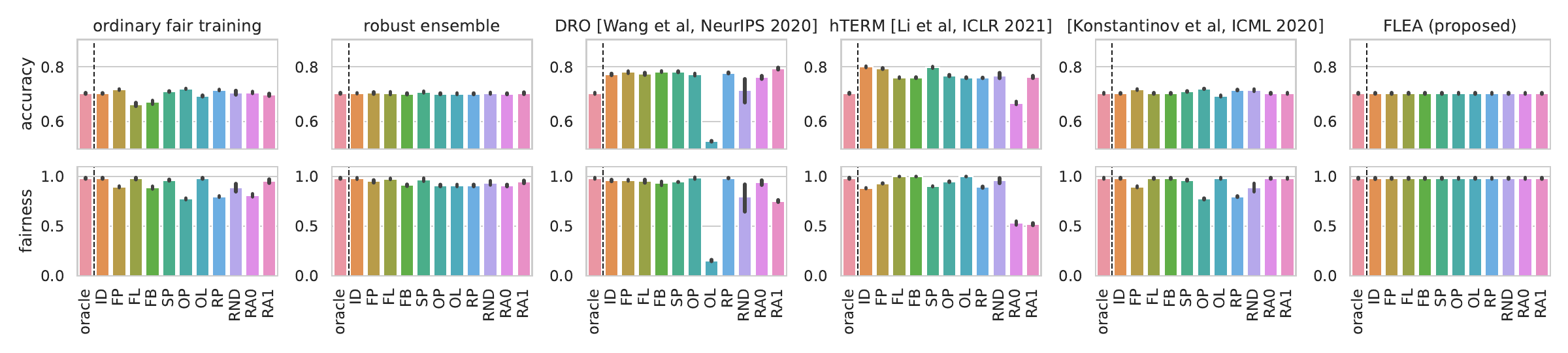}
\end{subfigure}
\begin{subfigure}[b]{\textwidth}\centering
\caption{preprocessing-based fairness}\includegraphics[width=\textwidth]{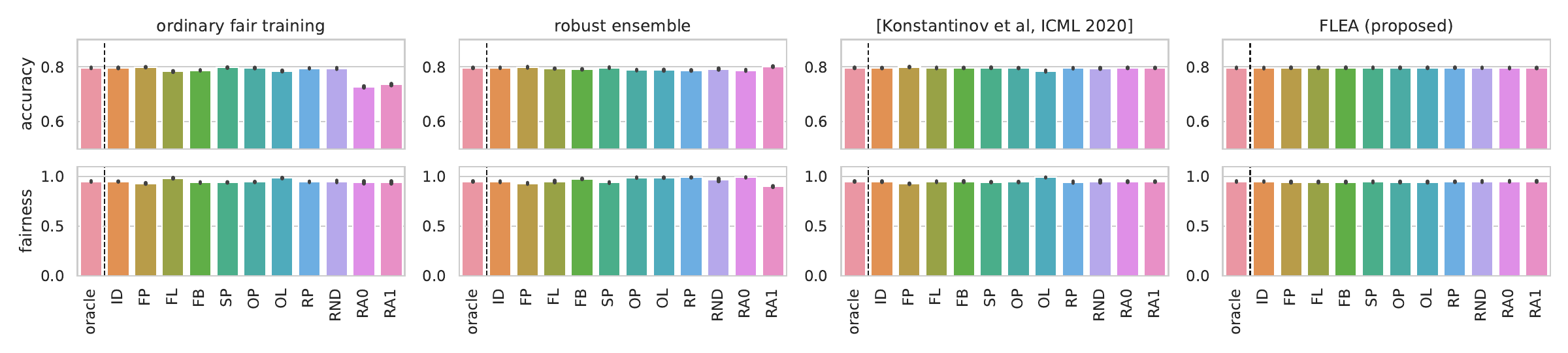}
\end{subfigure}
\begin{subfigure}[b]{\textwidth}\centering
\caption{postprocessing-based fairness}\includegraphics[width=\textwidth]{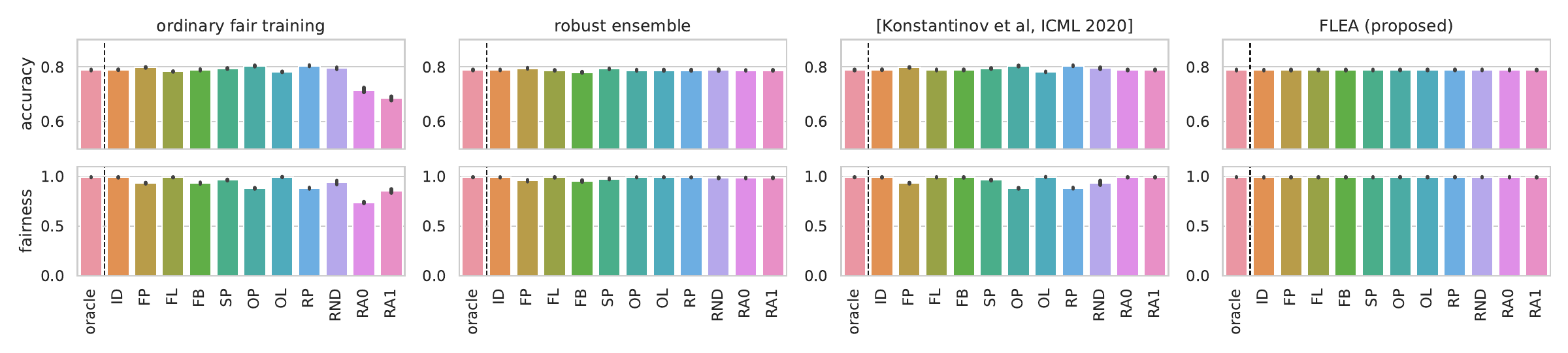}
\end{subfigure}
\begin{subfigure}[b]{\textwidth}\centering
\caption{adversarial fairness}\includegraphics[width=\textwidth]{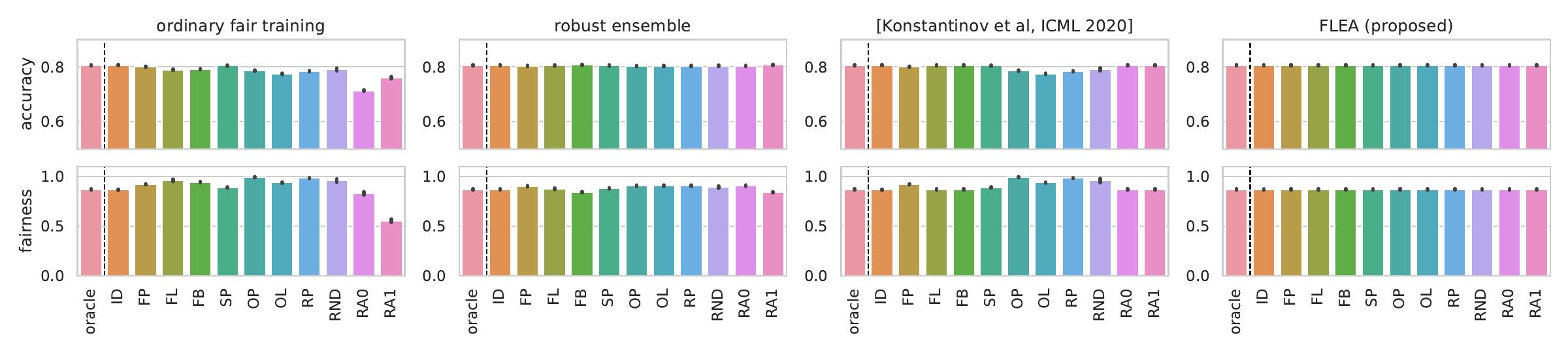}
\end{subfigure}
\begin{subfigure}[b]{\textwidth}\centering
\caption{fairness-unaware}\includegraphics[width=\textwidth]{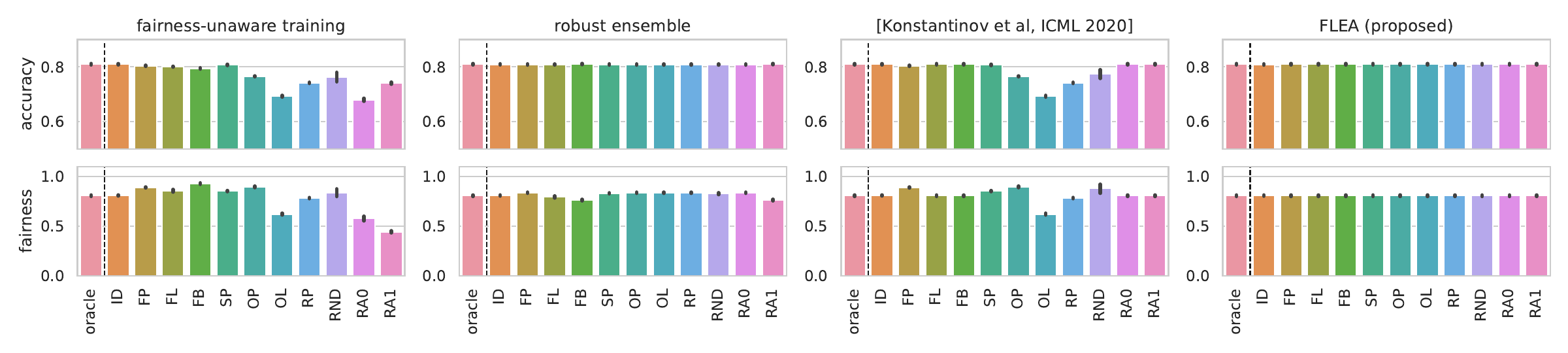}
\end{subfigure}
\end{figure*}

\begin{figure*}[t]
\caption{\compas\ dataset, $N=5, N-K=2$}
\begin{subfigure}[b]{\textwidth}\centering
\caption{regularization-based fairness}\includegraphics[width=\textwidth]{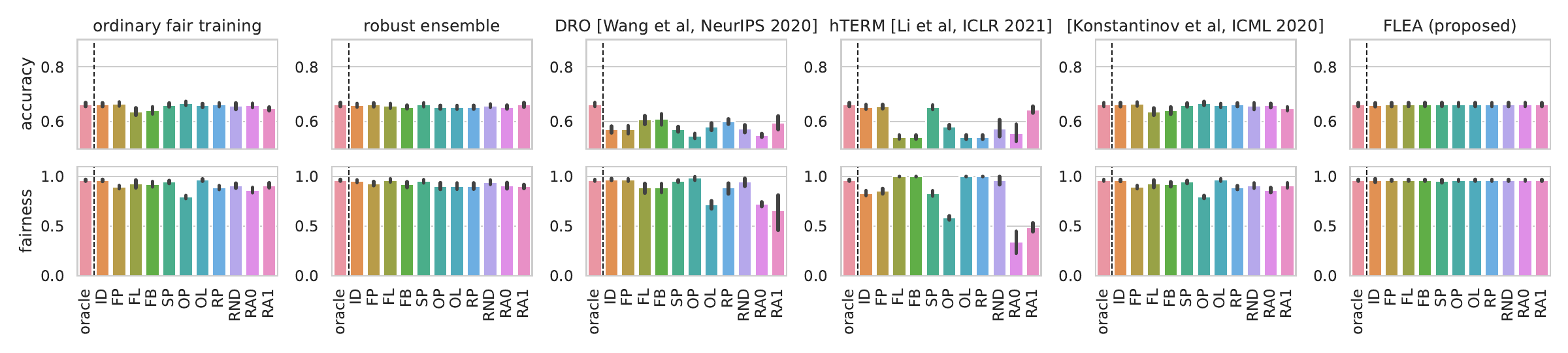}
\end{subfigure}
\begin{subfigure}[b]{\textwidth}\centering
\caption{preprocessing-based fairness}\includegraphics[width=\textwidth]{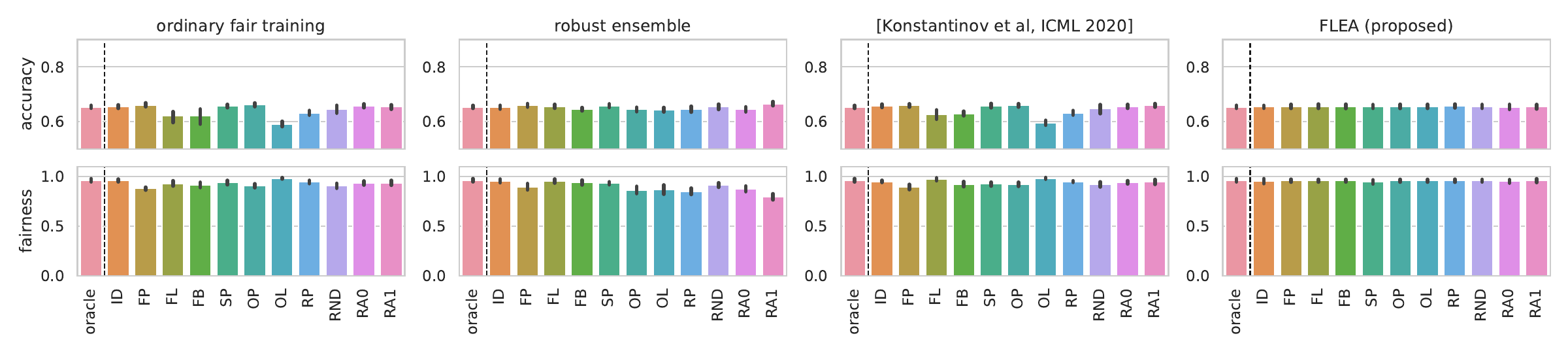}
\end{subfigure}
\begin{subfigure}[b]{\textwidth}\centering
\caption{postprocessing-based fairness}\includegraphics[width=\textwidth]{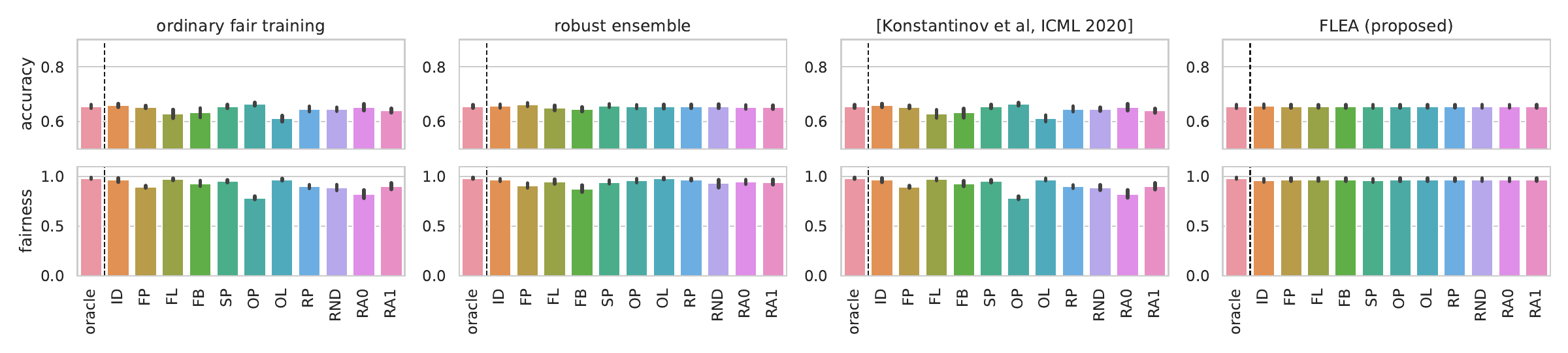}
\end{subfigure}
\begin{subfigure}[b]{\textwidth}\centering
\caption{adversarial fairness}\includegraphics[width=\textwidth]{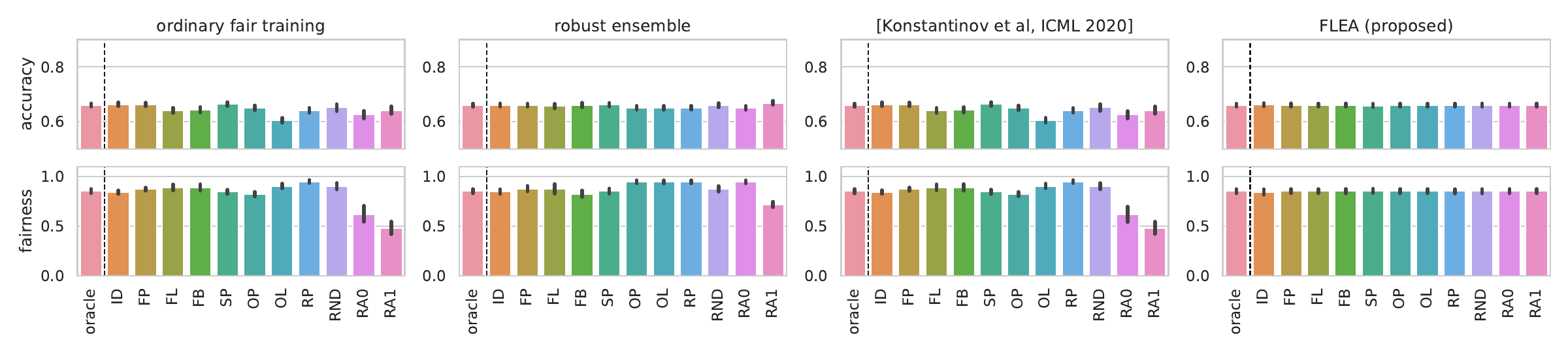}
\end{subfigure}
\begin{subfigure}[b]{\textwidth}\centering
\caption{fairness-unaware}\includegraphics[width=\textwidth]{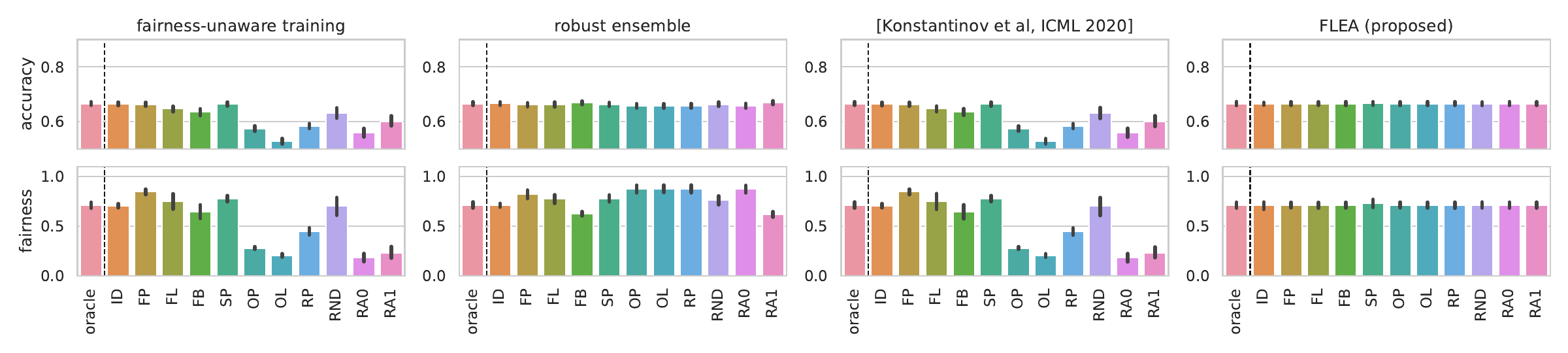}
\end{subfigure}
\end{figure*}

\begin{figure*}[t]
\caption{\drugs\ dataset, $N=5, N-K=2$}
\begin{subfigure}[b]{\textwidth}\centering
\caption{regularization-based fairness}\includegraphics[width=\textwidth]{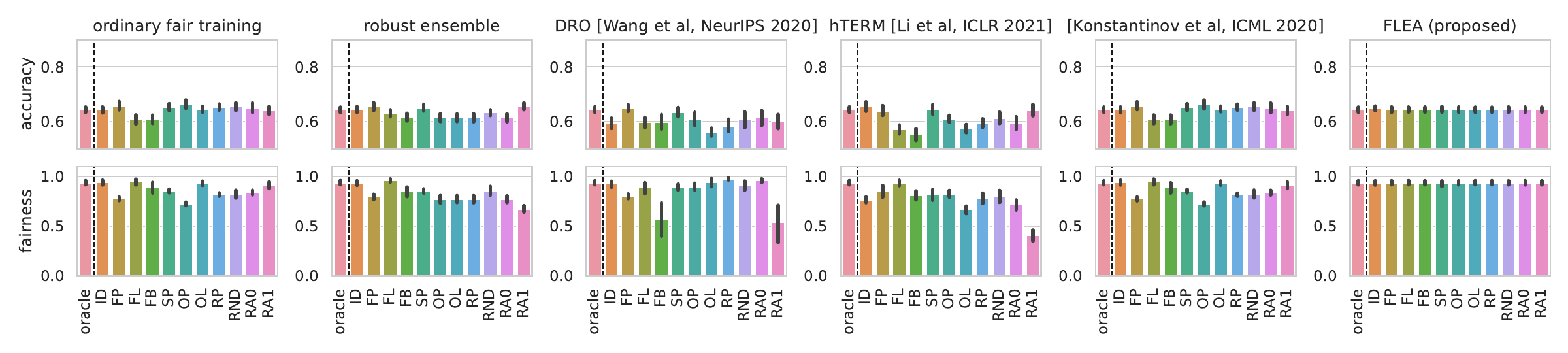}
\end{subfigure}
\begin{subfigure}[b]{\textwidth}\centering
\caption{preprocessing-based fairness}\includegraphics[width=\textwidth]{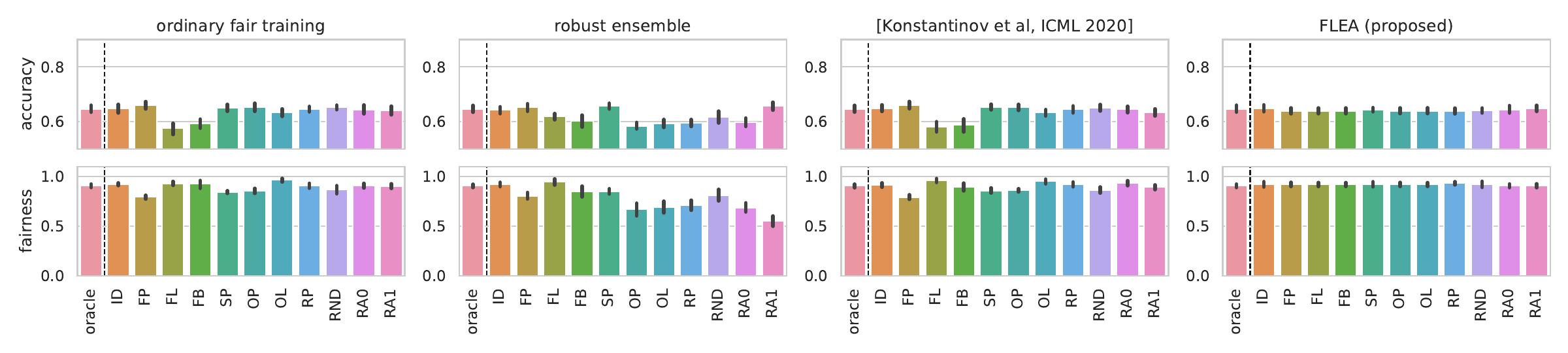}
\end{subfigure}
\begin{subfigure}[b]{\textwidth}\centering
\caption{postprocessing-based fairness}\includegraphics[width=\textwidth]{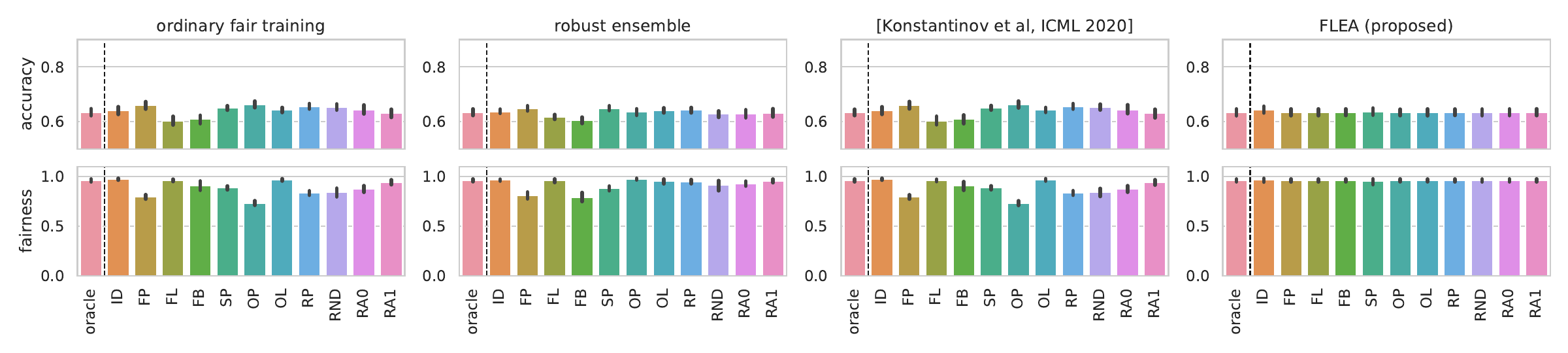}
\end{subfigure}
\begin{subfigure}[b]{\textwidth}\centering
\caption{adversarial fairness}\includegraphics[width=\textwidth]{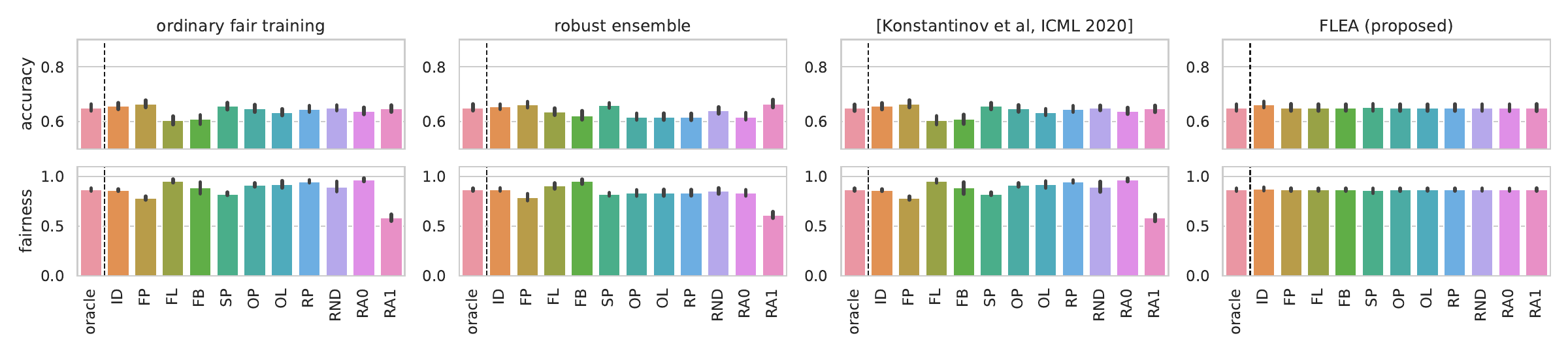}
\end{subfigure}
\begin{subfigure}[b]{\textwidth}\centering
\caption{fairness-unaware}\includegraphics[width=\textwidth]{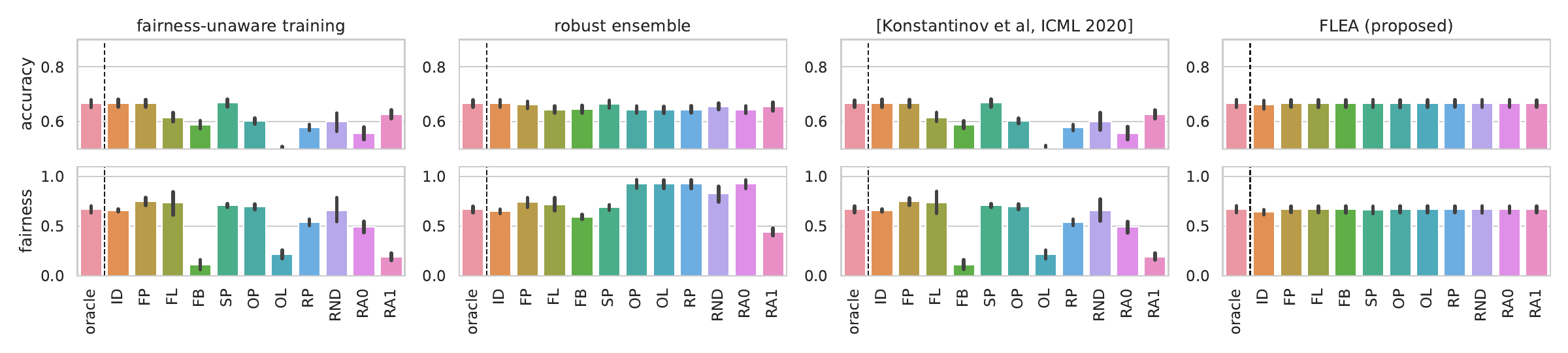}
\end{subfigure}
\end{figure*}

\begin{figure*}[t]
\caption{\german\ dataset, $N=5, N-K=2$}\label{extrafig:summaryresults_last}
\begin{subfigure}[b]{\textwidth}\centering
\caption{regularization-based fairness}\includegraphics[width=\textwidth]{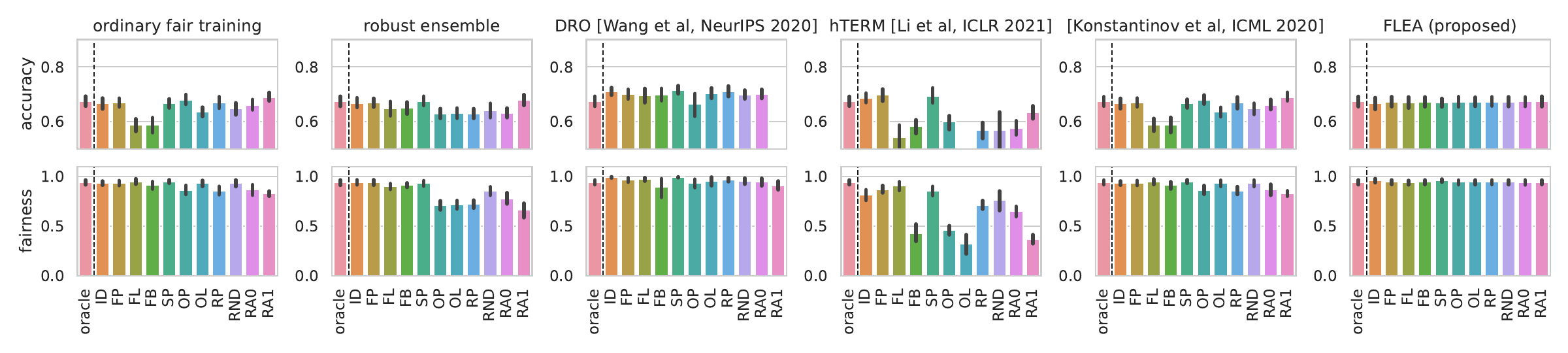}
\end{subfigure}
\begin{subfigure}[b]{\textwidth}\centering
\caption{preprocessing-based fairness}\includegraphics[width=\textwidth]{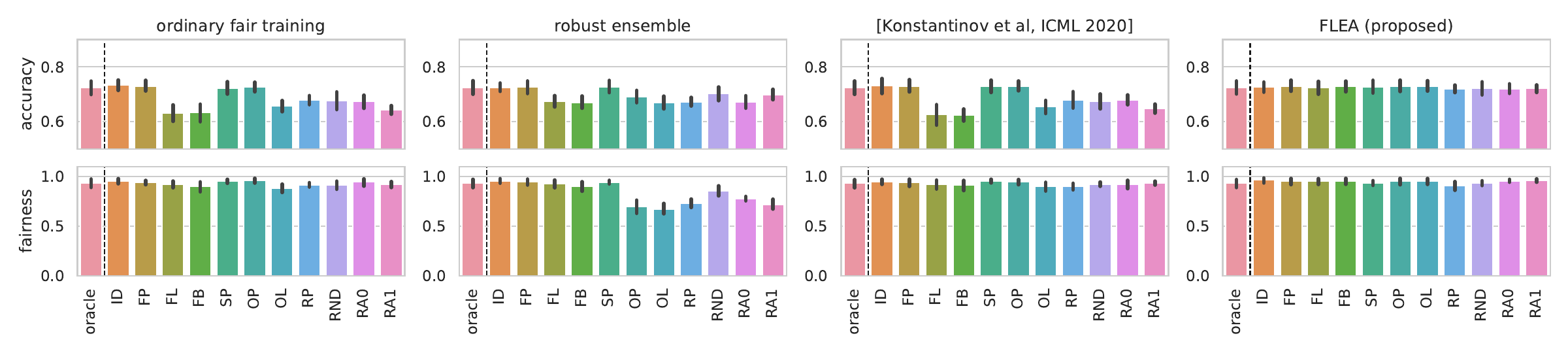}
\end{subfigure}
\begin{subfigure}[b]{\textwidth}\centering
\caption{postprocessing-based fairness}\includegraphics[width=\textwidth]{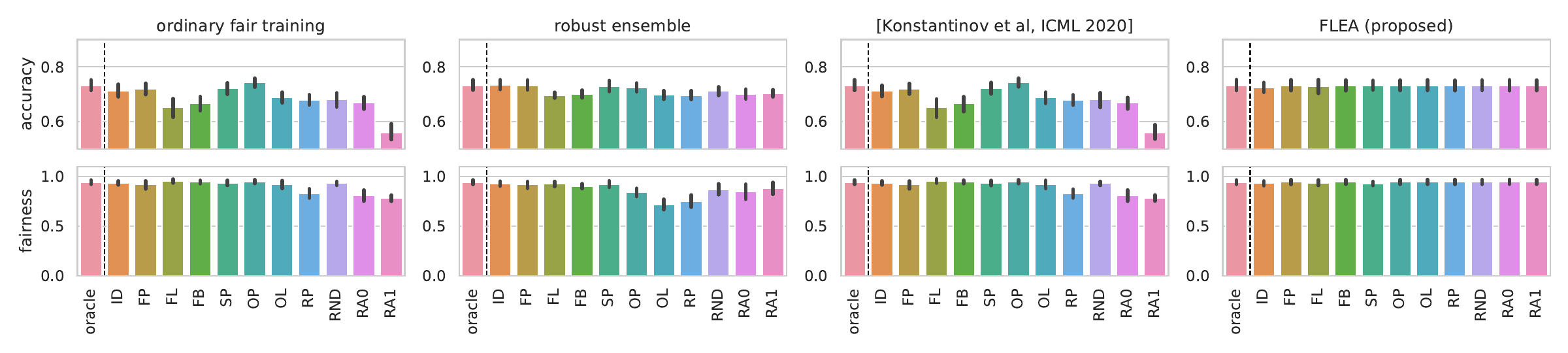}
\end{subfigure}
\begin{subfigure}[b]{\textwidth}\centering
\caption{adversarial fairness}\includegraphics[width=\textwidth]{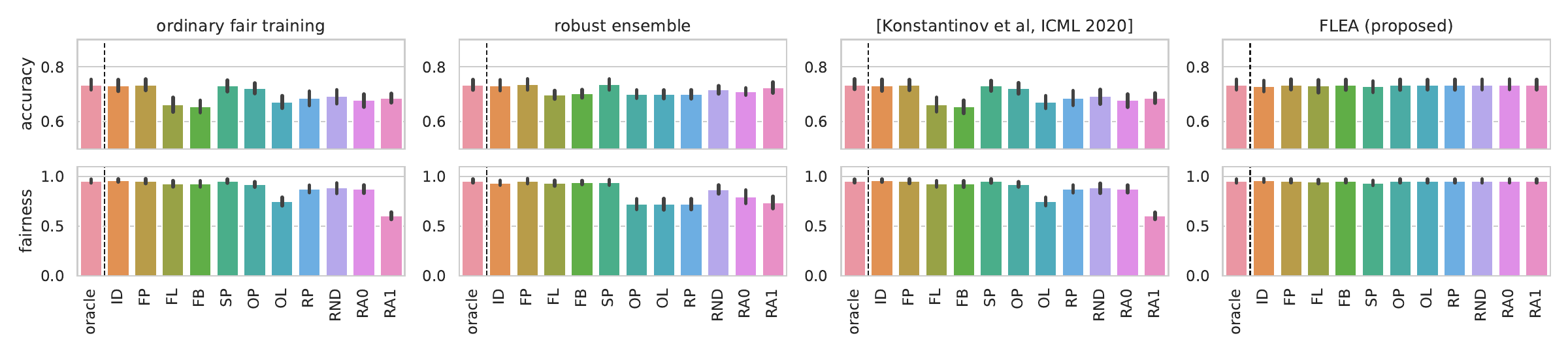}
\end{subfigure}
\begin{subfigure}[b]{\textwidth}\centering
\caption{fairness-unaware}\includegraphics[width=\textwidth]{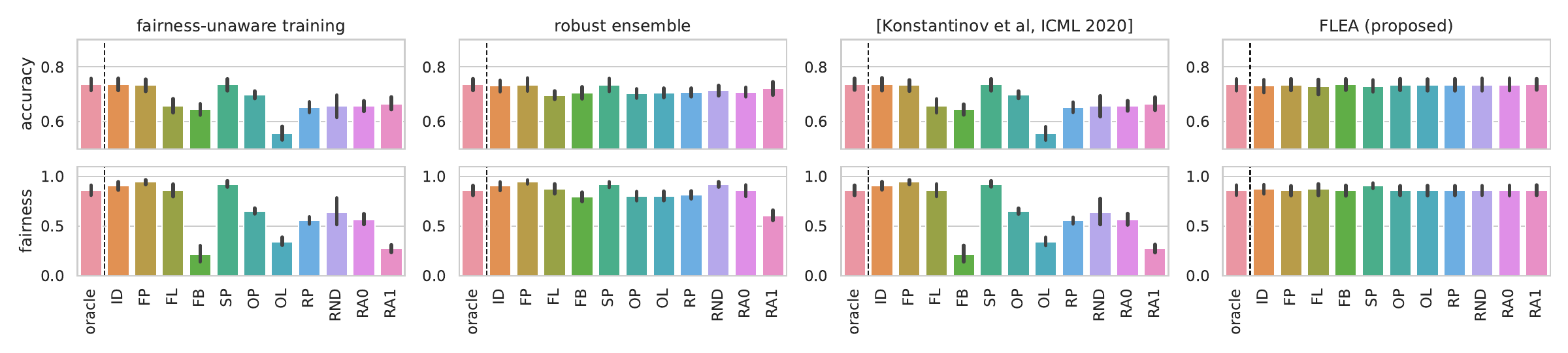}
\end{subfigure}
\end{figure*}

\begin{figure*}[t]
\caption{\folktables\ dataset, regularization-based fairness}
\label{extrafig:summaryresults_folktables_first} 
\begin{subfigure}[b]{\textwidth}\centering
\caption{$N=51, N-K=5$}\includegraphics[width=\textwidth]{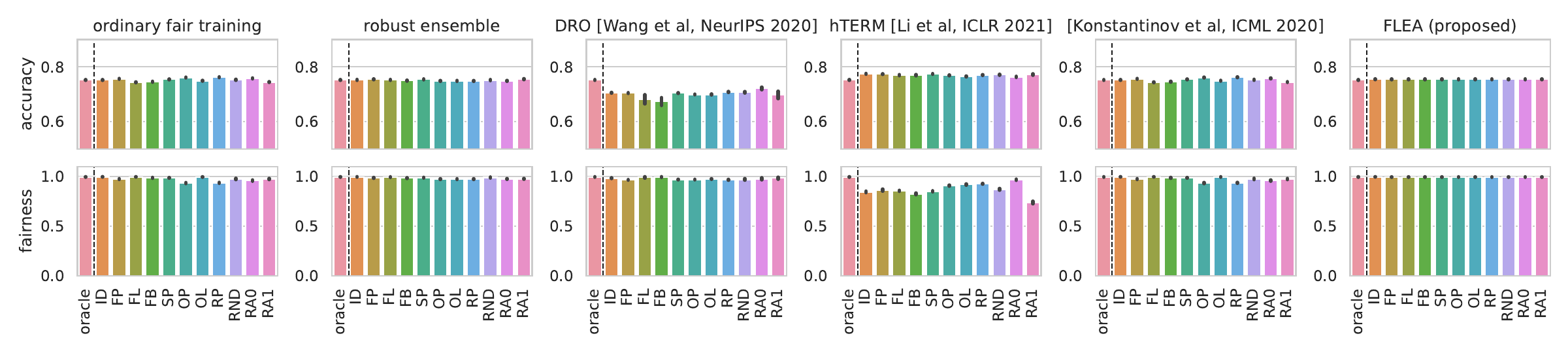}
\end{subfigure}
\begin{subfigure}[b]{\textwidth}\centering
\caption{$N=51, N-K=10$}\includegraphics[width=\textwidth]{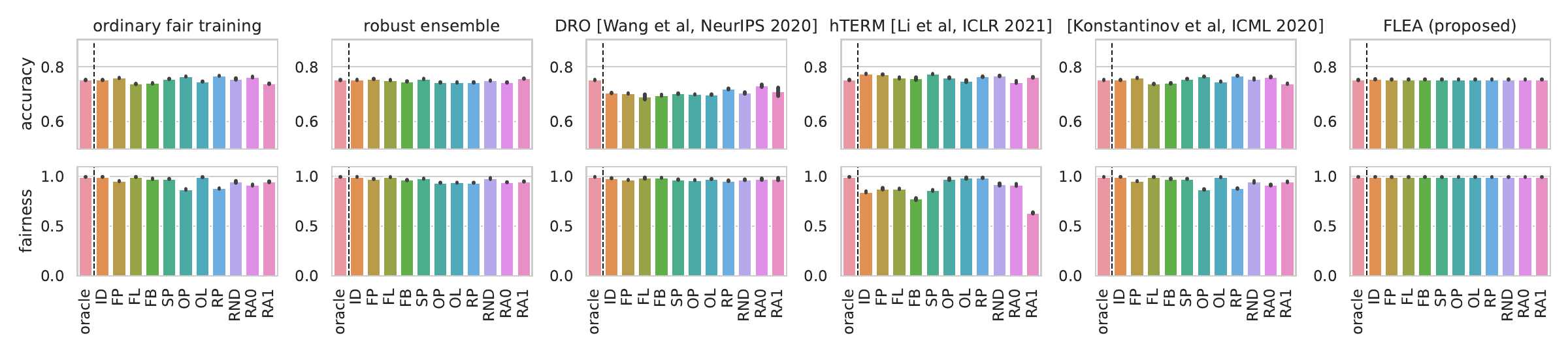}
\end{subfigure}
\begin{subfigure}[b]{\textwidth}\centering
\caption{$N=51, N-K=15$}\includegraphics[width=\textwidth]{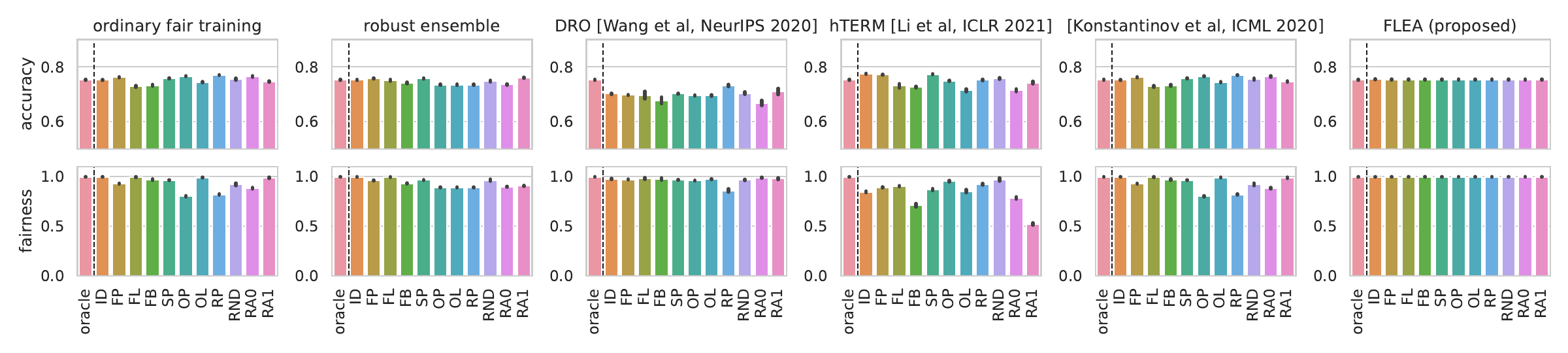}
\end{subfigure}
\begin{subfigure}[b]{\textwidth}\centering
\caption{$N=51, N-K=20$}\includegraphics[width=\textwidth]{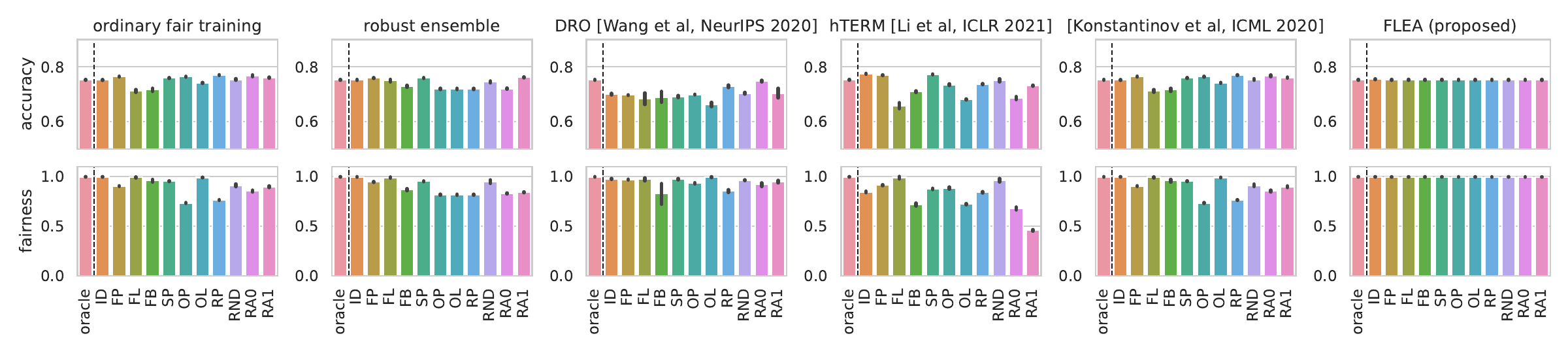}
\end{subfigure}
\begin{subfigure}[b]{\textwidth}\centering
\caption{$N=51, N-K=25$}\includegraphics[width=\textwidth]{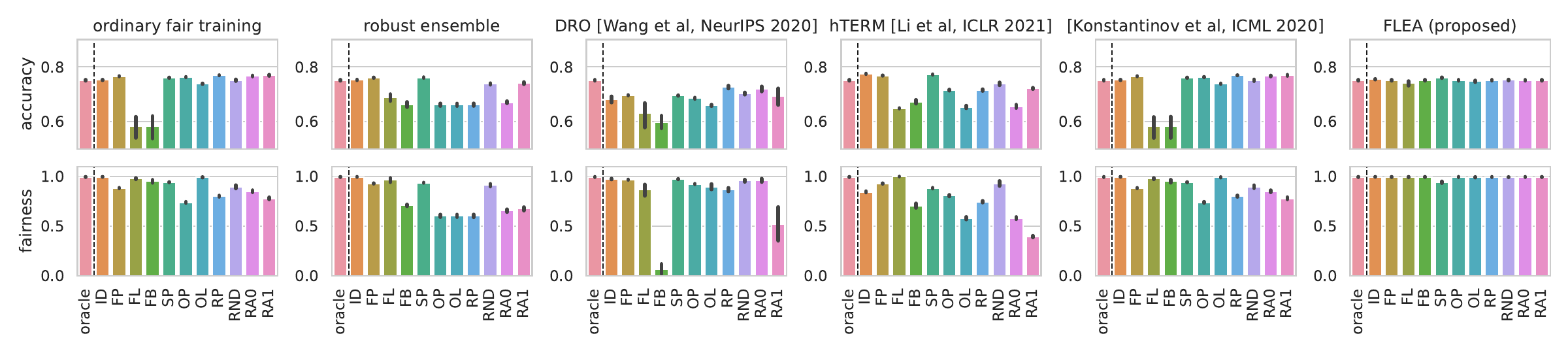}
\end{subfigure}
\end{figure*}

\begin{figure*}[t]
\caption{\folktables\ dataset, preprocessing-based fairness}
\begin{subfigure}[b]{\textwidth}\centering
\caption{$N=51, N-K=5$}\includegraphics[width=\textwidth]{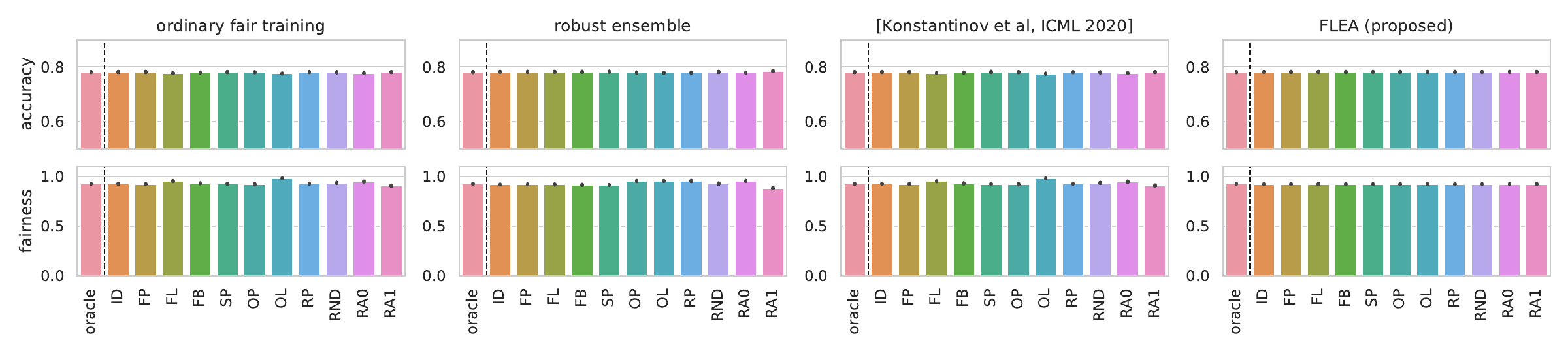}
\end{subfigure}
\begin{subfigure}[b]{\textwidth}\centering
\caption{$N=51, N-K=10$}\includegraphics[width=\textwidth]{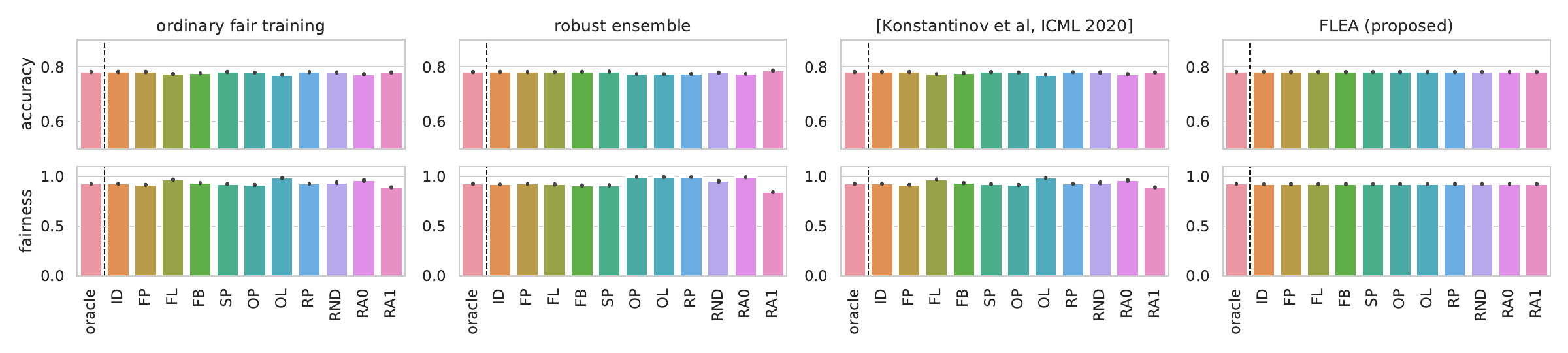}
\end{subfigure}
\begin{subfigure}[b]{\textwidth}\centering
\caption{$N=51, N-K=15$}\includegraphics[width=\textwidth]{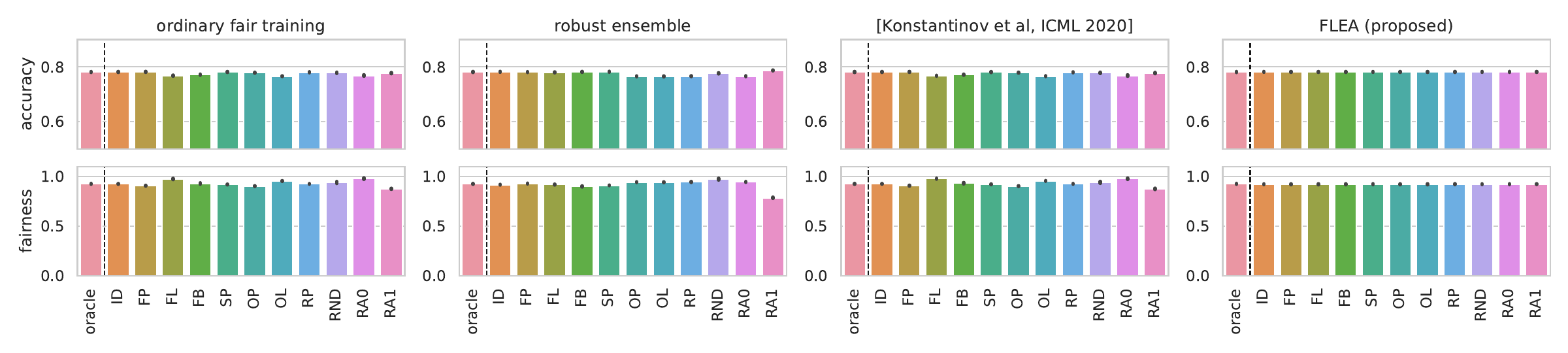}
\end{subfigure}
\begin{subfigure}[b]{\textwidth}\centering
\caption{$N=51, N-K=20$}\includegraphics[width=\textwidth]{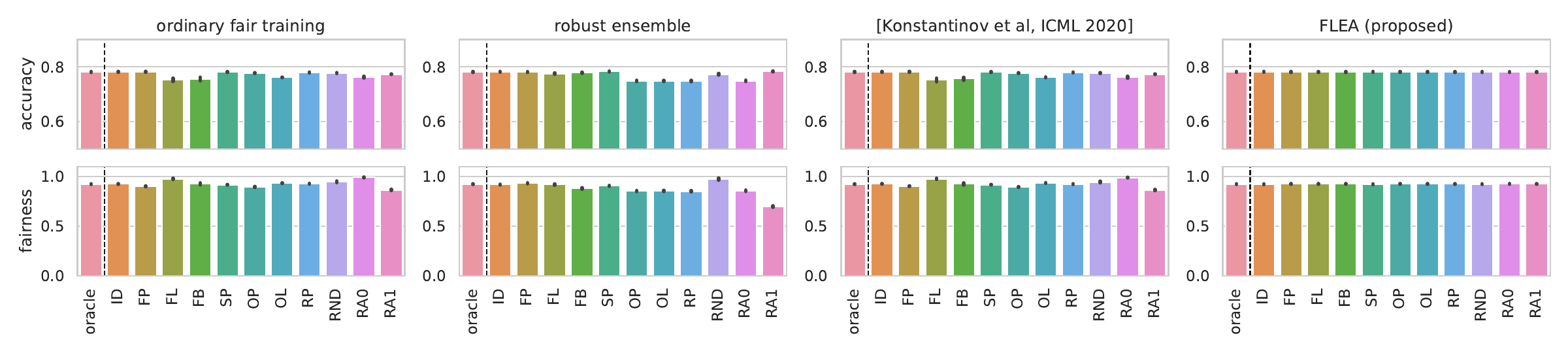}
\end{subfigure}
\begin{subfigure}[b]{\textwidth}\centering
\caption{$N=51, N-K=25$}\includegraphics[width=\textwidth]{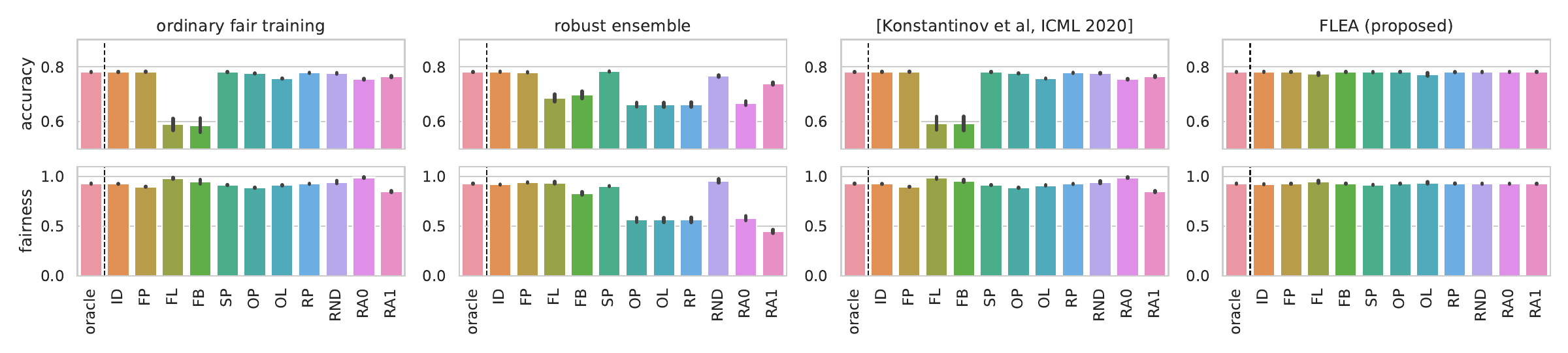}
\end{subfigure}
\end{figure*}

\begin{figure*}[t]
\caption{\folktables\ dataset, postprocessing-based fairness}
\begin{subfigure}[b]{\textwidth}\centering
\caption{$N=51, N-K=5$}\includegraphics[width=\textwidth]{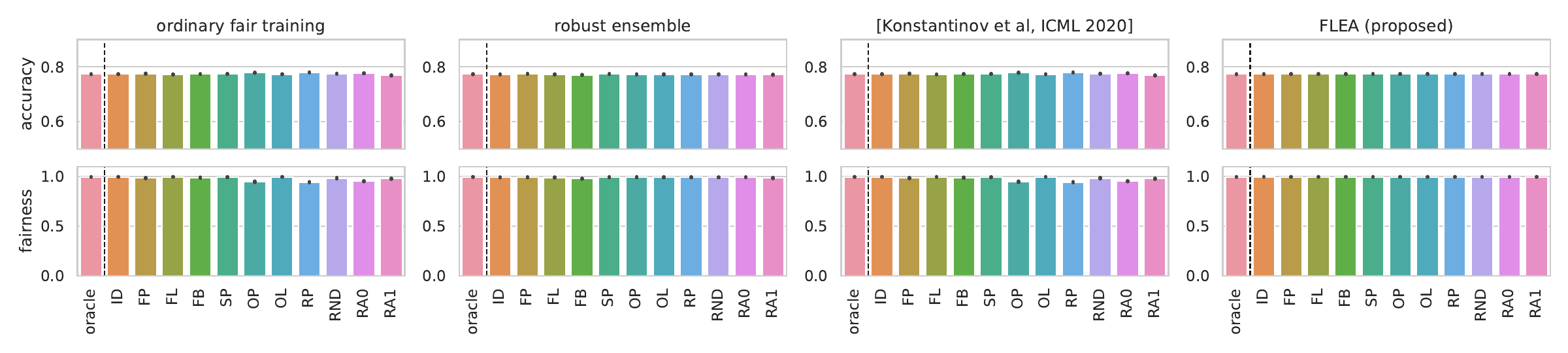}
\end{subfigure}
\begin{subfigure}[b]{\textwidth}\centering
\caption{$N=51, N-K=10$}\includegraphics[width=\textwidth]{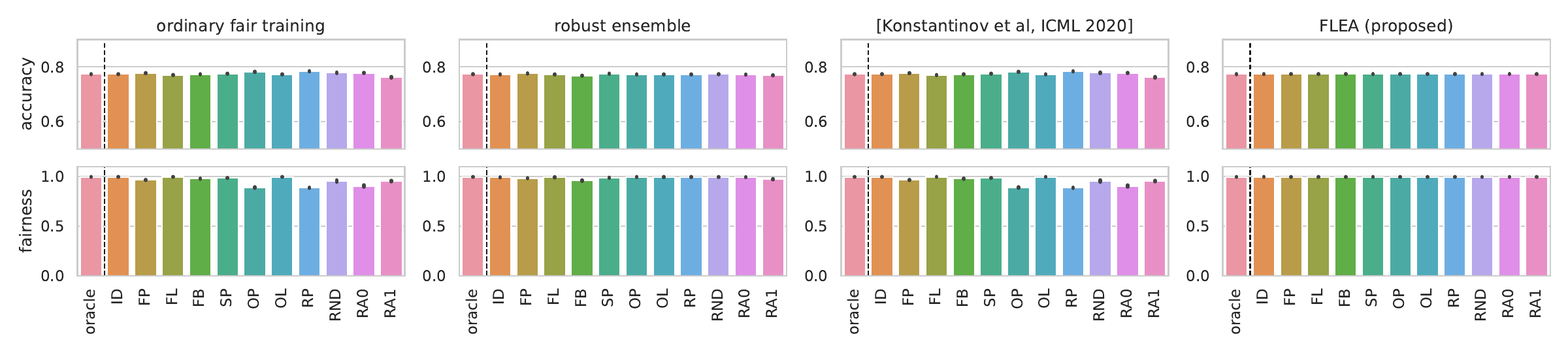}
\end{subfigure}
\begin{subfigure}[b]{\textwidth}\centering
\caption{$N=51, N-K=15$}\includegraphics[width=\textwidth]{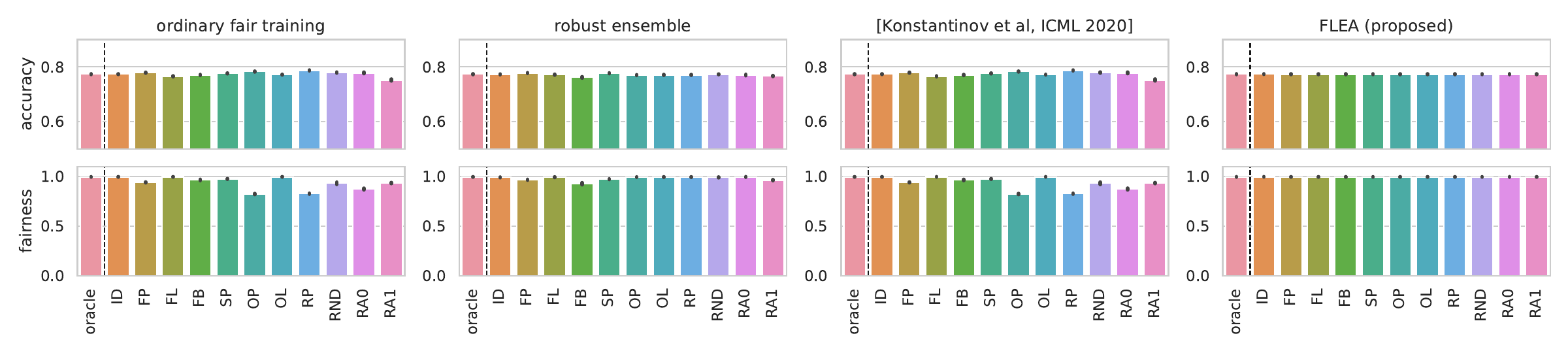}
\end{subfigure}
\begin{subfigure}[b]{\textwidth}\centering
\caption{$N=51, N-K=20$}\includegraphics[width=\textwidth]{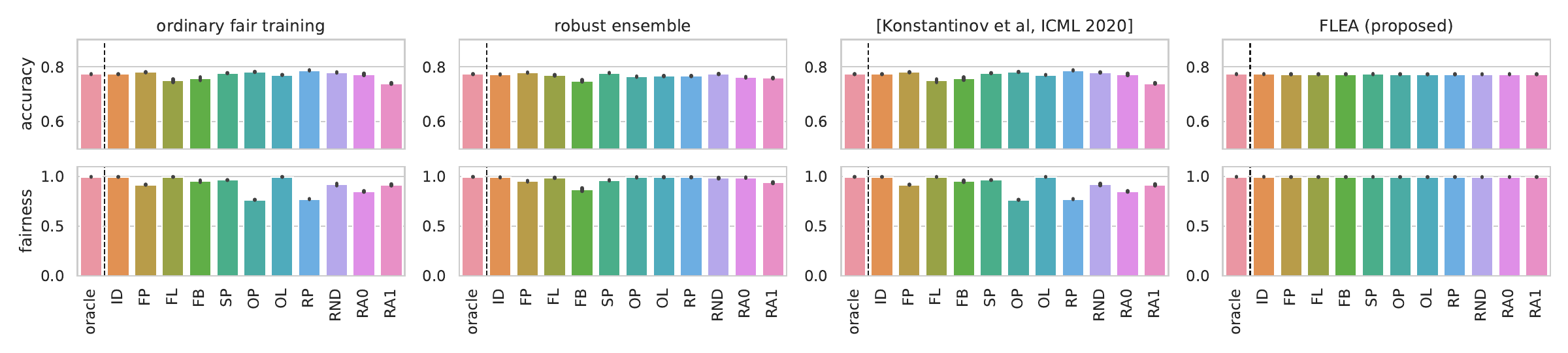}
\end{subfigure}
\begin{subfigure}[b]{\textwidth}\centering
\caption{$N=51, N-K=25$}\includegraphics[width=\textwidth]{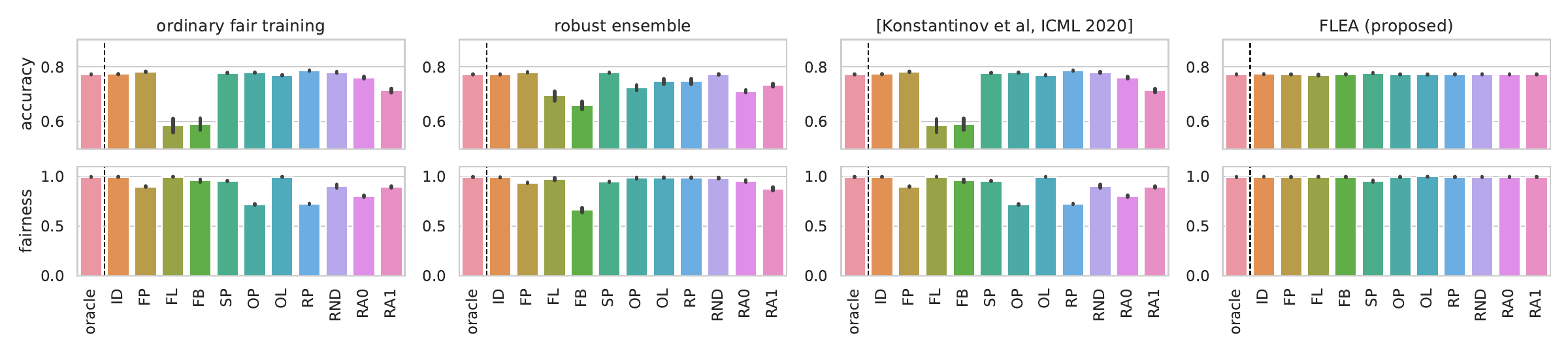}
\end{subfigure}
\end{figure*}

\begin{figure*}[t]
\caption{\folktables\ dataset, adversarial fairness}
\begin{subfigure}[b]{\textwidth}\centering
\caption{$N=51, N-K=5$}\includegraphics[width=\textwidth]{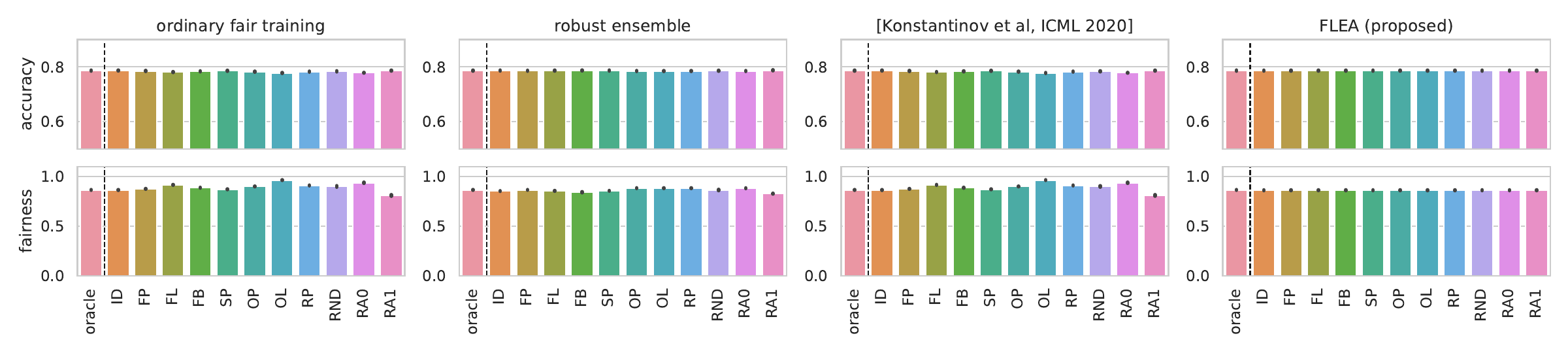}
\end{subfigure}
\begin{subfigure}[b]{\textwidth}\centering
\caption{$N=51, N-K=10$}\includegraphics[width=\textwidth]{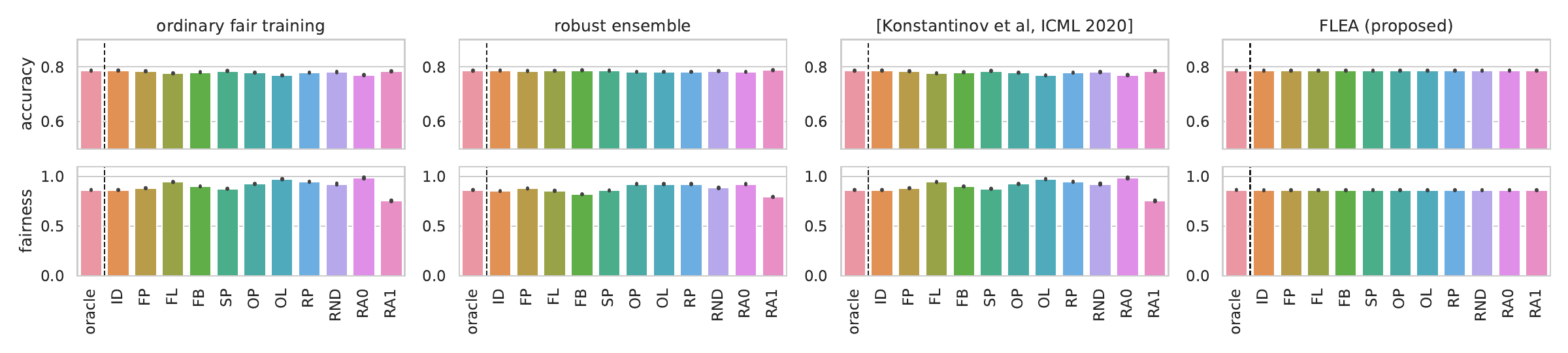}
\end{subfigure}
\begin{subfigure}[b]{\textwidth}\centering
\caption{$N=51, N-K=15$}\includegraphics[width=\textwidth]{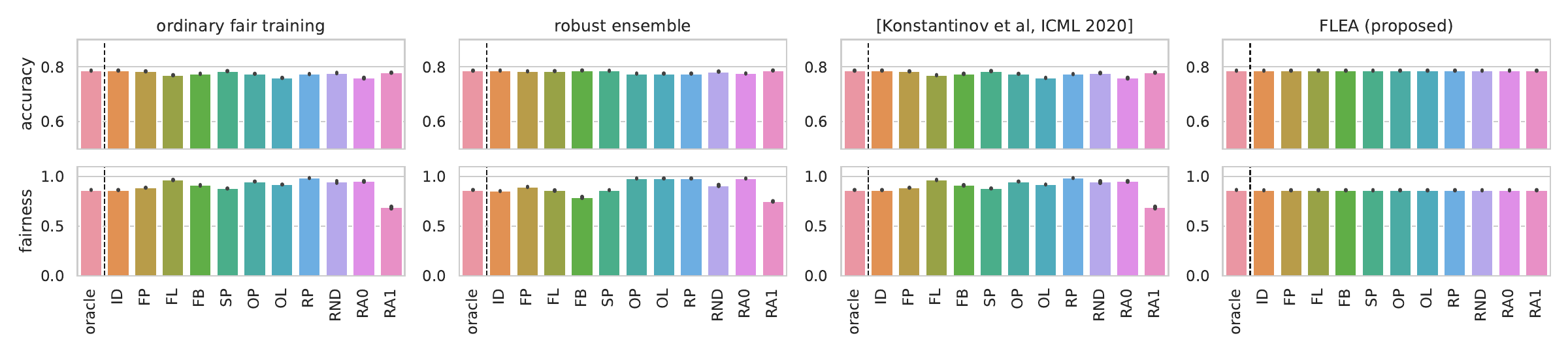}
\end{subfigure}
\begin{subfigure}[b]{\textwidth}\centering
\caption{$N=51, N-K=20$}\includegraphics[width=\textwidth]{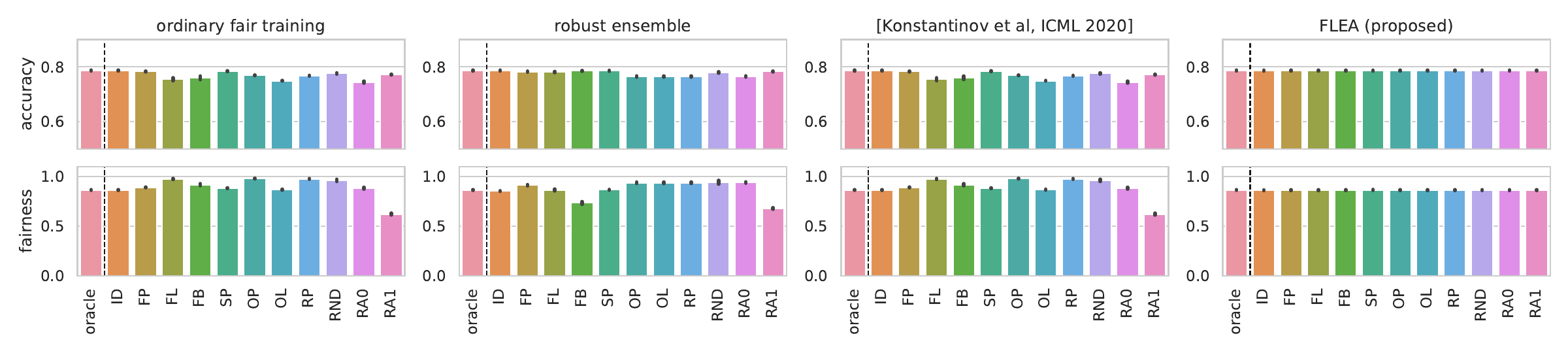}
\end{subfigure}
\begin{subfigure}[b]{\textwidth}\centering
\caption{$N=51, N-K=25$}\includegraphics[width=\textwidth]{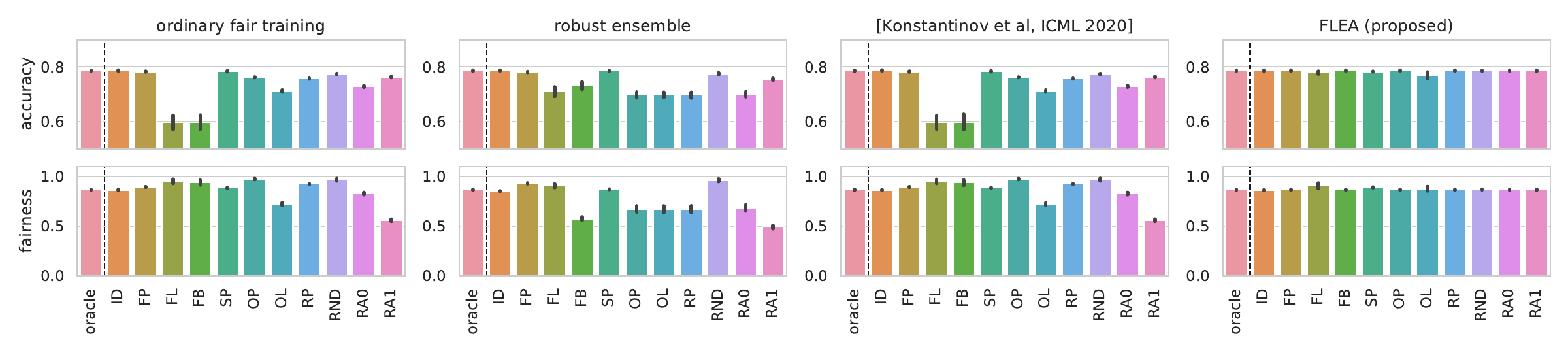}
\end{subfigure}
\end{figure*}

\begin{figure*}[t]
\caption{\folktables\ dataset, fairness-unaware}
\label{extrafig:summaryresults_folktables_last} 
\begin{subfigure}[b]{\textwidth}\centering
\caption{$N=51, N-K=5$}\includegraphics[width=\textwidth]{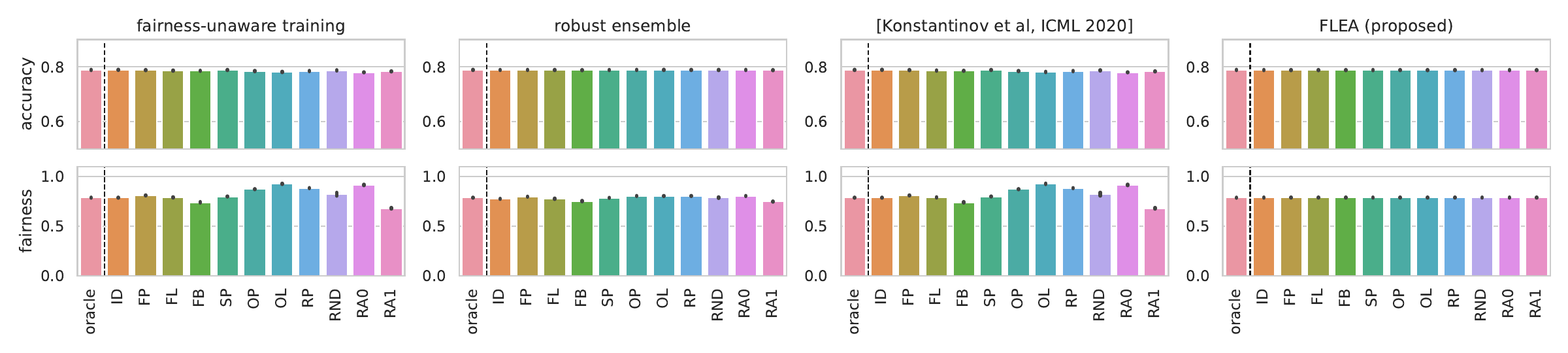}
\end{subfigure}
\begin{subfigure}[b]{\textwidth}\centering
\caption{$N=51, N-K=10$}\includegraphics[width=\textwidth]{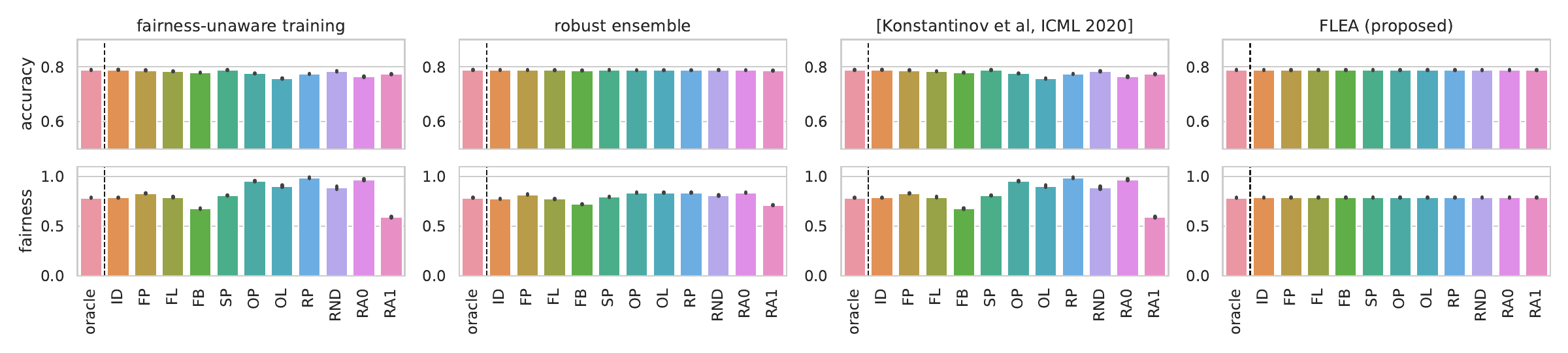}
\end{subfigure}
\begin{subfigure}[b]{\textwidth}\centering
\caption{$N=51, N-K=15$}\includegraphics[width=\textwidth]{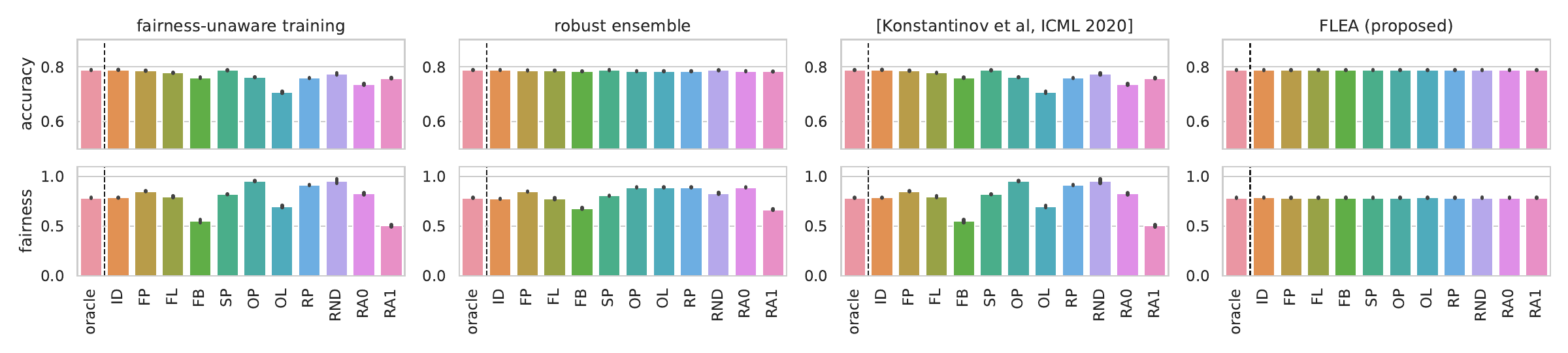}
\end{subfigure}
\begin{subfigure}[b]{\textwidth}\centering
\caption{$N=51, N-K=20$}\includegraphics[width=\textwidth]{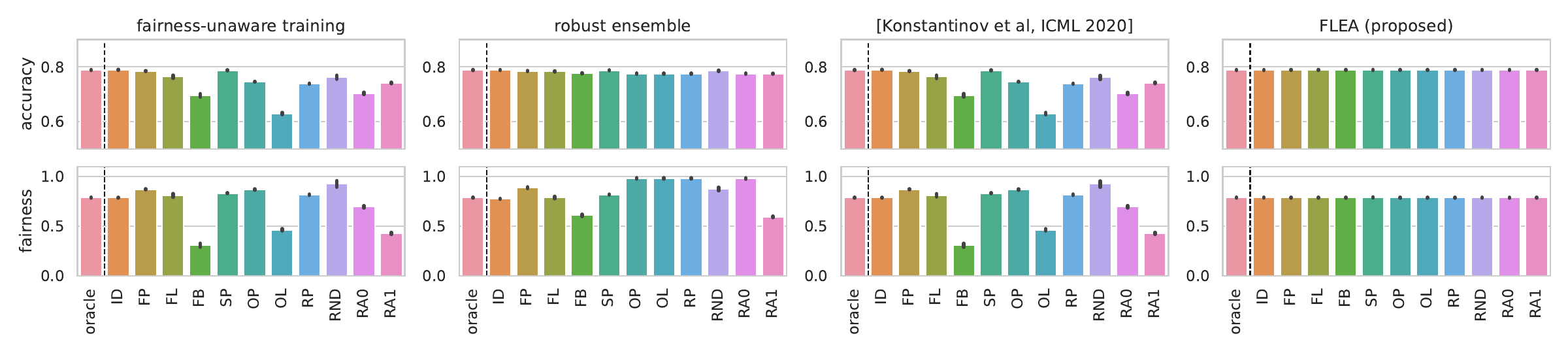}
\end{subfigure}
\begin{subfigure}[b]{\textwidth}\centering
\caption{$N=51, N-K=25$}\includegraphics[width=\textwidth]{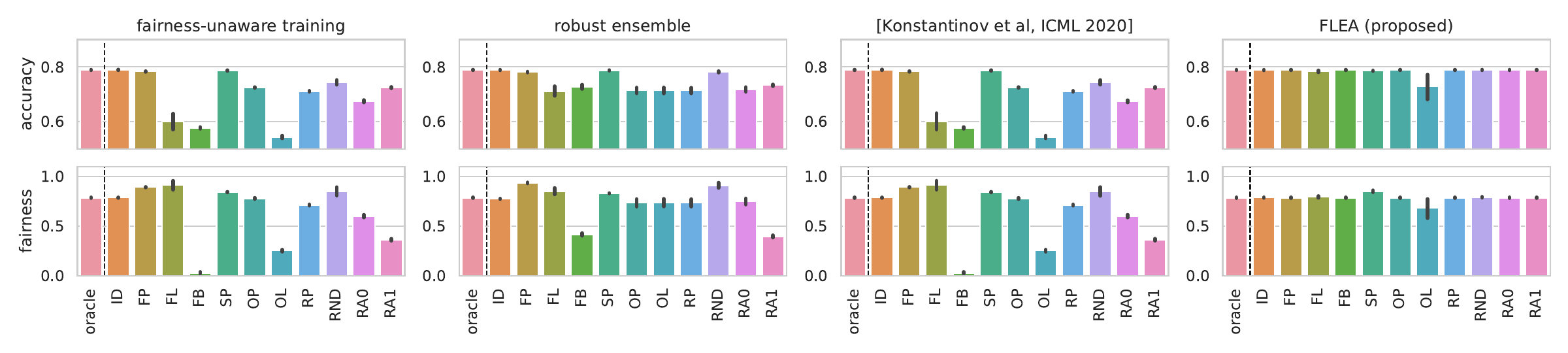}
\end{subfigure}
\end{figure*}

\clearpage
\section{Additional Results}
While our experimental evaluation in this work focussed on the 
setting of linear classification, \method is also applicable 
in combination with nonlinear classifiers. However, this 
leads to increased computational cost, and also the size of the 
training sources would have to be larger to effectively estimate 
the $\disc$ and $\disp$ measures. %
A compromise is to perform \method's filtering step 
with respect to linear classifiers but afterwards train a
non-linear classifier on the resulting combined training set. 
We observed this setup to work well in practice, even 
though the theoretical guarantees do not hold.

\begin{table*}[t]\centering\footnotesize
\caption{Results of \method and baselines for robust fairness-aware multisource learning using 
preprocessing fairness and a non-linear (gradient boosted decision trees) classifer. 
See Table~\ref{tab:results} for an explanation of setting and entries.}
\label{tab:results-nonlinear}
\begin{subfigure}{\textwidth}\centering
\caption{\adult, \compas, \drugs and \german\ datasets with 
$5$ sources of which $2$ are unreliable.}\label{tab:results-nonlinear-homogeneous}
\begin{tabular}{l|ll|ll|ll|ll}
& \multicolumn{2}{c}{\adult} & \multicolumn{2}{c}{\compas} & \multicolumn{2}{c}{\drugs} & \multicolumn{2}{c}{\german}
\\
\textbf{method} & accuracy & fairness & accuracy & fairness & accuracy & fairness & accuracy & fairness 
\\\hline
naive	         & $66.2_{\pm 0.7}$ & $86.9_{\pm 1.8}$
                 & $55.6_{\pm 2.6}$ & $83.3_{\pm 4.7}$
                 & $60.0_{\pm 2.5}$ & $77.8_{\pm 3.7}$ 
                 & $53.9_{\pm 2.6}$ & $76.2_{\pm 3.8}$
\\\hline
robust ensemble  & $74.4_{\pm 0.7}$& $71.8_{\pm 2.3}$
                 & $56.3_{\pm 2.5}$& $41.2_{\pm 5.7}$ 
                 & $54.9_{\pm 2.3}$& $36.5_{\pm 5.2}$
                 & $58.8_{\pm 1.8}$& $48.0_{\pm 4.1}$
\\
\nolink{\citep{konstantinov2020sample}}& $77.5_{\pm 0.4}$ & $90.9_{\pm 1.7}$
                                       & $56.2_{\pm 3.1}$ & $81.8_{\pm 6.3}$
                                       & $52.7_{\pm 1.6}$ & $79.4_{\pm 2.2}$ 
                                       & $53.3_{\pm 3.0}$ & $79.0_{\pm 4.1}$
\\\hline
FLEA (proposed)             & $77.9_{\pm 0.4}$ & $92.1_{\pm 1.1}$
                            & $62.3_{\pm 0.9}$ & $87.8_{\pm 5.4}$
                            & $62.8_{\pm 1.6}$ & $80.6_{\pm 3.3}$ 
                            & $64.5_{\pm 6.6}$ & $84.4_{\pm 3.7}$
\\\hline
oracle	        & $78.5_{\pm 0.4}$ & $93.3_{\pm 1.5}$
                & $64.8_{\pm 1.4}$ & $93.0_{\pm 5.9}$
                & $64.1_{\pm 2.1}$ & $88.1_{\pm 4.6}$ 
                & $69.0_{\pm 3.8}$ & $92.8_{\pm 4.7}$
                
\end{tabular}
\end{subfigure}
\end{table*}

\begin{figure*}[t]
\caption{Nonlinear (gradient-boosted decision trees) classifier, preprocessing-based fairness}
\label{extrafig:summaryresults_nonlinear_first}
\begin{subfigure}[b]{\textwidth}\centering
\caption{\adult ($N=5, N-K=2$)}\includegraphics[width=\textwidth]{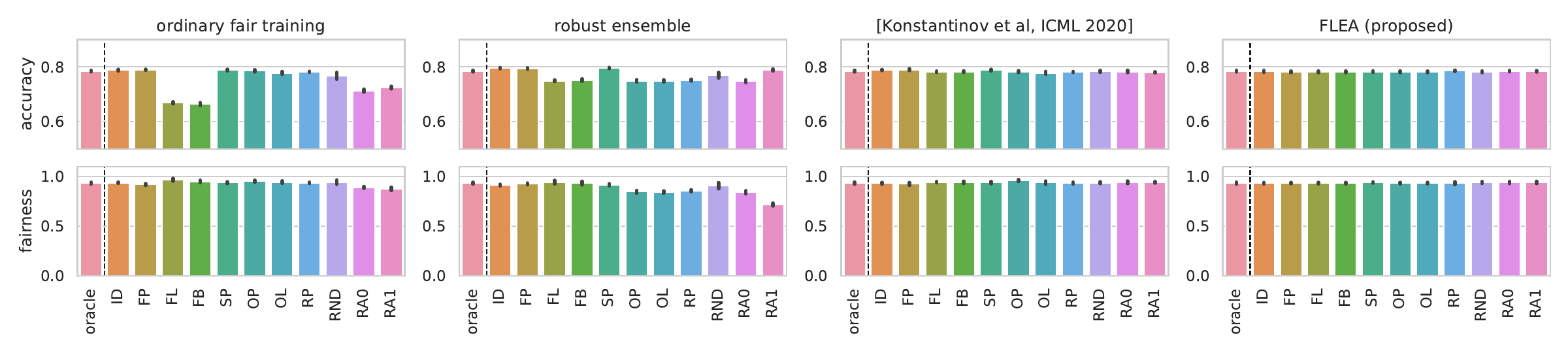}
\end{subfigure}
\begin{subfigure}[b]{\textwidth}\centering
\caption{\compas ($N=5, N-K=2$)}\includegraphics[width=\textwidth]{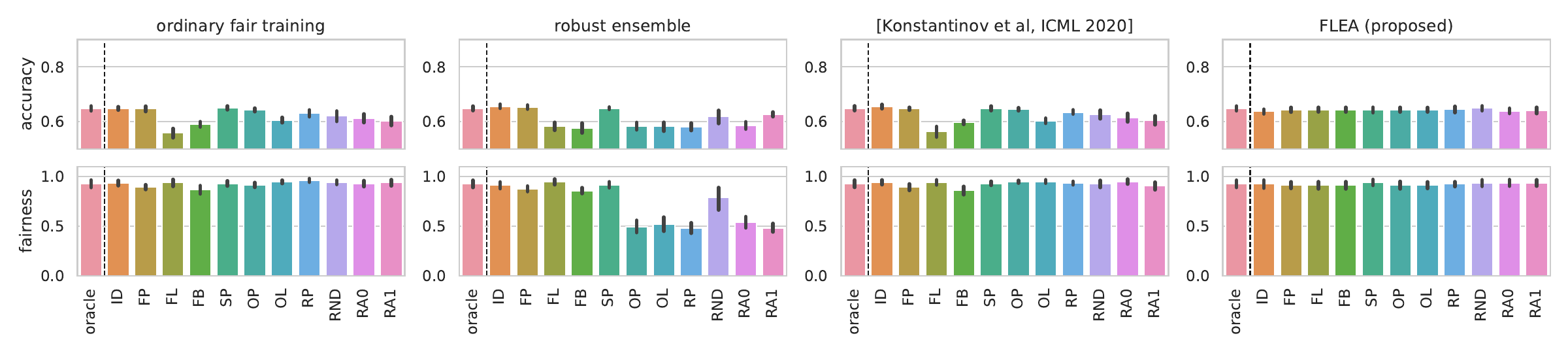}
\end{subfigure}
\begin{subfigure}[b]{\textwidth}\centering
\caption{\drugs ($N=5, N-K=2$)}\includegraphics[width=\textwidth]{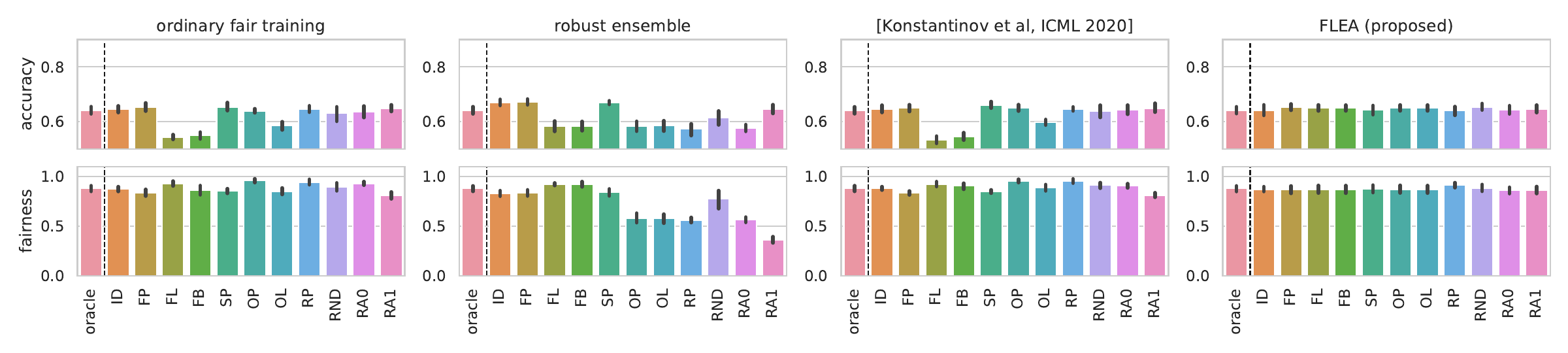}
\end{subfigure}
\begin{subfigure}[b]{\textwidth}\centering
\caption{\german ($N=5, N-K=2$)}\includegraphics[width=\textwidth]{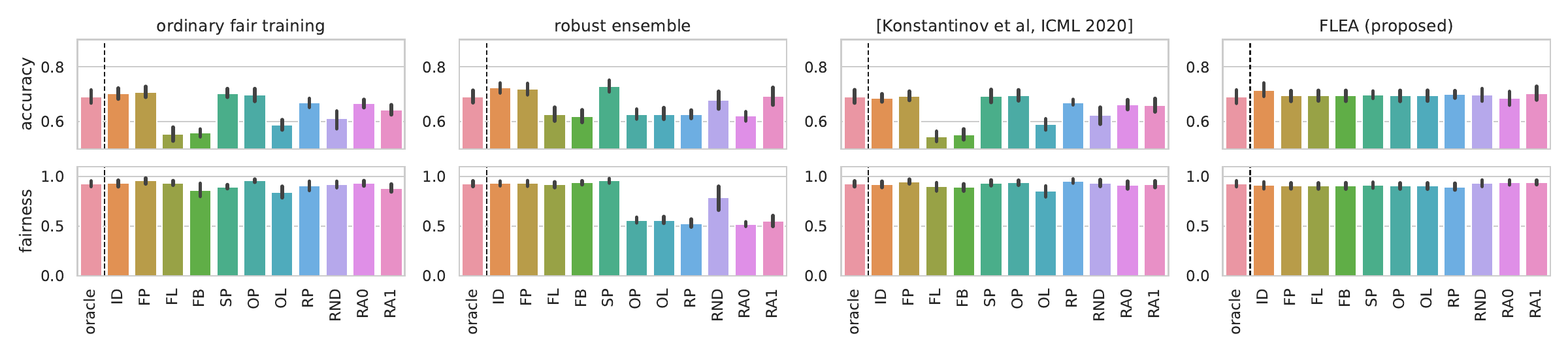}
\end{subfigure}
\end{figure*}

\begin{figure*}[t]
\caption{Nonlinear (gradient-boosted decision trees) classifier, fairness-unaware learning}
\label{extrafig:summaryresults_nonlinear_last}
\begin{subfigure}[b]{\textwidth}\centering
\caption{\adult ($N=5, N-K=2$)}\includegraphics[width=\textwidth]{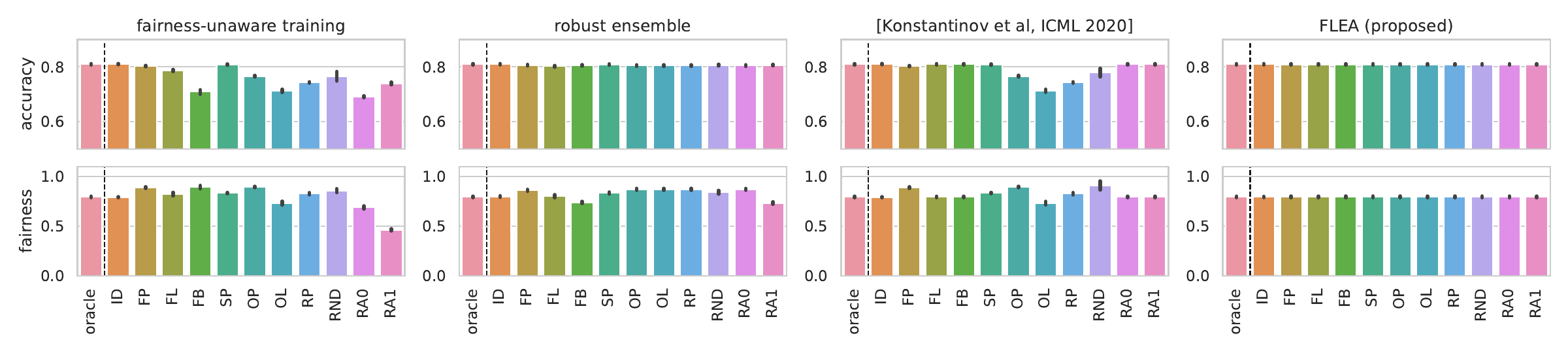}
\end{subfigure}
\begin{subfigure}[b]{\textwidth}\centering
\caption{\compas ($N=5, N-K=2$)}\includegraphics[width=\textwidth]{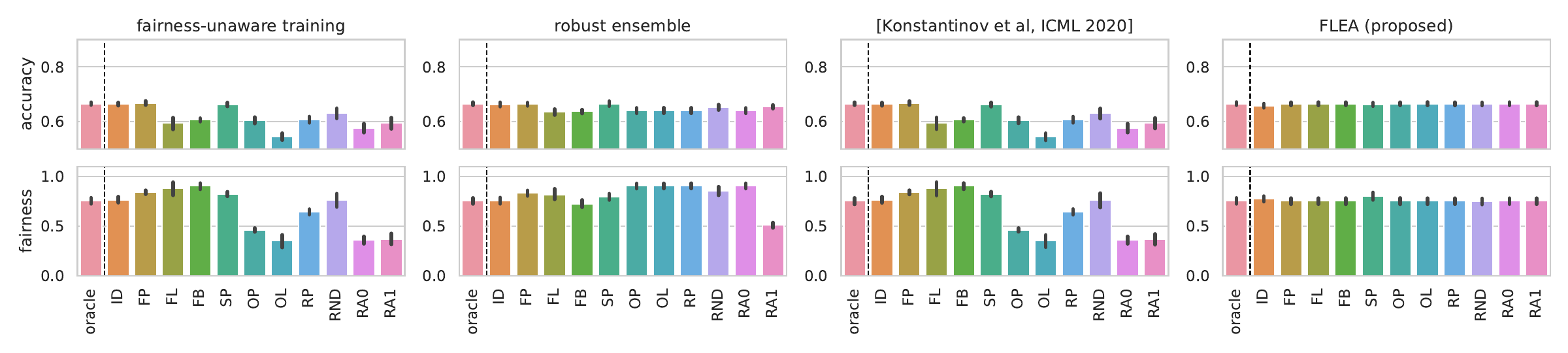}
\end{subfigure}
\begin{subfigure}[b]{\textwidth}\centering
\caption{\drugs ($N=5, N-K=2$)}\includegraphics[width=\textwidth]{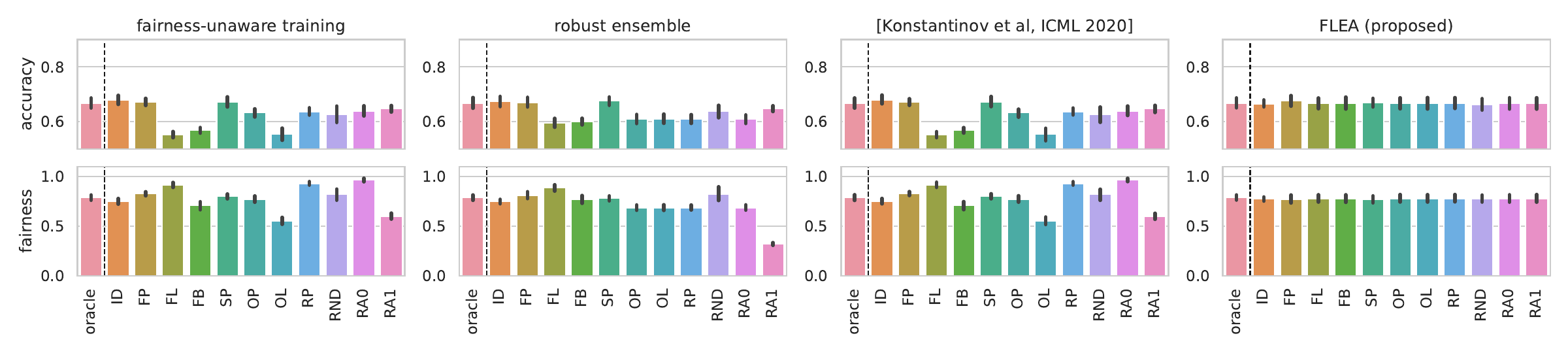}
\end{subfigure}
\begin{subfigure}[b]{\textwidth}\centering
\caption{\german ($N=5, N-K=2$)}\includegraphics[width=\textwidth]{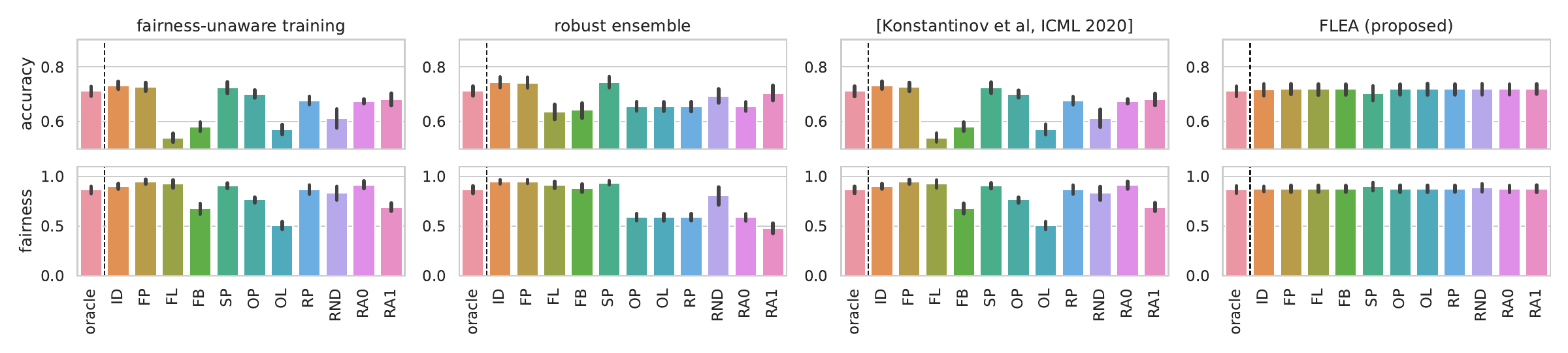}
\end{subfigure}
\end{figure*}

As exemplary setting, we perform experiments in a subset of the 
situations using \emph{gradient boosted decision trees}~\cite{friedman2001greedy} 
from the \texttt{xgboost} package\footnote{\url{https://xgboost.readthedocs.io/en/stable/python/index.html}} 
as nonlinear classifiers. 
We use 5-fold crossvalidation to select 
hyperparameters $\texttt{n\_estimators}\in\{100,200,300,400,500\}$
and $\texttt{max\_depth}\in\{3,5,7,9\}$.
To encourage fairness of the resulting classifiers we 
use the \emph{preprocessing} approach, as that requires 
no changes to the actual classifier training routines.
As baselines, we compare to the robust ensemble and 
the filtering approach of~\cite{konstantinov2020sample}.
The other baselines of Section~\ref{subsec:results} are not 
applicable, as they require modifications of the training 
process itself. 

Table~\ref{tab:results-nonlinear} reports the results in
tabular form. Figures~\ref{extrafig:summaryresults_nonlinear_first}
and Figure~\ref{extrafig:summaryresults_nonlinear_last}
visualize the results for each adversary. 

Overall, one can see the same trend as in the linear setting: 
naively merging the data sources leads to strong decreases in 
accuracy and fairness. 
The different robust methods overcome this to varying
degrees, with \method always achieving the best 
results, \ie closest to the hypothetical \emph{oracle}.

A comparison of Table~\ref{tab:results-nonlinear} to Table~\ref{tab:results} 
shows that the use of a nonlinear instead of linear classifier generally 
does not lead to more accurate nor more fair classifiers in 
the tested setting. Presumably, this is because in the chosen 
categorical representation, the bottleneck for prediction quality 
is not a lack of expressibility of the hypothesis class, but rather 
the datasets' intrinsic noise. 
This view is supported by the fact that accuracy and fairness are 
not much increased even for the \emph{oracle} approach, which is 
not affected by the data manipulations.

\section{Discussion of the role of $\disb$, $\disc$ and $\disp$ and ablation study}
\method relies on the combination of three dissimilarity measures: $\disc$, $\disp$ and $\disb$. In these section we discuss the importance of each of them and report on an ablation study to verify their practical significance.

Of the three measures, $\disc$ is indispensable to ensure classifier accuracy, as it is the only measure that depends on the label values.
At the same time, $\disc$ is blind to changes in the protected attributes whenever those are not part of the feature set. 

Even in the case when the protected attribute is among the features, the $\disc$ measure may not detect changes in the data that may harm fairness. For example,
if one of the protected groups is much more rare than the other,
changing even a small number of data points (\eg the points from that group) can cause a large change in the conditional distributions of the data given the value of the 
protected attribute. At the same time, the discrepancy will remain largely unaffected, since only a few points
have been changed in total. Therefore, filtering only based on $\disc$ is insufficient - sources with a different conditional distribution may get in through the filtering step,
potentially causing unfair classifiers (on clean data) to appear fair on the (corrupted) dataset.

FLEA avoids such issues by additionally adopting the $\disp$ measure, 
which can reliably detect changes in the conditional distributions of 
the data given the value of the protected attribute, thereby ensuring 
reliable fairness estimates based on the sources that are returned by 
the filtering procedure.
However, $\disb$ is not sensitive to manipulations in the size of the
protected attributes, for example, an adversary who selectively drops 
examples of one protected group. %
However, the $\disb$ measure would detect such a manipulation. Thereby, 
it ensures that the disparity of the union of multiple sources is close 
to their average individual disparity. 
This is important because a fairness-aware learner works on top of 
\textsc{FilterSources} by merging the data from the sources returned by 
the filtering algorithm. This aspect becomes apparent only in the proof 
of Theorem~\ref{thm:method}, see Section~\ref{sec:proof}.

\subsection{Ablation study}
To understand the respective contributions of $\disb$, $\disc$,
and $\disp$ on real-world data, we performed an ablation study that 
runs variants of \method in which any subsets of the three measures 
are used to compute the $D$-scores. The variant with all measures 
active is identical to \method. The variant with all measures 
inactive randomly chooses subsets to train on. 

The results are presented in Table~\ref{tab:ablation_first}.
One can see that for all datasets, not using $\disc$ (column 3) 
for the filtering step has the most noticeable effect.
This makes sense, because several of the adversaries make 
large changes to the labels and features, and $\disc$ is 
well suited to identify these.

Not using $\disp$ (column 2) has usually less of an effect, but 
in some situations it does lead to a noticeable drop in fairness, 
see \eg \compas, and \drugs, as well as \folktables with
$N-K\in\{5,10,15,20\}$. 
This is also consistent with our expectations, as the measure is 
specifically able to detect even subtle manipulation that would 
negatively affect fairness. 
However, for \folktables with $N-K=25$, where the amount of 
manipulated data is very close to half, not using $\disp$ 
would actually be beneficial for the system. 
We attribute this to the fact that as a difference of ratios,
$\disp$ is harder to estimate from small sample sets than
the other two measures. A noisy estimate, however, can lead
to clean sources to be suppressed, and manipulated ones
to be selected. This explanation is also supported by the 
fact that for small datasets the variability of results is 
bigger when $\disp$ is included than when it is not.

The effect of not using $\disb$ (column 4) is small on real 
data. It never exceeds the standard deviation of the estimates.
This, again, is expected, as the other two measures are
typically able to ensure accuracy and fairness, whereas
the role of $\disb$ is mainly to handle corner cases
that are unlikely to occur in real data. 

Dropping two measures from the filtering step only 
makes sense, if the remaining measure is $\disc$
(column 6). Even then, a decrease in accuracy and/or 
fairness is quite common, or at least an increase
in variability.

\begin{table}[t]\setlength{\tabcolsep}{5pt}
\caption{Performance of \method with different combination of $\disb$, $\disc$, and $\disp$ activated or deactived (crossed-out). Reported results are in the same format at Table~\ref{tab:results}: minimal accuracy (A) and fairness (F) against any of the tested adversaries}
\begin{subfigure}[b]{\textwidth}\centering\small
\caption{regularization-based fairness}\label{tab:ablation_first}
\begin{tabular}{rc@{~~}rrrrrrrr}
	&& \multicolumn{1}{c}{disb} & \multicolumn{1}{c}{disb} & \multicolumn{1}{c}{disb} & \multicolumn{1}{c}{\xcancel{disb}} & \multicolumn{1}{c}{disb} & \multicolumn{1}{c}{\xcancel{disb}} & \multicolumn{1}{c}{\xcancel{disb}} & \multicolumn{1}{c}{\xcancel{disb}}
\\
	&& \multicolumn{1}{c}{disc} & \multicolumn{1}{c}{disc} & \multicolumn{1}{c}{\xcancel{disc}} & \multicolumn{1}{c}{disc} & \multicolumn{1}{c}{\xcancel{disc}} & \multicolumn{1}{c}{disc} & \multicolumn{1}{c}{\xcancel{disc}} & \multicolumn{1}{c}{\xcancel{disc}}
\\
	&& \multicolumn{1}{c}{disp} & \multicolumn{1}{c}{\xcancel{disp}} & \multicolumn{1}{c}{disp} & \multicolumn{1}{c}{disp} & \multicolumn{1}{c}{\xcancel{disp}} & \multicolumn{1}{c}{\xcancel{disp}} & \multicolumn{1}{c}{disp} & \multicolumn{1}{c}{\xcancel{disp}}
\\\hline
adult & A:& $70.3_{\pm 0.4}$& $70.3_{\pm 0.4}$& $63.7_{\pm 9.9}$& $70.3_{\pm 0.4}$& $53.3_{\pm 16.6}$& $70.3_{\pm 0.4}$& $64.3_{\pm 10.1}$& $57.4_{\pm 16.5}$\\
$N=5$, $N-K=2$ & F:& $98.2_{\pm 1.1}$& $98.2_{\pm 1.1}$& $91.2_{\pm 18.4}$& $98.2_{\pm 1.1}$& $64.0_{\pm 22.5}$& $89.2_{\pm 5.5}$& $91.8_{\pm 18.6}$& $68.8_{\pm 22.4}$\\\hline
compas & A:& $66.2_{\pm 1.1}$& $66.1_{\pm 1.2}$& $61.9_{\pm 8.9}$& $66.2_{\pm 1.1}$& $43.8_{\pm 13.0}$& $66.0_{\pm 1.1}$& $61.7_{\pm 9.4}$& $56.0_{\pm 13.0}$\\
$N=5$, $N-K=2$ & F:& $95.4_{\pm 2.6}$& $93.1_{\pm 1.8}$& $94.3_{\pm 2.6}$& $95.3_{\pm 2.5}$& $86.2_{\pm 7.5}$& $90.2_{\pm 1.9}$& $94.3_{\pm 2.4}$& $68.7_{\pm 26.2}$\\\hline
drugs & A:& $64.4_{\pm 1.5}$& $64.3_{\pm 1.5}$& $52.1_{\pm 11.8}$& $64.3_{\pm 1.4}$& $55.4_{\pm 9.2}$& $64.3_{\pm 1.5}$& $58.1_{\pm 9.8}$& $52.9_{\pm 11.1}$\\
$N=5$, $N-K=2$ & F:& $93.0_{\pm 4.5}$& $88.9_{\pm 5.2}$& $72.6_{\pm 24.7}$& $92.5_{\pm 4.9}$& $69.6_{\pm 12.3}$& $86.4_{\pm 6.2}$& $82.9_{\pm 18.1}$& $70.5_{\pm 21.0}$\\\hline
germancredit & A:& $66.5_{\pm 2.8}$& $66.0_{\pm 2.7}$& $60.2_{\pm 5.1}$& $66.8_{\pm 2.8}$& $59.7_{\pm 6.9}$& $66.2_{\pm 2.5}$& $55.9_{\pm 9.8}$& $56.6_{\pm 11.5}$\\
$N=5$, $N-K=2$ & F:& $93.8_{\pm 3.8}$& $92.6_{\pm 3.8}$& $86.2_{\pm 13.1}$& $93.9_{\pm 3.9}$& $87.1_{\pm 5.1}$& $92.5_{\pm 4.1}$& $81.0_{\pm 12.6}$& $75.4_{\pm 18.9}$\\\hline
folktables & A:& $75.5_{\pm 0.2}$& $75.5_{\pm 0.2}$& $74.6_{\pm 0.2}$& $75.4_{\pm 0.2}$& $74.4_{\pm 0.5}$& $75.4_{\pm 0.2}$& $74.6_{\pm 0.2}$& $74.2_{\pm 0.5}$\\
$N=51$, $N-K=5$ & F:& $99.5_{\pm 0.3}$& $99.0_{\pm 0.5}$& $99.3_{\pm 0.3}$& $99.5_{\pm 0.2}$& $93.0_{\pm 2.6}$& $99.1_{\pm 0.6}$& $99.4_{\pm 0.2}$& $92.7_{\pm 4.0}$\\\hline
folktables & A:& $75.4_{\pm 0.2}$& $75.4_{\pm 0.2}$& $73.9_{\pm 0.4}$& $75.4_{\pm 0.2}$& $73.7_{\pm 0.3}$& $75.3_{\pm 0.2}$& $73.9_{\pm 0.5}$& $73.0_{\pm 0.9}$\\
$N=51$, $N-K=10$ & F:& $99.5_{\pm 0.3}$& $98.1_{\pm 0.7}$& $99.1_{\pm 0.2}$& $99.5_{\pm 0.3}$& $88.3_{\pm 3.0}$& $98.2_{\pm 0.5}$& $99.1_{\pm 0.4}$& $86.7_{\pm 3.7}$\\\hline
folktables & A:& $75.4_{\pm 0.2}$& $75.4_{\pm 0.3}$& $72.7_{\pm 0.6}$& $75.4_{\pm 0.2}$& $72.3_{\pm 0.9}$& $75.3_{\pm 0.3}$& $72.7_{\pm 0.8}$& $68.8_{\pm 8.1}$\\
$N=51$, $N-K=15$ & F:& $99.7_{\pm 0.2}$& $97.0_{\pm 0.6}$& $98.9_{\pm 0.4}$& $99.6_{\pm 0.3}$& $82.4_{\pm 2.4}$& $97.1_{\pm 1.0}$& $98.9_{\pm 0.5}$& $78.9_{\pm 5.4}$\\\hline
folktables & A:& $75.3_{\pm 0.2}$& $75.3_{\pm 0.2}$& $69.9_{\pm 3.3}$& $75.3_{\pm 0.2}$& $70.5_{\pm 2.7}$& $75.3_{\pm 0.2}$& $68.4_{\pm 4.5}$& $64.8_{\pm 10.9}$\\
$N=51$, $N-K=20$ & F:& $99.5_{\pm 0.2}$& $95.6_{\pm 1.1}$& $98.4_{\pm 0.6}$& $99.5_{\pm 0.2}$& $79.1_{\pm 3.1}$& $93.9_{\pm 1.5}$& $98.3_{\pm 0.7}$& $74.9_{\pm 3.3}$\\\hline
folktables & A:& $74.0_{\pm 1.4}$& $75.0_{\pm 0.3}$& $53.9_{\pm 14.4}$& $74.1_{\pm 1.3}$& $41.6_{\pm 14.1}$& $75.1_{\pm 0.3}$& $59.3_{\pm 11.5}$& $56.5_{\pm 13.3}$\\
$N=51$, $N-K=25$ & F:& $94.2_{\pm 1.5}$& $89.9_{\pm 1.6}$& $95.2_{\pm 4.7}$& $94.5_{\pm 1.8}$& $79.4_{\pm 6.7}$& $89.3_{\pm 1.7}$& $96.9_{\pm 1.7}$& $72.6_{\pm 2.7}$\\\hline

\end{tabular}
\end{subfigure}
\end{table}

Another interesting ablation study would be to 
determine the success rate of the \method's filtering step, 
\ie how what fraction of the malignant sources it successfully 
suppresses. 
This, however, we cannot estimate, because we lack ground 
truth information which sources are malignant and which 
are not. 
From the experimental setup, only the information is available 
which sources have been manipulated and in what way. However, 
whether a manipulation is benign or malignant depends not only 
on the adversary's strategy, but also on the actual data and the 
later learning strategy. 
The proxy measure of determining what fraction of all manipulated 
sources were detected would not be very meaningful, as adversaries 
can easily create manipulated data sources that are indistinguishable 
from clean ones, \eg just shuffling the data point or not making
any changes at all, as realized in our experiments by the \emph{ID} 
adversary. 

\section{Complete formulation and proof of Theorem \ref{thm:method}}\label{sec:proof}

In this section we present the full proof of Theorem \ref{thm:method}. 
We begin by reminding the reader of our notation and formal assumption 
from in \ref{sec:proof_assumptions}. Next in Section 
\ref{sec:appendix_concentration_tools} we state a few standard 
concentration results that are used in our main proof. In Section 
\ref{sec:distances_distributions} we define the population counterparts 
of the empirical discrepancy, disparity and disbalance measures, as 
understanding how well these measures are estimated from finite data is 
crucial for the proof of our results.

Finally, we present the full proof of Theorem \ref{thm:method} in 
Section \ref{sec:proof_now_really} and the proof of the concentration lemmas from 
Section \ref{sec:appendix_concentration_tools} in Section 
\ref{sec:proof_lemma}.

\subsection{Assumptions and formal adversary model}
\label{sec:proof_assumptions}
For convenience of the reader, we repeat the formal notation and assumptions stated in \ref{sec:theory_assumptions}. Initially, there are $N$ datasets
$\tilde S_1,\dots,\tilde S_N$, with the $i$-th set of samples being drawn \iid from a distribution $p_i(x,y,a)$. We assume that all these distributions are clean, in the sense that they are close to the true target distribution $p$. Formally, we assume that each of the following conditions hold:
\begin{align}
\label{eqn:closeness_assumption}
\TV(p_i(x,y,a), p(x,y,a)) \leq \eta,\quad\text{and}\quad \max_{z\in\mathcal{A}}\big\{\TV(p_i(x,y|a=z), p(x,y|a=z))\big\} \leq \eta,
\end{align}
where $\TV(p, q) = \sup_{B\in \mathcal{B}(\mathcal{X}\times\mathcal{Y}\times\mathcal{A})} \left|p(B) - q(B)\right|$
with $\mathcal{B}(X)$ denoting the Borel $\sigma$-algebra on a topological 
space $X$.

Once the clean datasets $\tilde S_1,\dots,\tilde S_N$ are sampled, an \emph{adversary}
operates on them.
This results in new datasets, $S_1,\dots,S_N$, 
which the learning algorithm receives as input. 
The adversary is an arbitrary (deterministic or 
randomized) function  $\mathcal{F}:\prod_{i=1}^N\left(\mathcal{X}\times\mathcal{Y}\times\mathcal{A}\right)^n \rightarrow \prod_{i=1}^N\left(\mathcal{X}\times\mathcal{Y}\times\mathcal{A}\right)^{n}$,
with the only restriction that for a fixed subset 
of indices, $G\subset \{1,\dots,N\}$, the data remains unchanged. That is, $S_i = \tilde S_i$ 
for all $i\in G$, and $S_i$ is arbitrary for $i\not\in G$.

Note that the learner only observes the datasets $S_i$ and outputs a hypothesis based on them. Therefore, in the proof we will only work with the datasets $S_i$ and not with $\tilde S_i$, using that $S_i = \tilde S_i$ whenever $i \in G$, so that $S_i$ is \iid from $p_i$. For simplicity, we refer to a dataset $S_i$ or a source $i\in [N]$ as clean if $i\in G$.

We assume without loss of generality that $\tau = \mathbb{P}_{(X,Y,A)\sim p} (A = 0) \in \left(0, \frac{1}{2}\right]$. For technical reasons, we also assume that $18\eta < \tau = \mathbb{P}_{(X,Y,A)\sim p} (A = 0)$.

\subsection{Concentration tools and notation}
\label{sec:appendix_concentration_tools}
We first present the two lemmas which demonstrate uniform convergence of the empirical risk and the empirical fairness deviation measure respectively, for any hypothesis set $\HYPS$ with finite VC dimension. The first is just the classic VC generalization bound, as given in Chapter 28.1 of \citet{shwartz2014understanding}. The proof of the second lemma closely follows the proofs of similar results from \citet{woodworth17learning, agarwal2018reductions, konstantinov2022theotherone} and is presented in Section \ref{sec:proof_lemma} for completeness.

\begin{lemma}[Uniform Convergence for Binary Loss]
 \label{lemma:loss_convergence}
    Let $d$ be the VC-dimension of $\mathcal{H}$. Then for any dataset $S$ of size $n$ sampled \iid from a distribution $p$, for all $\delta \in (0,1)$,
    \[
        \mathbb{P}\Bigg(\sup_{h \in \H}|\mathcal{R}_S(h) - \mathcal{R}_p(h)| > 2 \sqrt{\frac{8d\log\left(\frac{en}{d}\right) + 2\log\left(\frac{4}{\delta}\right)}{n}} \Bigg) \leq \delta.
    \]
 \end{lemma}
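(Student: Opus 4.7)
The plan is to follow the classical VC uniform convergence argument, which combines symmetrization with a ghost sample, Sauer's lemma, and Hoeffding's inequality. The key enabling observation is that although $\H$ may be infinite, each empirical risk $\mathcal{R}_S(h)$ depends on $h$ only through its restriction to the finite sample, so the effective complexity is captured by the growth function rather than by $|\H|$.

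First I would introduce an independent ghost sample $S'$ of size $n$ drawn \iid from $p$ and apply the standard symmetrization inequality: provided $n\varepsilon^2$ exceeds a small absolute constant,
\[
\mathbb{P}\!\left(\sup_{h\in\H}|\mathcal{R}_S(h)-\mathcal{R}_p(h)|>\varepsilon\right) \leq 2\,\mathbb{P}\!\left(\sup_{h\in\H}|\mathcal{R}_S(h)-\mathcal{R}_{S'}(h)|>\tfrac{\varepsilon}{2}\right).
\]
This replaces the population risk by a second empirical risk, reducing the problem to a deviation on the combined sample $S\cup S'$ of size $2n$.

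Next I would condition on the multiset $S\cup S'$ and introduce Rademacher signs $\sigma_1,\dots,\sigma_n\in\{\pm 1\}$ that swap corresponding pairs between $S$ and $S'$ without altering the joint distribution. By Sauer's lemma, the restriction of $\H$ to these $2n$ points realizes at most $(2en/d)^d$ distinct labelings. For each such effective hypothesis, Hoeffding's inequality applied to the sum of $n$ independent bounded Rademacher-weighted terms gives a tail bound of $2\exp(-n\varepsilon^2/8)$. Taking a union bound over the growth-function count and incorporating the symmetrization factor of $2$ yields
\[
\mathbb{P}\!\left(\sup_{h\in\H}|\mathcal{R}_S(h)-\mathcal{R}_p(h)|>\varepsilon\right) \leq 4\left(\frac{2en}{d}\right)^d\exp\!\left(-\frac{n\varepsilon^2}{8}\right).
\]
Setting the right-hand side equal to $\delta$ and solving for $\varepsilon$ reproduces the stated bound, up to absorbing a $\log 2$ into the leading constant.

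The argument is entirely classical and a complete derivation is given in Chapter 28.1 of \citesupp{shwartz2014understanding}, which the paper explicitly cites. The only mildly delicate step is the symmetrization inequality, which relies on a conditional Chebyshev argument to guarantee that $\mathcal{R}_{S'}(h)$ stays close to $\mathcal{R}_p(h)$ with constant probability; this is what forces the implicit lower bound on $n$. Beyond this, the proof is a mechanical combination of Sauer's lemma and Hoeffding's inequality and presents no novel technical obstacle, so I would simply cite the textbook rather than reproduce the full derivation.
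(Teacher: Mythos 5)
Your proposal is correct and matches the paper exactly: the paper offers no independent proof of this lemma, stating only that it is the classic VC generalization bound from Chapter 28.1 of the cited textbook, which is precisely the symmetrization--Sauer--Hoeffding argument you sketch (and your constants are if anything slightly tighter than the stated bound, so the implication goes the right way). Citing the textbook, as you propose, is exactly what the authors do.
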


\begin{lemma}[Uniform Convergence for demographic parity]
 \label{lemma:uniform_convergence}
    Let $p$ be a distribution on $\mathcal{X}\times\mathcal{A}\times\mathcal{Y}$. Let $d = \VC(\HYPS) \geq 1$ and let $\tau = \min_{a\in\{0,1\}}\mathbb{P}_{(X,Y,A)\sim p}(A = a)$ for some constant $\tau\in (0,0.5]$. Then for any dataset $S$ of size $n \geq \max \left\{\frac{8\log\left(\frac{8}{\delta}\right)}{\tau}, \frac{d}{2}\right\}$ sampled \iid from $p$, for all $\delta \in (0,1/2)$:
    \begin{equation}
        \mathbb{P}_S \Bigg( \sup_{h \in \H}\left|\unfairS(h) - \unfair{p}(h)\right|
        \geq  16\sqrt{2\frac{d \log\big(\frac{2en}{d}\big) + \log\big(\frac{24}{\delta}\big)}{n\tau}} \Bigg) \leq \delta
    \end{equation}
 \end{lemma} 
 
For the dataset $S_i$, denote by 
\begin{equation}
c_i \coloneqq \sum_{(x,y,a) \in S_i} \mathbbm{1}\{a = 0\} = |S_1^{a=0}|.
\end{equation}
Denote $\tau_i = \mathbb{P}_{(X,Y,A)\sim p_i}(A = 0)$. Then for a clean data source we have that $c_i \sim \Bin(n, \tau_i)$. Therefore, by the Hoeffding bound, for any $\delta > 0$:
\begin{equation}
\label{eqn:binomial_concentration}
\mathbb{P}\left(\left|c_i - n\tau_i \right| \geq n\sqrt{\frac{\log\left(\frac{2}{\delta}\right)}{2n}}\right) \leq 2\exp \left(-\frac{2\left(\sqrt{\frac{n}{2}\log\left(\frac{2}{\delta}\right)}\right)^2}{n}\right) = \delta.
\end{equation}
Because, by assumption, $\tau = \mathbb{P}_{(X,Y,A)\sim p_i}(A = 0) = \min_{a\in\{0,1\}}\mathbb{P}_{(X,Y,A)\sim p}(A = a)$ and 
$TV(p_i, p) \leq \eta$, for any clean dataset $S_i$, it holds that
$$\tau_i = \mathbb{P}_{(X,Y,A)\sim p_i}(A = 0) \geq \mathbb{P}_{(X,Y,A)\sim p}(A = 0) - \eta = \tau - \eta.$$ In addition, $$1 - \tau_i =  \mathbb{P}_{(X,Y,A)\sim p_i}(A = 1) \geq \mathbb{P}_{(X,Y,A)\sim p}(A = 1) - \eta \geq \mathbb{P}_{(X,Y,A)\sim p}(A = 0) - \eta = \tau - \eta.$$
Recall also that $\tau - \eta \geq \tau - 18\eta > 0$ by assumption. Denote by:
\begin{align}
\label{eqn:delta_definition}
\Delta(\delta) & = \max\left\{2 \sqrt{\frac{8d\log\left(\frac{en}{d}\right) + 2\log\left(\frac{4}{\delta}\right)}{n}}, 16\sqrt{2\frac{d \log\big(\frac{2en}{d}\big) + \log\big(\frac{24}{\delta}\big)}{n(\tau - \eta)}}, \sqrt{\frac{\log\left(\frac{2}{\delta}\right)}{2n}} \right\} \\ & = 16\sqrt{2\frac{d \log\big(\frac{2en}{d}\big) + \log\big(\frac{24}{\delta}\big)}{n(\tau - \eta)}}.
\end{align}
The lemmas above, as well as the observation that $\min \{\tau_i, 1 - \tau_i\} \geq \tau - \eta$ for any clean source $i$, readily imply that:
\begin{equation}
\label{eqn:bad_bound_risk}
\mathbb{P}\Bigg(\sup_{h \in \H}|\mathcal{R}_{S_i}(h) - \mathcal{R}_{p_i}(h)| \geq \Delta(\delta)\Bigg) \leq \delta,
\end{equation}
\begin{equation}
\label{eqn:bad_bound_fairness}
\mathbb{P}_S \Bigg( \sup_{h \in \H}\left|\unfair{S_i}(h) - \unfair{p_i}(h)\right| \geq  \Delta(\delta) \Bigg) \leq \delta
\end{equation}
and 
\begin{equation}
\label{eqn:bad_bound_binomial}
\mathbb{P}_S\left(\left|c_i - n\tau_i\right| \geq n\Delta(\delta) \right) \leq \delta,
\end{equation}  
for any clean $i$.

\subsection{Discrepancy, disparity and disbalance between distributions}
\label{sec:distances_distributions}
In our proof we will consider the population counterparts of the between-dataset distances that we defined in the main body of the text. In particular, the discrepancy distance between two distributions $p$ and $q$ is  
\begin{align}
\label{eqn:discrepancy_distributions}
\disc(p, q) = \sup_{h\in\mathcal{H}}\left|\mathcal{R}_p(h) - \mathcal{R}_q(h)\right|.
\end{align}
Similarly, the disparity is
\begin{align}
\label{eqn:disparity_distributions}
\disp(p, q) = \sup_{h\in\mathcal{H}}\left|\unfair{p}(h) - \unfair{q}(h)\right|,
\end{align}
where as an unfairness measure $\unfair{p}$ we will use:
$$\unfair{p}(h) = \mathbb{P}_{(X,Y,A)\sim p}\left(h(X) = 1 | A = 0\right) - \mathbb{P}_{(X,Y,A)\sim p}\left(h(X) = 1 | A = 1\right).$$
Finally, the disbalance is simply
\begin{align}
\label{eqn:disbalance_distributions}
\disb(p, q) = \left|\mathbb{P}_p(A = 0) - \mathbb{P}_q(A = 0)\right|.
\end{align}
Next we study these distances, between a distribution $p_i$ of a clean source and the true target distribution $p$. Recall our assumptions about the bounded TV distances from Section \ref{sec:proof_assumptions}. Clearly, we have that $\disb(p_i, p) \leq TV(p_i,p) \leq \eta$. Note also that:
\begin{align*}
\disc(p_i, p) = \sup_{h\in\mathcal{H}}\left|\mathcal{R}_{p_i}(h) - \mathcal{R}_p(h)\right| = \sup_{h\in\mathcal{H}}\left|\mathbb{P}_{(X,Y,A) \sim p_i}(h(X) \neq Y) - \mathbb{P}_{(X,Y,A) \sim p}(h(X) \neq Y)\right| \leq \eta,
\end{align*}
because any (measurable) classifier $h: \mathcal{X}\to \mathcal{Y}$ can be associated with a (Borel) set $S_h = \{(x,y,a) \in (\mathcal{X}\times\mathcal{Y}\times\mathcal{A}): h(x) \neq y\}$.
Finally, we bound the disparity in terms of $\eta$. Note that:

\newcommand{\cond}[1]{\mathbbm{1}\{#1\}}
\begin{align*}
\disp(p_i, p) & =  \sup_{h\in\mathcal{H}}\left|\mathbb{P}_{(X,Y,A)\sim p_i}\left(h(X) = 1 | A = 0\right) - \mathbb{P}_{(X,Y,A)\sim p_i}\left(h(X) = 1 | A = 1\right) \right. 
\\ 
& \qquad\left. - \mathbb{P}_{(X,Y,A)\sim p}\left(h(X) = 1 | A = 0\right) + \mathbb{P}_{(X,Y,A)\sim p}\left(h(X) = 1 | A = 1\right) \right| 
\\
& \leq \sup_{h\in\mathcal{H}}\left(\left|\mathbb{P}_{(X,Y,A)\sim p_i}\left(h(X) = 1 | A = 0\right) - \mathbb{P}_{(X,Y,A)\sim p}\left(h(X) = 1 | A = 0\right)\right| \right. 
\\ 
& \qquad\left. + \left|\mathbb{P}_{(X,Y,A)\sim p_i}\left(h(X) = 1 | A = 1\right) - \mathbb{P}_{(X,Y,A)\sim p}\left(h(X) = 1 | A = 1\right)\right|\right) 
\\
& \leq \sup_{h\in\mathcal{H}}\left|\mathbb{P}_{(X,Y,A)\sim p_i}\left(h(X) = 1 | A = 0\right) - \mathbb{P}_{(X,Y,A)\sim p}\left(h(X) = 1 | A = 0\right)\right| 
\\
& \quad + \sup_{h\in\mathcal{H}}\left|\mathbb{P}_{(X,Y,A)\sim p_i}\left(h(X) = 1 | A = 1\right) - \mathbb{P}_{(X,Y,A)\sim p}\left(h(X) = 1 | A = 1\right)\right| \\
& \leq 2\eta.
\end{align*}
\subsection{Proof}\label{sec:proof_now_really}

\addtocounter{theorem}{-1}

\begin{theorem}%
Assume that $\mathcal{H}$ has a finite VC-dimension $d \geq 1$. Let $p$ be an arbitrary target data distribution and without loss of generality let $\tau = p(a=0) \in \left(0, 0.5\right]$. Let $S_1,\dots,S_N$ be $N$ datasets, each consisting of $n$ samples, out of which $K>\frac{N}{2}$ 
are sampled $\iid$ from a data distribution $p_i$ that is $\eta$-close the distribution $p$ in the sense of Section \ref{sec:proof_assumptions}. Assume that $18\eta < \tau$.
For $\frac{1}{2} < \beta \leq \frac{K}{N}$ and 
$I=\textsc{FilterSources}(S_1,\dots,S_N;\beta)$ set $S=\bigcup_{i\in I} S_i$. 
Let $\delta>0$. Then there exists a constant $C = C(\delta, \tau, d, N, \eta)$, such that for any $n \geq C$, the following inequalities hold with 
probability at least $1-\delta$ uniformly over all $f\in\H$ and against any adversary:
\begin{align}
  \left|\unfairS(f) - \unfair{p}(f) \right| &\leq  \mathcal{O}\left(\eta\right) + \widetilde{\mathcal{O}}\left(\sqrt{\frac{1}{n}}\right),
\qquad\qquad
\left|\mathcal{R}_S(f) - \mathcal{R}_p(f)\right| \leq \mathcal{O}\left(\eta\right) + \widetilde{\mathcal{O}}\left(\sqrt{\frac{1}{n}}\right). %
\label{eqn:theory_results}
\end{align}
\end{theorem}

\begin{proof}
First, we characterize a set of values into which the empirical 
risks and empirical deviation measures of the clean data 
sources falls with probability at least $1-\delta$. 
Then we show that because the clean datasets cluster in such a way, 
any individual dataset that is accepted by the \filter 
algorithm provides good empirical estimates of the true risk 
and the unfairness measure.
Finally, we show that the same holds for the union of these 
sets, $S$, which implies the inequalities~\eqref{eqn:theory_results}.
For the risk, the last step is a straightforward
consequence of the second. For the fairness, however, 
a careful derivation is needed that crucially uses 
the disbalance measure as well.
 
\paragraph{Step 1} Let $G\subset [N]$ be the set of indexes $i$, such that $S_i$ was not modified by the adversary. By definition, $|G| = K$. Now consider the following events that, as we will show, describe the likely values of the studied quantities on the clean datasets. 

In particular, for all $i \in G$, let $\mathcal{E}^{\RISK}_i$ be the event that:
\begin{equation}
\label{eqn:good_event_risk}
\sup_{h \in \H} \big|\mathcal{R}_{S_i}(h) - \mathcal{R}_{p_i}(h)\big| \leq \Delta\left(\frac{\delta}{6N}\right),
\end{equation}
let $\mathcal{E}^{\Gamma}_i$ be the event that
\begin{equation}
\label{eqn:good_event_fairness}
\sup_{h \in \H} \big|\unfair{S_i}(h) - \unfair{p_i}(h)\big| \leq \Delta\left(\frac{\delta}{6N}\right),
\end{equation}
let $\mathcal{E}^{bin}_i$ be the event that
\begin{equation}
\label{eqn:good_event_binomial}
\left|c_i - n\tau_i\right| \leq n\Delta\left(\frac{\delta}{6N}\right)
\end{equation}
and finally, let $\mathcal{E}^{count}_i$ be the event that
\begin{equation}
\label{eqn:no_trivial_binomial}
0 < c_i < n.
\end{equation}
Denote by $(\mathcal{E}^{\RISK}_i)^c, (\mathcal{E}^{\Gamma}_i)^c$ and $(\mathcal{E}^{bin}_i)^c, \left(\mathcal{E}^{count}_i\right)^c$ the respective complements of these events. Then, by equations \eqref{eqn:bad_bound_risk} and \eqref{eqn:bad_bound_fairness}, \eqref{eqn:bad_bound_binomial}, we have:
$$ \mathbb{P}((\mathcal{E}^{\RISK}_i)^c) \leq \frac{\delta}{6N}, \quad \mathbb{P}((\mathcal{E}^{\Gamma}_i)^c) \leq \frac{\delta}{6N}, \quad \mathbb{P}((\mathcal{E}^{bin}_i)^c) \leq \frac{\delta}{6N}, \quad \forall i \in G.$$

To bound the probability of the complement of $\mathcal{E}^{count}_i$, note that for any $i \in G$
$$1 - \tau_i = \mathbb{P}_{(X,Y,A)\sim p_i}(A = 1) \leq \mathbb{P}_{(X,Y,A)\sim p}(A = 1) + \eta = 1 - \tau + \eta$$
and that $1 - \tau + \eta < 1$ because of the assumption that $\eta < \tau$.
Similarly, 
$$\tau_i =  \mathbb{P}_{(X,Y,A)\sim p_i}(A = 0) \leq \mathbb{P}_{(X,Y,A)\sim p}(A = 0) + \eta = \tau + \eta \leq 1 - \tau + \eta.$$

Now, for any $i\in G$, whenever $n \geq C_1 (\delta, \tau, d, N) = \frac{\log\left(\frac{4N}{\delta}\right)}{\log\left(\frac{1}{1 - \tau + \eta}\right)} \geq \max \left\{\frac{\log\left(\frac{4N}{\delta}\right)}{\log\left(\frac{1}{1-\tau_i}\right)}, \frac{\log\left(\frac{4N}{\delta}\right)}{\log\left(\frac{1}{\tau_i}\right)}\right\}$, we have that 
\begin{align*}
\mathbb{P}\left(\left(\mathcal{E}^{count}_i\right)^c\right) & = (1-\tau_i)^n + \tau_i^n \\
& \leq \exp\left(-n\log\left(\frac{1}{1-\tau_i}\right)\right) + \exp\left(-n\log\left(\frac{1}{\tau_i}\right)\right) \\
& \leq \frac{\delta}{4N} + \frac{\delta}{4N} = \frac{\delta}{2N}.
\end{align*}

Therefore, setting $\mathcal{E} := (\land_{i \in G} \mathcal{E}^{\RISK}_i) \land (\land_{i \in G} \mathcal{E}^{\Gamma}_i) \land (\land_{i \in G} \mathcal{E}^{bin}_i) \land  (\land_{i\in G} \mathcal{E}^{count}_i)$ then by the union bound the probability of $\mathbb{P}\left(\mathcal{E}^{c}\right) \leq K\frac{\delta}{6N} + K\frac{\delta}{6N} + K\frac{\delta}{6N} + K\frac{\delta}{2N} \leq 3\frac{\delta}{6} + \frac{\delta}{2} =\delta$.

Hence the probability of the event $\mathcal{E}$ that all of \eqref{eqn:good_event_risk}, \eqref{eqn:good_event_fairness}, \eqref{eqn:good_event_binomial}, and \eqref{eqn:no_trivial_binomial} hold is at least $1 - \delta$.

\paragraph{Step 2} Now we show that under the event $\mathcal{E}$, the inequalities in \eqref{eqn:theory_results} are fulfilled. Indeed, assume that $\mathcal{E}$ holds. Fix any adversary $\mathcal{A}$ and any $h\in \mathcal{H}$.

For any pair of clean sources $i,j\in [N]$ the triangle law and the derivations in Section \ref{sec:distances_distributions} give:
\begin{align*}
\disc(S_i, S_j) & = \sup_{h\in\H}|\mathcal{R}_{S_i}(h) - \mathcal{R}_{S_j}(h)| \\
& \leq \sup_{h\in\H}|\mathcal{R}_{S_i}(h) - \mathcal{R}_{p_i}(h)| + \sup_{h\in\H}|\mathcal{R}_{p_i}(h) - \mathcal{R}_{p}(h)| + \sup_{h\in\H}|\mathcal{R}_{p}(h) - \mathcal{R}_{p_j}(h)| + \sup_{h\in\H}|\mathcal{R}_{p_j}(h) - \mathcal{R}_{S_j}(h)| \\
& \leq 2\eta + 2\Delta\left(\frac{\delta}{6N}\right).
\end{align*}
Similarly,
$$\disp(S_i, S_j) = \sup_{h\in\H}|\unfair{S_i}(h) - \unfair{S_j}(h)| \leq 4\eta + 2\Delta\left(\frac{\delta}{6N}\right)$$ and $$\disb(S_i, S_j) =\left|\frac{c_i}{n} - \frac{c_j}{n}\right| \leq 2\eta + 2\Delta\left(\frac{\delta}{6N}\right).$$
Therefore, for any pair of clean sources $i,j\in [N]$:
\begin{align}
\disc(S_i, S_j) + \disp(S_i, S_j) + \disb(S_i, S_j) \leq 8\eta + 6\Delta\left(\frac{\delta}{6N}\right).
\end{align}
It follows that, under $\mathcal{E}$, we have that $q_i \leq 8\eta + 6\Delta\left(\frac{\delta}{6N}\right)$ for any clean $i\in [N]$. Since the fraction of clean sources is $\frac{K}{N} \geq \beta$, it follows that also $q \leq 8\eta + 6\Delta\left(\frac{\delta}{6N}\right)$, where $q$ is the $\beta$-th quantile of the $q_i$'s.

Denote by $I = \textsc{FilterSources}(S_1,\dots,S_N; \beta)$ the result of the filtering algorithm. Now for any $i \in I$, we have that $q_i \leq q \leq 8\eta + 6\Delta\left(\frac{\delta}{6N}\right)$. In addition, by the definition of $q_i$, $\disc(S_i, S_j) \leq \disc(S_i, S_j) + \disp(S_i, S_j) + \disb(S_i, S_j) \leq q_i$ for at least $|I| = \beta N > \frac{N}{2}$ values of $j\in [N]$. Since $K > \frac{N}{2}$, this means that $\disc(S_i, S_j) \leq q_i \leq 8\eta + 6\Delta\left(\frac{\delta}{6N}\right)$ for at least $1$ value $j\in G$. Therefore, we have:
\begin{align}
\label{eqn:risk_bound_trusted_source}
\sup_{h\in\H}\left|\mathcal{R}_{S_i}(h) - \mathcal{R}_p(h)\right| & \leq \sup_{h\in\H}\left|\mathcal{R}_{S_i}(h) - \mathcal{R}_{S_j}(h)\right| + \sup_{h\in\H}\left|\mathcal{R}_{S_j}(h) - \mathcal{R}_{p_j}(h)\right| +  \sup_{h\in\H}\left|\mathcal{R}_{p_j}(h) - \mathcal{R}_{p}(h)\right| \\
& \leq 8\eta + 6\Delta\left(\frac{\delta}{6N}\right) + \Delta\left(\frac{\delta}{6N}\right) + \eta \\
& = 9\eta + 7\Delta\left(\frac{\delta}{6N}\right)
\end{align}
because $\mathcal{E}$ holds. Similarly,
\begin{align}\label{eqn:fairness_bound_trusted_source}
\sup_{h\in\H}\left|\unfair{S_i}(h) - \unfair{p}(h)\right| \leq 10\eta + 7\Delta\left(\frac{\delta}{6N}\right)
\end{align}
and
\begin{align}
\label{eqn:balance_bound_trusted_source}
\left|c_i - n\tau\right| \leq 9\eta n + 7n\Delta\left(\frac{\delta}{6N}\right).
\end{align}

\paragraph{Step 3} Finally, we study the risk and disparity measures based on all filtered data $S = \cup_{i\in I} S_i$.

Denote by $\mathcal{R}_S(h)$ the empirical risk across the entire trusted dataset $I$:
\begin{equation}
\label{eqn:rsik_trusted_estimate}
\mathcal{R}_S(h) := \frac{1}{|I|}\sum_{i\in I}\mathcal{R}_{S_i}(h).
\end{equation}
Then the triangle law gives:
\begin{align*}
|\mathcal{R}_S(h) - \mathcal{R}_p(h)| = \left|\frac{1}{|I|}\left(\sum_{i\in I} \mathcal{R}_{S_i}(h) - \mathcal{R}_p (h)\right)\right| \leq \frac{1}{|I|}\sum_{i\in I}\left|\mathcal{R}_{S_i}(h) - \mathcal{R}_p(h)\right| = 9\eta + 7\Delta\left(\frac{\delta}{6N}\right)
\end{align*}
Since
\begin{equation}
\label{eqn:delta_rate}
3\Delta\left(\frac{\delta}{6N}\right) = 112\sqrt{2\frac{d \log\big(\frac{2en}{d}\big) + \log\big(\frac{144N}{\delta}\big)}{(\tau-\eta) n}} = \widetilde{\mathcal{O}}\left(\sqrt{\frac{d}{(\tau-\eta) n}}\right),
\end{equation}
the bound on the risk follows.

Denote by $\unfair{S}(h)$ the empirical estimate of demographic parity across the entire trusted dataset $I$:
\begin{equation}
\label{eqn:fairness_trusted_estimate}
\unfair{S}(h) :=  \frac{\sum_{j\in I} \sum_{i=1}^n \mathbbm{1}\{h(x^{(j)}_i) = 1, a^{(j)}_i =0\}}{\sum_{j\in I} \sum_{i=1}^n \mathbbm{1} \{a^{(j)}_i = 0\}} - \frac{\sum_{j\in I} \sum_{i=1}^n \mathbbm{1}\{h(x^{(j)}_i) = 1, a^{(j)}_i = 1\}}{\sum_{j\in I} \sum_{i=1}^n \mathbbm{1} \{a^{(j)}_i = 1\}} .
\end{equation}
For convenience, denote $v_j = v_j(h) = \sum_{i=1}^n \mathbbm{1}\{h(x^{(j)}_i) = 1, a^{(j)}_i =0\}$ and $w_j = w_j(h) = \mathbbm{1}\{h(x^{(j)}_i) = 1, a^{(j)}_i = 1\}$, so that:
\begin{align*}
\unfair{S}(h) = \frac{\sum_{j\in I} v_j}{\sum_{j\in I} c_j} - \frac{\sum_{j\in I} w_j}{\sum_{j\in I} (n - c_j)}.
\end{align*}

Our goal is to bound the difference $\left|\unfair{S}-\frac{1}{|I|}\sum_{i \in I}\unfair{S_i}\right|$, and the difference $\left|\frac{1}{\left|I\right|}\sum_{i \in I}\unfair{S_i}-\unfair{p}\right|$, and use these two bounds to bound  $\left|\unfair{S}-\unfair{p}\right|$. The second bound follows directly from \eqref{eqn:fairness_bound_trusted_source}:
\begin{equation}\label{eqn:fairness_bound_avg_of_selected}
  \Big|\unfair{p}-\frac{1}{\left|I\right|}\sum_{i \in I}\unfair{S_i}\Big| \leq  10\eta + 7\Delta\left(\frac{\delta}{6N}\right) 
\end{equation}

To compute the first bound, we first build on \eqref{eqn:balance_bound_trusted_source} to note
\begin{align*}
  \left|\frac{c_i}{\jenc}-1\right| &= \frac{\left|c_i-\jenc\right|}{\jenc}\leq\frac{\jenepsilon}{\jenc} \leq \jene 
  \\
  \left|\frac{\jenc}{c_i}-1\right| &= \frac{\left|\jenc-c_i\right|}{c_i}\leq\frac{\jenepsilon}{c_i} \leq \jene
\end{align*}
and therefore,
\begin{align}
  1-\jene \leq \frac{c_i}{\jenc} &\leq 1+\jene \label{eqn:fairness_ci_c} \\
  1-\jene \leq \frac{\jenc}{c_i} &\leq 1+\jene \label{eqn:fairness_c_ci} 
\end{align}
Applying the same logic to $n-c_i$: 
\begin{align}
  1-\jenf \leq \frac{n-c_i}{n-\jenc} &\leq 1+\jenf \label{eqn:fairness_nci_nc}  \\
  1-\jenf \leq \frac{n-\jenc}{n-c_i} &\leq 1+\jenf \label{eqn:fairness_nc_nci}  
\end{align}

Now consider,
\begin{align*}
 \frac{1}{\left|I\right|}\sum_{j \in I}\unfair{S_j} 
 &= \frac{1}{\left|I\right|}\sum_{j\in I}\frac{v_j}{c_j} - \frac{1}{\left|I\right|}\sum_{j\in I}\frac{w_j}{ (n - c_j)} 
 \\
 &\leq \frac{1}{\left|I\right|}\sum_{j\in I}\frac{v_j}{c}\left(1+\jene\right) - \frac{1}{\left|I\right|}\sum_{j\in I}\frac{w_j}{ (n - c)}\left(1-\jenf\right) 
 \\
 &= \frac{\sum_{j\in I} v_j}{\sum_{j\in I} c}\left(1+\jene\right) - \frac{\sum_{j\in I} w_j}{\sum_{j\in I} (n - c)}\left(1-\jenf\right) 
 \\
 &\leq \frac{\sum_{j\in I} v_j}{\sum_{j\in I} c_j\left(1-\jene\right)}\left(1+\jene\right) 
 \\
 &\qquad - \frac{\sum_{j\in I} w_j}{\sum_{j\in I} (n - c_j)\left(1+\jenf\right)}\left(1-\jenf\right) 
 \\
 &= \left(\frac{\sum_{j\in I} v_j}{\sum_{j\in I} c_j}\right)\frac{1+\jene}{1-\jene} - \left(\frac{\sum_{j\in I} w_j}{\sum_{j\in I} (n - c_j)}\right)\frac{1-\jenf}{1+\jenf}
 \\
 &= \frac{\sum_{j\in I} v_j}{\sum_{j\in I} c_j}-\frac{\sum_{j\in I} w_j}{\sum_{j\in I} (n - c_j)} 
+ 2\left(\frac{\sum_{j\in I} v_j}{\sum_{j\in I} c_j}\right)\frac{\jene}{1-\jene} + 2\left(\frac{\sum_{j\in I} w_j}{\sum_{j\in I} (n - c_j)}\right)\frac{\jenf}{1+\jenf},
 \intertext{where we have used that $\frac{1+t}{1-t}=1+\frac{2t}{1-t}$ and $\frac{1-t}{1+t}=1-\frac{2t}{1+t}$. Now, using $v_j\leq c_j$ and $w_j\leq n-c_j$ and simplifying the fractions further, we obtain}
 &\leq \frac{\sum_{j\in I} v_j}{\sum_{j\in I} c_j}-\frac{\sum_{j\in I} w_j}{\sum_{j\in I} (n - c_j)} + \frac{2\left(\jenepsilonwithoutn\right))}{\jencwithoutn-2\left(\jenepsilonwithoutn\right)} + \frac{2\left(\jenepsilonwithoutn\right)}{1-\jencwithoutn} 
 \\
 &= \unfair{S} + \frac{2\left(\jenepsilonwithoutn\right))}{\jencwithoutn-2\left(\jenepsilonwithoutn\right)} + \frac{2\left(\jenepsilonwithoutn\right)}{1-\jencwithoutn} 
\end{align*}

Using analogue steps, we can show that
\begin{equation*}
  -\frac{1}{\left|I\right|}\sum_{i \in I}\unfair{S_i} \leq -\unfair{S} + \frac{2\left(\jenepsilonwithoutn\right))}{\jencwithoutn} + \frac{2\left(\jenepsilonwithoutn\right)}{1-\jencwithoutn-2\left(\jenepsilonwithoutn\right)} 
 \end{equation*}

Combining these two bounds:
\begin{equation}\label{eqn:fairness_union_sum_boundz}
  \left|\frac{1}{\left|I\right|}\sum_{i \in I}\unfair{S_i} - \unfair{S}\right| \leq \frac{2\left(\jenepsilonwithoutn\right)}{\jencwithoutn-2\left(\jenepsilonwithoutn\right)} + \frac{2\left(\jenepsilonwithoutn\right)}{1-\jencwithoutn-2\left(\jenepsilonwithoutn\right)} 
\end{equation}

Now, combining \eqref{eqn:fairness_union_sum_boundz} with \eqref{eqn:fairness_bound_avg_of_selected}, and using the triangle inequality as before,
\begin{equation}
  \left|\unfair{p} - \unfair{S}\right| \leq 10\eta + 7\Delta\left(\frac{\delta}{6N}\right) + \frac{2\left(\jenepsilonwithoutn\right)}{\jencwithoutn-2\left(\jenepsilonwithoutn\right)} + \frac{2\left(\jenepsilonwithoutn\right)}{1-\jencwithoutn-2\left(\jenepsilonwithoutn\right)}  \label{eqn:fairness_final_bound}
\end{equation}

Recalling from \eqref{eqn:delta_definition} that $\Delta = 16\sqrt{2\frac{d \log\big(\frac{2en}{d}\big) + \log\big(\frac{24}{\delta}\big)}{n(\tau - \eta)}}$, we obtain that 
\begin{equation}
  \left|\unfair{p} - \unfair{S}\right| \leq \mathcal{O}\left(\eta\right) + \widetilde{\mathcal{O}}\left(\frac{1}{\sqrt{n}}\right).
\end{equation}

\end{proof}

\subsection{Proof of Lemma \ref{lemma:uniform_convergence}}
\label{sec:proof_lemma}

Let $S = \{(x_i, y_i, a_i)\}_{i=1}^n$. For $a \in \{0,1\}$, denote: 
\begin{align}
\gamma^a_S(h) = \frac{\sum_{i=1}^n \mathbbm{1}\{h(x_i) = 1, a_i = a\}}{\sum_{i=1}^n \mathbbm{1}\{a_i = a\}}
\end{align}
and 
\begin{align}
\gamma^a_p(h) = \mathbb{P}(h(X) = 1| A = a),
\end{align}
so that $\unfair{S}(h) = \gamma^0_S(h) - \gamma^1_S(h)$ and $\unfair{p}(h) = \gamma^0_p(h) - \gamma^1_p(h)$.

First we use a technique of \citet{woodworth17learning, agarwal2018reductions} for proving concentration results about conditional probability estimates to bound the probability of a large deviation of $\unfair{S}(h)$ from $\unfair{p}(h)$, for a fixed hypothesis $h\in\mathcal{H}$. Our result is similar to the one in \citet{woodworth17learning}, but for demographic parity, instead of equal odds.

\begin{lemma}
\label{lemma:non-uniform-bound-demog-parity}
Let $h\in\mathcal{H}$ be a fixed hypothesis and $p\in \mathcal{P}(\prodspace)$ be a fixed distribution. Let $\tau = \min_{a\in \{0,1\}}\mathbb{P}_{(X,Y,A) \sim p}(A = a) \in (0,0.5]$. Then for any dataset $S$, drawn \iid from $p$, of size $n$ and for any $\delta\in(0,1)$ and any $t>0$:
\begin{equation}
\label{eqn:non-uniform-bound-demog-parity}
\mathbb{P}\left(\left|\unfair{S}(h) - \unfair{p}(h)\right| > 2t \right) \leq 6\exp\left(-\frac{t^2 \tau n}{8}\right).
\end{equation}
\end{lemma}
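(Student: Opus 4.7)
The plan is to reduce the two-sided demographic parity deviation to two per-group conditional-probability deviations, and then control each of these by conditioning on the size of the corresponding protected subgroup, which is the only source of randomness in the denominator of $\gamma^a_S(h)$.

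First, writing $\unfair{S}(h) = |\gamma^0_S(h) - \gamma^1_S(h)|$ and $\unfair{p}(h) = |\gamma^0_p(h) - \gamma^1_p(h)|$, I would invoke the reverse triangle inequality to get
\[
\big|\unfair{S}(h) - \unfair{p}(h)\big| \;\leq\; \big|\gamma^0_S(h) - \gamma^0_p(h)\big| + \big|\gamma^1_S(h) - \gamma^1_p(h)\big|,
\]
so that a union bound reduces the claim to showing $\mathbb{P}\big(|\gamma^a_S(h) - \gamma^a_p(h)| > t\big) \leq 3\exp(-t^2 \tau n /8)$ for each $a \in \{0,1\}$ separately.

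Second, for a fixed $a$, the main technical obstacle is that $\gamma^a_S(h)$ has a random denominator $n_a \coloneqq \sum_{i=1}^n \mathbbm{1}\{a_i = a\}$, so an ordinary Hoeffding bound on the ratio does not apply. I would circumvent this by conditioning on the sequence $(a_1,\ldots,a_n)$: given $n_a = k$ with $k \geq 1$, the numerator $\sum_i \mathbbm{1}\{h(x_i)=1,\,a_i=a\}$ is a sum of $k$ independent $\mathrm{Bernoulli}(\gamma^a_p(h))$ variables (because $x_i \mid a_i=a$ are \iid from $p(x\mid a)$), so Hoeffding's inequality gives
\[
\mathbb{P}\big(|\gamma^a_S(h) - \gamma^a_p(h)| > t \,\big|\, n_a = k\big) \;\leq\; 2\exp(-2 k t^2).
\]
I would then split on $\{n_a \geq n\tau/2\}$: on this event the conditional bound above evaluates to at most $2\exp(-n\tau t^2)$, while the complement is controlled by the multiplicative Chernoff lower-tail bound for $n_a \sim \mathrm{Bin}(n, p_a)$ with $p_a \geq \tau$, giving $\mathbb{P}(n_a < n\tau/2) \leq \exp(-n\tau/8)$. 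Integrating out $n_a$ yields
\[
\mathbb{P}\big(|\gamma^a_S(h) - \gamma^a_p(h)| > t\big) \;\leq\; \exp(-n\tau/8) + 2\exp(-n\tau t^2).
\]

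Finally, since $\gamma^a_S(h),\gamma^a_p(h) \in [0,1]$, the deviation event is impossible for $t>1$, so I may restrict to $t \leq 1$. In that regime $t^2 \leq 1$, whence $\exp(-n\tau/8) \leq \exp(-n\tau t^2/8)$ and $2\exp(-n\tau t^2) \leq 2\exp(-n\tau t^2/8)$; summing gives $3\exp(-n\tau t^2/8)$, and a union bound over $a\in\{0,1\}$ produces the claimed $6\exp(-t^2\tau n/8)$. The only real subtlety is aligning the two exponents in the case split so that the factor $t^2$ appears in both terms with matching constant $1/8$; the conceptual step that makes the rest routine is conditioning on $(a_i)_{i=1}^n$ to reduce to an iid Bernoulli average with a deterministic denominator.
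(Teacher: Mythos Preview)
Your proposal is correct and follows essentially the same approach as the paper: condition on the protected attributes to reduce each per-group estimate $\gamma^a_S(h)$ to an \iid Bernoulli average, split on whether the group count falls below half its expectation, control the small-count event by a Chernoff lower tail and the large-count event by Hoeffding, then combine via the reverse triangle inequality and a union bound over $a\in\{0,1\}$. The only cosmetic differences are that the paper splits at $P_a n/2$ rather than $\tau n/2$ and does not spell out the $t>1$ case as you do.
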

\begin{proof}
Denote by $S_{a} = \{i\in [n]: a_i = a\}$ the set of indexes of the points in $S$ for which the protected group is $a$. Let $c_a \coloneqq |S_{a}|$ and $P_a = \mathbb{P}_{(X,Y,A) \sim p}(A = a)$, so that $\tau = \min_a P_a$. For both $a\in\{0,1\}$, we have:
\begin{align*}
\mathbb{P}\left(\left|\gamma^a_S - \gamma^a_p\right| > t\right) & = \sum_{S_{a}} \mathbb{P}\left(\left|\gamma^a_S - \gamma_{a}\right| > t\middle| S_{a}\right)\mathbb{P}(S_{a}) \\
& \leq \mathbb{P}\left(c_a \leq \frac{1}{2}P_an\right) + \sum_{S_{a}: c_a > \frac{1}{2}P_an}\mathbb{P}\left(\left|\gamma^a_S - \gamma_{a}\right| > t\middle| S_{a}\right)\mathbb{P}(S_{a}) \\ 
& \leq \exp \left(-\frac{P_a n}{8}\right) + \sum_{S_{a}: c_a > \frac{1}{2}P_an} 2 \exp\left(-2t^2c_a\right) \mathbb{P}(S_{a}) \\ 
& \leq \exp \left(-\frac{P_a n}{8}\right) + 2\exp\left(-t^2 P_{a}n\right)\\
& \leq 3\exp\left(-\frac{t^2 \tau n}{8}\right).
\end{align*}
The triangle law gives:
\begin{align*}
|(\gamma^{0}_{S} - \gamma^{1}_{S}) - (\gamma^{0}_p - \gamma^{1}_p)| = |\gamma^{0}_{S} - \gamma^{1}_{S} - \gamma^{0}_p + \gamma^{1}_p| & \leq |\gamma^{0}_{S} - \gamma^{0}_{p}| + |\gamma^{1}_{S} -  \gamma^{1}_{p}|.
\end{align*}
Combining the previous two results:
\begin{align*}
\mathbb{P}(|(\gamma^{0}_{S} - \gamma^{1}_{S}) - (\gamma^{0}_p - \gamma^{1}_p)| > 2t) & \leq \mathbb{P}\left(\left|\gamma^{0}_{S} - \gamma^{0}_{p}\right| + \left|\gamma^{1}_{S} - \gamma^{1}_{p}\right| > 2t\right) \\ & \leq \mathbb{P}\left(\left(\left|\gamma^{0}_{S} - \gamma^{0}_{p}\right| > t\right) \lor \left(\left|\gamma^{1}_{S} - \gamma^{1}_{p}\right| > t\right)\right) \\ & \leq \mathbb{P}\left(\left|\gamma^{0}_{S} - \gamma^{0}_{p}\right| > t\right) + \mathbb{P}\left(\left|\gamma^{1}_{S} - \gamma^{1}_{p}\right| > t\right) \\ & \leq 6\exp\left(-\frac{t^2 \tau n}{8}\right).
\end{align*}
\end{proof}

Finally, we prove Lemma \ref{lemma:uniform_convergence} by extending the previous result to hold uniformly over the whole hypothesis space, for any hypothesis space $\mathcal{H}$ with a finite VC-dimension $d \vcentcolon = \VC(\mathcal{H})$. The extension is essentially identical to \citet{konstantinov2022theotherone} and is included here for completeness.

\addtocounter{lemma}{-2}
\begin{lemma}[Uniform convergence for demographic parity]
    Let $d = \VC(\HYPS) \geq 1$ and let $\tau = \min_{a\in\{0,1\}}\mathbb{P}_{(X,Y,A)\sim p}(A = a)$ for some constant $\tau\in (0,0.5]$. Then for any dataset $S$ of size $n \geq \max \Big\{\frac{8\log\left(\frac{8}{\delta}\right)}{\tau}, \frac{d}{2}\Big\}$ sampled \iid from $p$, for all $\delta \in (0,1/2)$:
    \begin{equation}
        \mathbb{P}_S \Bigg( \sup_{h \in \H}\left|\unfairS(h) - \unfair{p}(h)\right|
        \geq  16\sqrt{2\frac{d \log\big(\frac{2en}{d}\big) + \log\big(\frac{24}{\delta}\big)}{n\tau}} \Bigg) \leq \delta
    \end{equation}
 \end{lemma} 

\begin{proof}
To extend Lemma \ref{lemma:non-uniform-bound-demog-parity} to hold uniformly over $\mathcal{H}$, we first prove a version of the classic symmetrization lemma \citep{vapnik2013nature} for $\unfair{}$ and then proceed via a standard growth function argument.

1) Consider a ghost sample $S' = \{(x'_i, a'_i, y'_i)\}_{i=1}^n$ also sampled \iid from $p$. For any $h \in \mathcal{H}$, let $\unfair{S'}(h)$ be the empirical estimate of $\unfair{p}(h)$ based on $S'$.

We show the following symmetrization inequality for the $\unfair{}$ measure:
\begin{align}
\label{eqn:symmetrization-demog-par}
\mathbb{P}_{S}\left(\sup_{h\in\mathcal{H}}\left|\unfair{S}(h) - \unfair{p}(h)\right| \geq t\right) & \leq 
 2 \mathbb{P}_{S, S'}\left(\sup_{h\in\mathcal{\mathcal{H}}}\left|\unfair{S'}(h) - \unfair{S}(h)\right| \geq t/2\right),
\end{align}
for any constant $t \geq 8\sqrt{\frac{2\log(12)}{n\tau}}$.
 
Indeed, let $h^*$ be the hypothesis achieving the supremum on the left-hand side.\footnote{If the supremum is not attained, the argument can be repeated for each element of a sequence of classifiers approaching the supremum} 
Then:
\begin{align*}
\mathbbm{1}(\left|\unfair{S}(h^*) - \unfair{p}(h^{*})\right| \geq t) & \mathbbm{1}(\left|\unfair{S'}(h^*) - \unfair{p}(h^*)\right| \leq t/2) \leq \mathbbm{1}(\left|\unfair{S'}(h^*) - \unfair{S}(h^*)\right| \geq t/2).
\end{align*}
Taking expectation with respect to $S'$:
\begin{align*}
\mathbbm{1}(\left|\unfair{S}(h^*) - \unfair{p}(h^{*})\right| \geq t)\mathbb{P}_{S'}(\left|\unfair{S'}(h^*,S') - \unfair{p}(h^*)\right| \leq t/2) \leq \mathbb{P}_{S'}(\left|\unfair{S'}(h^*) - \unfair{S}(h^*)\right| \geq t/2).
\end{align*}
Now using Lemma \ref{lemma:non-uniform-bound-demog-parity}:
\begin{align*}
\mathbb{P}_{S'}\left(\left|\unfair{S'}(h^*) - \unfair{p}(h^*)\right| \leq t/2\right)  \geq 1 - 6\exp\left(-\frac{t^2 \tau n}{128}\right) \geq 1 - \frac{1}{2} = \frac{1}{2},
\end{align*}
where the second inequality follows from the condition $t\geq 8\sqrt{\frac{2\log(12)}{n\tau}}$. Therefore,
\begin{align*}
\frac{1}{2}\mathbbm{1}(\left|\unfair{S}(h^*) - \unfair{p}(h^{*})\right| \geq t) \leq \mathbb{P}_{S'}(\left|\unfair{S'}(h^*) - \unfair{S}(h^*)\right| \geq t/2).
\end{align*}
Taking expectation with respect to $S$:
\begin{align*}
\mathbb{P}_{S}(\left|\unfair{S}(h^*) - \unfair{p}(h^{*})\right| \geq t) & \leq 2 \mathbb{P}_{S, S'}(\left|\unfair{S'}(h^*) - \unfair{S}(h^*)\right| \geq t/2) \\
& \leq 2 \mathbb{P}_{S, S'}(\sup_{h\in\mathcal{H}}\left|\unfair{S'}(h) - \unfair{S}(h)\right| \geq t/2).
\end{align*}

2) Next we use the symmetrization inequality \eqref{eqn:symmetrization-demog-par} to bound the large deviation of $\unfair{S}(h)$ uniformly over $\mathcal{H}$.

Specifically, given $n$ points $x_1, \ldots, x_n \in \mathcal{X}$, denote $$ \mathcal{H}_{x_1, \ldots, x_n} \{(h(x_1), \ldots, h(x_n)): h \in \mathcal{H}\}.$$ 

Then define the growth function of $\mathcal{H}$ as:
\begin{equation}
\label{eqn:growth-function}
G_{\mathcal{H}}(n) = \sup_{x_1, \ldots, x_n} |\mathcal{H}_{x_1, \ldots, x_n}|.
\end{equation}
We will use that well-known Sauer's lemma \citep{vapnik2013nature}, which states that whenever $n\geq d$, $G_{\mathcal{H}}(n) \leq \left(\frac{en}{d}\right)^d$

Notice that given the two datasets $S, S'$, the values of $\unfair{S}$ and $\unfair{S'}$ depend only on the values of $h$ on $S$ and $S'$ respectively. Therefore, for any $t \geq  8\sqrt{\frac{2\log(12)}{\tau n}}$,

    \begin{align}
        \mathbb{P}_S \Big( \sup_{h \in \HYPS}|\unfair{S} - \unfair{p}(h)| \geq t \Big) 
        & \leq 2\mathbb{P}_{S, S'}\Big(\sup_{h \in \HYPS}|\unfair{S'}(h) - \unfair{S}(h)| \geq \frac{t}{2}\Big) \\
        & \leq 2G_{\HYPS}(2n)\mathbb{P}_{S, S'}\left(|\unfair{S'}(h) - \unfair{S}(h)| \geq \frac{t}{2} \right)\\
        & \leq 2G_{\HYPS}(2n)\mathbb{P}_{S, S'}\left(\left(|\unfair{S}(h) - \unfair{p}(h)| \geq \frac{t}{4} \right) \vee \left(|\unfair{S'}(h) - \unfair{p}(h)| \geq \frac{t}{4}\right) \right) \\
        & \leq 4G_{\HYPS}(2n) \mathbb{P}_{S}\Big(|\unfair{S}(h) - \unfair{p}(h)| \geq \frac{t}{4} \Big) \\
        & \leq 24G_{\HYPS}(2n)\exp\left(-\frac{t^2\tau n}{516}\right) \\
        &\leq 24\left(\frac{2en}{d}\right)^d \exp\left(-\frac{t^2\tau n}{516}\right).
    \end{align}
    Here the second-to-last inequality is due to the same bound on the difference between $\unfair{S}$ and $\Gamma{p}$ that was used in the previous lemma, and the last one follows from Sauer's lemma. Now if we use the threshold $t=16\sqrt{2\frac{d \log\big(\frac{2eN}{d}\big) + \log\big(\frac{24}{\delta} \big) }{\tau n}  } > 8\sqrt{\frac{2\log(12)}{\tau n}}$, we get:
    \begin{equation}
        \mathbb{P}_S \Bigg( \sup_{h \in \mathcal{H}}\left|\unfair{S}(h) - \unfair{p}(h)\right| \geq 16\sqrt{2\frac{d \log\big(\frac{2en}{d}\big) + \log\big(\frac{24}{\delta} \big) }{\tau n}  } \Bigg) < \delta.
    \end{equation}
\end{proof}

\bibliographystylesupp{icml2022}
\bibliographysupp{ms}

\end{document}